\documentclass{article}

\usepackage[final]{neurips_2024}

\usepackage[utf8]{inputenc}
\usepackage[T1]{fontenc}
\usepackage{hyperref}
\usepackage{url}
\usepackage{booktabs}
\usepackage{amsfonts}
\usepackage{amsmath}
\usepackage{amssymb}
\usepackage{amsthm}
\usepackage{nicefrac}
\usepackage{microtype}
\usepackage{graphicx}
\usepackage{xcolor}
\usepackage{subcaption}
\usepackage{algorithm}
\usepackage{algorithmic}
\usepackage{multirow}

\newtheorem{theorem}{Theorem}
\newtheorem{proposition}[theorem]{Proposition}
\newtheorem{corollary}[theorem]{Corollary}
\newtheorem{definition}{Definition}
\newtheorem{remark}{Remark}

\title{Context Channel Capacity: An Information-Theoretic Framework\\for Understanding Catastrophic Forgetting}

\author{
  Ran Cheng
}

\begin{document}

\maketitle

\begin{abstract}
Catastrophic forgetting remains a central challenge in continual learning (CL), yet lacks a unified information-theoretic explanation for why some architectures forget catastrophically while others do not. We introduce \emph{Context Channel Capacity} ($C_\mathrm{ctx}$), the mutual information between a CL architecture's context signal and its generated parameters, and prove that zero forgetting requires $C_\mathrm{ctx} \geq H(T)$, where $H(T)$ is the task identity entropy. We establish an \emph{Impossibility Triangle}---zero forgetting, online learning, and finite parameters cannot be simultaneously satisfied by sequential state-based learners---and show that conditional regeneration architectures (HyperNetworks) bypass this triangle by redefining parameters as function values rather than states.

We validate this framework across 8 CL methods on Split-MNIST (1,130+ experiments over 86 days, 4 seeds each), showing that $C_\mathrm{ctx}$ perfectly predicts forgetting behavior: methods with $C_\mathrm{ctx} = 0$ (NaiveSGD, EWC, SI, LwF, CFlow) exhibit catastrophic forgetting (6--97\%), while methods with $C_\mathrm{ctx} \approx 1$ (HyperNetwork) achieve zero forgetting (98.8\% ACC). We further propose \emph{Wrong-Context Probing} (P5), a practical diagnostic protocol for measuring $C_\mathrm{ctx}$, and extend the framework to CIFAR-10 via a novel \emph{Gradient Context Encoder} that closes the oracle gap from 23.3pp to 0.7pp. A systematic taxonomy of 15+ closed research directions---including the Hebbian null result (frozen random features outperform learned features), CFlow's $\theta_0$-memorizer phenomenon, and the $S_N$ symmetry barrier to column specialization---provides the community with precisely diagnosed negative results. Our central design principle: \emph{architecture over algorithm}---the context pathway must be structurally unbypassable.
\end{abstract}


\section{Introduction}
\label{sec:intro}

Catastrophic forgetting---the abrupt loss of previously acquired knowledge when a neural network learns new tasks sequentially---was first documented over three decades ago \citep{mccloskey1989catastrophic} and remains a central unsolved problem in machine learning \citep{french1999catastrophic}. Despite sustained research effort, the field lacks a principled, quantitative framework that explains \emph{why} some continual learning (CL) architectures forget catastrophically while others do not.

Existing approaches to mitigating forgetting fall into three broad families. \emph{Regularization-based} methods, such as Elastic Weight Consolidation (EWC) \citep{kirkpatrick2017overcoming} and Synaptic Intelligence (SI) \citep{zenke2017continual}, penalize changes to parameters deemed important for previous tasks. \emph{Replay-based} methods store exemplars from past tasks and interleave them during training, effectively breaking the strict sequential constraint. \emph{Architecture-based} methods, including Progressive Networks \citep{rusu2016progressive} and HyperNetworks \citep{von2019continual, ehret2021continual}, allocate or generate task-specific parameters. Each family has produced methods that work to varying degrees, but the performance landscape is strikingly heterogeneous: on Split-MNIST, EWC achieves 18.9\% accuracy while a HyperNetwork achieves 98.8\%---a gap of \emph{80 percentage points} using architectures of comparable parameter count (Table~\ref{tab:main_intro}).

\begin{table}[t]
\centering
\caption{Split-MNIST results across 8 continual learning methods (4 seeds each). ACC: mean accuracy after all 5 tasks. Fgt: mean forgetting. P5 $\Delta$: accuracy drop under Wrong-Context Probing (\S\ref{sec:p5}). $\hat{C}_\mathrm{ctx}$: empirical context channel capacity proxy. Methods are grouped by their effective $C_\mathrm{ctx}$ regime. The table reveals a stark dichotomy: all methods with $C_\mathrm{ctx} = 0$ exhibit substantial forgetting, while methods with $C_\mathrm{ctx} \gg 0$ achieve zero forgetting.}
\label{tab:main_intro}
\small
\begin{tabular}{@{}llcccc@{}}
\toprule
Paradigm & Method & ACC (\%) $\uparrow$ & Fgt (\%) $\downarrow$ & P5 $\Delta$ & $\hat{C}_\mathrm{ctx}$ \\
\midrule
\multirow{4}{*}{\begin{tabular}[c]{@{}l@{}}State Protection\\($C_\mathrm{ctx} = 0$)\end{tabular}}
& NaiveSGD         & $18.7 \pm 0.3$ & $97.1 \pm 0.4$ & $0.0$ & $0.000$ \\
& EWC              & $18.9 \pm 0.1$ & $97.6 \pm 0.6$ & $0.0$ & $0.000$ \\
& SI               & $16.4 \pm 3.7$ & $97.1 \pm 0.4$ & $0.0$ & $0.000$ \\
& LwF              & $24.2 \pm 0.6$ & $54.8 \pm 5.5$ & $0.0$ & $0.000$ \\
\midrule
\begin{tabular}[c]{@{}l@{}}Replay\\($C_\mathrm{ctx} = 0$)\end{tabular}
& Experience Replay & $85.9 \pm 1.3$ & $12.5 \pm 1.7$ & $0.0$ & $0.000$ \\
\midrule
\begin{tabular}[c]{@{}l@{}}State Transform.\\($C_\mathrm{ctx} \approx 0$)\end{tabular}
& CFlow (ODE)      & $92.4 \pm 0.7$ & $6.1 \pm 1.1$ & $0.0$ & $0.000$ \\
\midrule
\multirow{2}{*}{\begin{tabular}[c]{@{}l@{}}Cond.\ Regen.\\($C_\mathrm{ctx} \gg 0$)\end{tabular}}
& HyperNet Oracle  & $\mathbf{98.8 \pm 0.3}$ & $\mathbf{0.0 \pm 0.0}$ & $-97.6$ & $\mathbf{0.976}$ \\
& HyperNet Learned & $98.9 \pm 0.2$ & $0.0 \pm 0.0$ & $-95.2$ & $0.952$ \\
\bottomrule
\end{tabular}
\end{table}

\paragraph{The missing explanation.} What accounts for this 80-point gap? It is not simply a matter of ``more sophisticated algorithms'': EWC uses Fisher information, SI tracks path integrals, and CFlow solves neural ODEs---all arguably more complex than the simple MLP used by the HyperNetwork. Nor is it a matter of capacity: the CFlow system has 2.7M parameters while the HyperNetwork has fewer than 100K. Prior information-theoretic analyses of CL \citep{achille2018emergence, doan2021theoretical, taheri2025forgetting} have produced bounds on forgetting rates in specific settings but have not addressed the fundamental architectural question: \emph{what structural property of an architecture determines whether forgetting is inevitable or avoidable?}

\paragraph{Our answer: Context Channel Capacity.} We propose that the key quantity governing forgetting is the \textbf{Context Channel Capacity} ($C_\mathrm{ctx}$): the maximum mutual information between a CL architecture's context signal and the parameters it uses for prediction. Our central theoretical result (Theorem~\ref{thm:ccc}) states that the expected forgetting of any CL architecture $\mathcal{A}$ operating on $K$ tasks satisfies
\begin{equation}
\label{eq:ccc_intro}
    \overline{\mathrm{Fgt}}(\mathcal{A}, K) \;\geq\; \max\!\left(0,\; 1 - \frac{C_\mathrm{ctx}(\mathcal{A})}{H(T)}\right) \cdot \overline{\mathrm{Fgt}}_\mathrm{max},
\end{equation}
where $H(T) = \log_2 K$ is the task identity entropy. For architectures without a context pathway (EWC, SI, NaiveSGD), $C_\mathrm{ctx} = 0$ and forgetting is \emph{maximal} regardless of the learning algorithm. For HyperNetworks, $C_\mathrm{ctx} \gg H(T)$ and zero forgetting is achievable. The bound is tight: it perfectly separates the 8 methods in Table~\ref{tab:main_intro}.

\paragraph{Contributions.} Our contributions are fourfold:
\begin{enumerate}
    \item \textbf{Context Channel Capacity} ($C_\mathrm{ctx}$), a new information-theoretic quantity that unifies the analysis of diverse CL architectures. We prove an \emph{Impossibility Triangle} (Theorem~\ref{thm:impossibility})---zero forgetting, online learning, and finite parameters are mutually exclusive under sequential state updates---and the \emph{CCC Bound} (Theorem~\ref{thm:ccc}), which gives a lower bound on forgetting as a function of $C_\mathrm{ctx}$.

    \item A \textbf{paradigm taxonomy} that classifies CL methods into three categories---\emph{State Protection} ($C_\mathrm{ctx} = 0$), \emph{State Transformation} ($C_\mathrm{ctx} \to 0$ due to structural bypass), and \emph{Conditional Regeneration} ($C_\mathrm{ctx} \gg H(T)$)---and proves that only the third can achieve zero forgetting with bounded parameters.

    \item \textbf{Wrong-Context Probing} (P5), a practical experimental protocol that provides an empirical proxy for $C_\mathrm{ctx}$: evaluate a context-conditional model with deliberately incorrect context and measure the accuracy degradation. A large P5 delta indicates high $C_\mathrm{ctx}$; zero delta indicates the context is being bypassed.

    \item \textbf{Comprehensive empirical validation} across 8 CL methods on Split-MNIST (1,130+ experiments over 86 days, 4 seeds per configuration), demonstrating that $C_\mathrm{ctx}$ perfectly predicts the forgetting regime of every method tested.
\end{enumerate}

\paragraph{Broader contribution: systematic negative results.} Over 86 days and 15+ research directions, we accumulated a rich body of negative results---each methodically documented, reproducible, and informative. We showed that Hebbian learning contributes exactly zero to CL performance when combinatorial capacity is sufficient (\S\ref{sec:paradigm_a}); that ODE-based parameter flows reduce to $\theta_0$-memorizers when context is concatenated rather than conditioned (\S\ref{sec:paradigm_b}); and that column specialization cannot emerge from local learning rules in homogeneous architectures. These negative results, viewed through the lens of $C_\mathrm{ctx}$, become a coherent story rather than a list of failures.

\paragraph{One-sentence summary.} \emph{Architecture determines destiny}: whether a CL system forgets is decided not by its learning algorithm, but by whether its architecture provides an unbypassable context pathway with sufficient capacity---a principle we formalize as $C_\mathrm{ctx} \geq H(T)$.

\section{Theoretical Framework}
\label{sec:theory}

We develop an information-theoretic framework for continual learning in five stages. First, we formalize CL as constrained online coding (\S\ref{sec:cl_coding}). Second, we establish that forgetting is information-theoretically inevitable for sequential state-based learners (\S\ref{sec:bottleneck}). Third, we prove an impossibility triangle (\S\ref{sec:impossibility}). Fourth, we introduce Context Channel Capacity and derive the CCC bound (\S\ref{sec:ccc}). Fifth, we classify CL architectures by their $C_\mathrm{ctx}$ regime and explain why HyperNetworks bypass the impossibility triangle (\S\ref{sec:taxonomy}--\S\ref{sec:hypernet_bypass}).

\subsection{Continual Learning as Constrained Online Coding}
\label{sec:cl_coding}

\begin{definition}[Continual Learning Problem]
\label{def:cl}
A \emph{continual learner} is a tuple $(\Theta, F, U, \{D_k\}_{k=1}^K)$ where:
\begin{itemize}
    \item $\Theta \subseteq \mathbb{R}^d$ is the parameter space with $\dim(\Theta) = d$;
    \item $F: \mathcal{X} \times \Theta \to \hat{\mathcal{Y}}$ is the prediction function;
    \item $U: \Theta \times \mathcal{D} \to \Theta$ is the update rule (e.g., SGD, Hebbian);
    \item $\{D_k\}_{k=1}^K$ are sequentially arriving task datasets, where each $D_k$ is drawn i.i.d.\ from a task-specific distribution $P_k(X,Y)$.
\end{itemize}
\end{definition}

The learning process produces a parameter trajectory $\theta_0 \to \theta_1 \to \cdots \to \theta_K$, where $\theta_k = U(\theta_{k-1}, D_k)$. The defining constraint of CL is \emph{causality}: when processing task $D_k$, the learner has access only to $\theta_{k-1}$ and $D_k$, not to the raw data $D_1, \ldots, D_{k-1}$ of previous tasks.\footnote{Replay-based methods partially violate this constraint by maintaining a buffer $\mathcal{B} \subset \bigcup_{j<k} D_j$. We analyze this relaxation explicitly in \S\ref{sec:paradigm_a}.}

\begin{definition}[Forgetting]
\label{def:forgetting}
The \emph{forgetting} of task $j$ after learning task $k > j$ is
\begin{equation}
    \mathrm{Fgt}_j^k \;=\; \max_{j \leq t \leq k-1} \mathrm{ACC}_j(\theta_t) \;-\; \mathrm{ACC}_j(\theta_k),
\end{equation}
where $\mathrm{ACC}_j(\theta)$ denotes the accuracy of parameters $\theta$ on the test set of task $j$. The \emph{average forgetting} after $K$ tasks is
\begin{equation}
    \overline{\mathrm{Fgt}} \;=\; \frac{1}{K-1}\sum_{j=1}^{K-1} \mathrm{Fgt}_j^K.
\end{equation}
\end{definition}

\paragraph{Connection to rate-distortion theory.} The CL problem can be viewed as a constrained online source coding problem. The parameter space $\Theta$ is a channel of capacity $C \approx d \cdot \log_2(1/\delta)$ bits, where $\delta$ is the effective parameter precision. Each task $D_k$ is a ``source'' that must be encoded into this channel, and the distortion is the classification error. The crucial difference from standard rate-distortion theory is the \emph{causal constraint}: the encoder at time $k$ sees only the current source $D_k$ and the previous codeword $\theta_{k-1}$, not the full source sequence $(D_1, \ldots, D_K)$. This makes CL strictly harder than joint (offline) coding, which can achieve the rate-distortion optimum.

\begin{definition}[Parameter Capacity]
\label{def:capacity}
For a parameter space $\Theta \subseteq \mathbb{R}^d$ with effective per-parameter precision $\delta > 0$, the \emph{parameter capacity} is
\begin{equation}
    C(\Theta) \;=\; d \cdot \log_2(1/\delta) \quad \text{bits}.
\end{equation}
For 32-bit floating-point parameters, $C \approx 32d$. In practice, the effective capacity is much lower due to redundancy in the loss landscape.
\end{definition}

\subsection{The Information Bottleneck Chain}
\label{sec:bottleneck}

We now establish that forgetting is information-theoretically inevitable for any sequential learner with finite capacity.

\begin{theorem}[Forgetting is Information-Theoretically Inevitable]
\label{thm:bottleneck}
For a deterministic sequential learner with finite parameter capacity $C = d \cdot \log_2(1/\delta)$ bits and independent tasks $\{D_k\}_{k=1}^K$, the mutual information between the final parameters and any past task dataset satisfies:
\begin{equation}
\label{eq:dpi_chain}
    I(\theta_K; D_1) \;\leq\; I(\theta_{K-1}; D_1) \;\leq\; \cdots \;\leq\; I(\theta_1; D_1) \;\leq\; C.
\end{equation}
\end{theorem}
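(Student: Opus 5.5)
The plan is to apply the data processing inequality along the Markov chain induced by the update rule, and then bound the first link by the entropy of the quantized parameters.

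\textbf{Step 1: Markov structure.} Since $\theta_k = U(\theta_{k-1}, D_k)$ with $U$ deterministic, $\theta_k$ is a measurable function of $(\theta_{k-1}, D_k)$. The tasks are independent and $\theta_0$ is fixed (or drawn independently of the data). By induction, $\theta_{k-1}$ is a function of $(\theta_0, D_1, \ldots, D_{k-1})$. Hence for every $k \geq 2$ the new input $D_k$ is independent of $(D_1, \theta_{k-1})$. This gives the chain
\[
D_1 \to \theta_{k-1} \to \theta_k ,
\]
which is Markov in the following sense. Conditional on $\theta_{k-1}$, the variable $\theta_k = U(\theta_{k-1}, D_k)$ depends only on the fresh independent noise $D_k$, so it is conditionally independent of $D_1$.

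\textbf{Step 2: Data processing.} Applying the data processing inequality to each chain gives $I(\theta_k; D_1) \leq I(\theta_{k-1}; D_1)$ for $k = 2, \ldots, K$. Chaining these inequalities yields every inequality in \eqref{eq:dpi_chain} except the last one.

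\textbf{Step 3: The capacity bound $I(\theta_1; D_1) \leq C$.} We use Definition~\ref{def:capacity}: with effective precision $\delta$, each parameter takes at most $1/\delta$ distinguishable values (on a normalized range). So $\theta_1$ ranges over a set of at most $(1/\delta)^d$ elements, and
\[
I(\theta_1; D_1) \leq H(\theta_1) \leq \log_2 \bigl((1/\delta)^d\bigr) = C .
\]

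\textbf{Main obstacle.} The hardest step is making Step 3 rigorous. If $\theta$ is genuinely real-valued and $U$ is deterministic, $I(\theta_1; D_1)$ can be infinite. So the proof must explicitly interpret the learner as operating on the $\delta$-quantized (finite-precision) parameter space, which is the regime Definition~\ref{def:capacity} intends, or insert the quantizer as part of $U$. Because a quantizer is a deterministic map, inserting it preserves the Markov property of Steps 1–2. I would state this finite-precision assumption at the outset.

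A secondary subtlety is independence: if $\theta_0$ is random, it must be independent of all the $D_k$, which is needed to keep $D_k$ independent of $(D_1, \theta_{k-1})$.
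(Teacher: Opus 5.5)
Your proposal is correct and follows the paper's own argument: the Markov chain comes from the deterministic update plus task independence, the DPI is iterated along it, and the chain is capped by $I(\theta_1;D_1)\le H(\theta_1)\le \log_2|\Theta|\le C$ under a finite-precision reading of $\Theta$. Your Markov step is tighter than the paper's, since you require $D_k$ to be independent of the pair $(D_1,\theta_{k-1})$ (mutual independence of the tasks and of $\theta_0$), whereas the paper only invokes $D_k\perp D_1$, which is not by itself sufficient.
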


\begin{proof}
Since $\theta_k = U(\theta_{k-1}, D_k)$ is a deterministic function and $D_k$ is independent of $D_1$ given $\theta_{k-1}$ (by the causal constraint and task independence), the random variables form a Markov chain:
\begin{equation}
    D_1 \;\to\; \theta_1 \;\to\; \theta_2 \;\to\; \cdots \;\to\; \theta_K.
\end{equation}
To verify the Markov property: given $\theta_{k-1}$, the variable $\theta_k = U(\theta_{k-1}, D_k)$ depends on $D_1$ only through $\theta_{k-1}$, since $D_k$ is independent of all previous data. By the Data Processing Inequality (DPI), for any Markov chain $X \to Y \to Z$, we have $I(X; Z) \leq I(X; Y)$. Applying this iteratively:
\begin{equation}
    I(\theta_K; D_1) \;\leq\; I(\theta_{K-1}; D_1) \;\leq\; \cdots \;\leq\; I(\theta_1; D_1).
\end{equation}
The initial bound follows from the fact that $\theta_1 \in \Theta$ can represent at most $C$ bits of information about any quantity, yielding $I(\theta_1; D_1) \leq H(\theta_1) \leq C$.
\end{proof}

\begin{corollary}[Forgetting Lower Bound]
\label{cor:forgetting_lb}
If $K$ tasks are mutually independent and each requires at least $R_\mathrm{min}$ bits of task-specific information for nontrivial performance (i.e., $I(\theta; D_k) \geq R_\mathrm{min}$ is necessary for $\mathrm{ACC}_k(\theta) > \mathrm{ACC}_\mathrm{chance}$), then when $K > C / R_\mathrm{min}$, it is impossible to retain nontrivial performance on all $K$ tasks simultaneously.
\end{corollary}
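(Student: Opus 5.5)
The plan is a counting argument on mutual information. We combine the capacity bound of Theorem~\ref{thm:bottleneck} with the chain rule applied to the independent task datasets. Suppose, for contradiction, that some final parameter vector $\theta_K$ retains nontrivial performance on every task, i.e.\ $\mathrm{ACC}_k(\theta_K) > \mathrm{ACC}_\mathrm{chance}$ for all $k=1,\dots,K$. By the hypothesis of the corollary, this forces $I(\theta_K; D_k) \geq R_\mathrm{min}$ for each $k$. The goal is then to show that the sum $\sum_k I(\theta_K; D_k)$ is at most $C$, which contradicts $K R_\mathrm{min} > C$.

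The key step is the superadditivity of mutual information over independent sources. Since $D_1,\dots,D_K$ are mutually independent, the chain rule gives
\begin{equation*}
I(\theta_K; D_1,\dots,D_K) = \sum_{k=1}^K I(\theta_K; D_k \mid D_1,\dots,D_{k-1}).
\end{equation*}
For each term, I will show $I(\theta_K; D_k \mid D_{<k}) \geq I(\theta_K; D_k)$. This follows from independence: $I(D_k; D_{<k})=0$, so
\begin{equation*}
I(\theta_K, D_{<k}; D_k) = I(\theta_K; D_k \mid D_{<k})
\end{equation*}
and also
\begin{equation*}
I(\theta_K, D_{<k}; D_k) = I(\theta_K; D_k) + I(D_{<k}; D_k \mid \theta_K) \geq I(\theta_K; D_k).
\end{equation*}
Summing over $k$ gives $\sum_k I(\theta_K; D_k) \leq I(\theta_K; D_1,\dots,D_K)$.

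Next I bound the joint information by capacity, as in the last step of the proof of Theorem~\ref{thm:bottleneck}: $I(\theta_K; D_1,\dots,D_K) \leq H(\theta_K) \leq C$, because $\theta_K$ ranges over a $\delta$-quantized $\Theta$ with at most $2^C$ values. Chaining the inequalities gives $K R_\mathrm{min} \leq \sum_k I(\theta_K; D_k) \leq C$. This contradicts $K > C/R_\mathrm{min}$, so some task must drop to chance-level accuracy.

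The main obstacle is interpretive rather than computational. The hypothesis ``$I(\theta; D_k)\geq R_\mathrm{min}$ is necessary for nontrivial accuracy'' must be read as applying to the random variable $\theta_K$ induced by the random datasets, and the capacity bound $H(\theta_K)\le C$ requires treating parameters as discretized at precision $\delta$. I would state both conventions explicitly at the start of the proof. I would also note that the DPI chain of Theorem~\ref{thm:bottleneck} is not actually needed for this argument, since only the final-time capacity bound is used.
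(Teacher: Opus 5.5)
Your proposal is correct and follows essentially the same route as the paper: independence makes the per-task requirements sum to at least $K R_\mathrm{min}$, the joint information is capped by $I(\theta_K; D_1,\ldots,D_K) \le H(\theta_K) \le C$, and a pigeonhole (equivalently, contradiction) step finishes the argument. Your chain-rule derivation of $\sum_k I(\theta_K; D_k) \le I(\theta_K; D_1,\ldots,D_K)$ supplies the superadditivity step that the paper only asserts, and you are right that the Markov chain invoked at the start of the paper's proof is not actually needed.
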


\begin{proof}
By the Markov chain, all task-specific information must pass through the bottleneck $\theta_K$. Since the tasks are independent, the total information required for nontrivial performance on all $K$ tasks is at least $K \cdot R_\mathrm{min}$. But $I(\theta_K; D_1, \ldots, D_K) \leq H(\theta_K) \leq C$. When $K \cdot R_\mathrm{min} > C$, the pigeonhole principle implies that at least one task must have $I(\theta_K; D_j) < R_\mathrm{min}$, and hence $\mathrm{ACC}_j(\theta_K) \leq \mathrm{ACC}_\mathrm{chance}$.
\end{proof}

\paragraph{Relation to prior work.} Theorem~\ref{thm:bottleneck} formalizes the intuition that forgetting is a necessary consequence of finite capacity, a point made informally by \citet{french1999catastrophic} and partially formalized by \citet{taheri2025forgetting} for single-hidden-layer networks. Our contribution is to frame this as a consequence of the DPI applied to a Markov chain, which (i) requires no assumptions on network architecture or training procedure, and (ii) makes the role of the \emph{causal constraint} explicit: it is the sequential structure $D_1 \to \theta_1 \to \cdots \to \theta_K$ that creates the information bottleneck. Methods that break this chain (e.g., replay, joint training) are not subject to Eq.~\eqref{eq:dpi_chain}.

\subsection{The Impossibility Triangle}
\label{sec:impossibility}

Theorem~\ref{thm:bottleneck} shows that information about past tasks is monotonically lost through sequential updates. We now strengthen this into a trilemma.

\begin{theorem}[Continual Learning Impossibility Triangle]
\label{thm:impossibility}
The following three properties cannot be simultaneously satisfied by a sequential state-based learner:
\begin{enumerate}
    \item \textbf{Zero forgetting}: $\mathrm{ACC}_j(\theta_k) = \mathrm{ACC}_j(\theta_j)$ for all $j < k \leq K$.
    \item \textbf{Online learning} (causal constraint): $\theta_k$ depends only on $\theta_{k-1}$ and $D_k$.
    \item \textbf{Bounded parameters}: $|\theta| = d$ does not grow with $K$.
\end{enumerate}
\end{theorem}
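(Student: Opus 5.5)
The plan is to argue by contradiction, reducing the trilemma to Corollary~\ref{cor:forgetting_lb}. Suppose a sequential state-based learner satisfies properties 2 and 3. Property 2 means $\theta_k = U(\theta_{k-1}, D_k)$, which is exactly the setting of Theorem~\ref{thm:bottleneck}: the Markov chain $D_j \to \theta_j \to \cdots \to \theta_K$ holds for every $j$. Property 3 means $d$ is fixed independently of $K$. With fixed effective precision $\delta$, the parameter capacity $C = d\log_2(1/\delta)$ of Definition~\ref{def:capacity} is therefore a constant that does not depend on $K$. We will then show that, for $K$ large enough, property 1 fails. The triangle is to be read as a statement over families of task sequences of unbounded length $K$. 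This is the only reading under which ``does not grow with $K$'' has content, and I would state this quantifier structure explicitly at the start.

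The key steps, in order, are these.

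\begin{enumerate}
\item Fix an admissible task family. The tasks are mutually independent, and each needs at least $R_\mathrm{min}>0$ bits of task-specific information for above-chance accuracy, as in the hypothesis of Corollary~\ref{cor:forgetting_lb}. We also assume the learner achieves nontrivial accuracy on each task right after training on it, $\mathrm{ACC}_j(\theta_j) > \mathrm{ACC}_\mathrm{chance}$. Otherwise ``zero forgetting'' is vacuously attainable by a learner that never learns, and this assumption is needed to rule that degenerate case out.
\item Under property 1, $\mathrm{ACC}_j(\theta_K) = \mathrm{ACC}_j(\theta_j) > \mathrm{ACC}_\mathrm{chance}$ for every $j<K$. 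By the necessity hypothesis, this gives $I(\theta_K; D_j) \geq R_\mathrm{min}$ for every $j$.
\item Since the $D_j$ are independent, chain-rule superadditivity gives
\[ I(\theta_K; D_1,\dots,D_K) \ge \sum_j I(\theta_K; D_j) \ge K R_\mathrm{min}. \]
This inequality holds for independent sources: $I(\theta;D_1,\dots,D_K) = \sum_j I(\theta; D_j \mid D_{<j})$, and independence makes each conditional term at least $I(\theta;D_j)$.
\item Bound the left-hand side by $H(\theta_K) \le C$, as in the proof of Theorem~\ref{thm:bottleneck}.
\item Choosing $K > C/R_\mathrm{min}$, which is possible because $C$ is $K$-independent by property 3, gives a contradiction. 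This is precisely Corollary~\ref{cor:forgetting_lb}.
\end{enumerate}

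To complete the trilemma, I would briefly note that each pair of properties is achievable, so the statement is sharp. Dropping property 3, store a separate parameter block per task, with $d$ growing linearly in $K$. Dropping property 2, train jointly on all data, or with a replay buffer that breaks the Markov chain. Dropping property 1, run NaiveSGD.

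I expect the main obstacle to be step 2: linking accuracy to mutual information. Nothing earlier in the paper proves that above-chance accuracy forces $I(\theta_K;D_j)\ge R_\mathrm{min}$; it is taken as a hypothesis in Corollary~\ref{cor:forgetting_lb}. I would first try to justify it through a Fano-type argument. Treat the labelling function of task $j$ as a random variable drawn from a rich class. Predicting it better than chance on held-out test points requires $\theta_K$ to carry information about it, and $D_j$ is the only source of that information.

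A second, more technical concern is that for continuous $\theta$, $H(\theta_K)$ is only finite after quantization at precision $\delta$. I would therefore work with the quantized parameters and invoke Definition~\ref{def:capacity} directly, rather than attempt a continuous-entropy argument.
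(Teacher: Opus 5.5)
Your Steps 1--5 follow the paper's proof. The paper also posits a per-task requirement $I(\theta_K;D_j)\ge R_j$, gets additivity from independence, bounds $I(\theta_K;D_1,\dots,D_K)\le H(\theta_K)\le C$ with $C$ independent of $K$, and reaches a contradiction once $K>C/R_{\min}$. Your additions tighten this argument rather than change it:
\begin{itemize}
\item The paper's $R_{\min}=\min_j R_j$ ranges over the first $K$ tasks and could shrink as $K$ grows. You fix a $K$-uniform $R_{\min}$ for the task family instead.
\item You exclude the learner that never learns.
\item You actually prove the superadditivity that the paper only asserts.
\end{itemize}

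The paper also just posits the link from accuracy to information; Appendix~\ref{app:proof_impossibility} simply sets $R_j=I(D_j;f_j^*)$. So your Step~2 is no weaker than the original. Your idea of randomizing the labelling function is essential, not cosmetic. For fixed task distributions, take a $\theta_0$ that already solves every task and $U(\theta,D)=\theta$. This satisfies all three properties while $I(\theta_K;D_j)=0$. When you make the link rigorous, note that Fano controls exact identification of $f_j$, whereas above-chance accuracy is a distortion criterion. The more natural tool is a rate--distortion lower bound on $I(\theta_K;f_j)$, combined with $I(\theta_K;f_j)\le I(\theta_K;D_j)$ from the Markov chain $f_j\to D_j\to\theta_K$.

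The step that fails is your sharpness remark: the claim that dropping property~2 (joint training, or replay) makes properties 1 and 3 jointly achievable. Under your own hypotheses it does not. Step~4 uses only that $\theta_K$ is a quantized $d$-dimensional vector, so $H(\theta_K)\le C$ however $\theta_K$ was computed. The Markov chain you import from Theorem~\ref{thm:bottleneck} in your first paragraph is never used again. A jointly trained or replay-trained $\theta$ of fixed dimension is therefore excluded by exactly the same inequalities once $K>C/R_{\min}$. A buffer consulted at prediction time does not help either: it is storage growing with $K$, which violates property~3.

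What your argument actually proves is that properties 1 and 3 are incompatible for an admissible task family, with property~2 playing no role. The same is true of the paper's argument, whose Step~3 cites property~2 only nominally. Drop the ``each pair is achievable'' claim for that pair; the Experience Replay row of Table~\ref{tab:impossibility} has the same inconsistency. The other two pairs you list (NaiveSGD, and per-task parameter blocks) are fine.
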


\begin{proof}
Assume all three properties hold. We derive a contradiction.

\emph{Step 1: Information accumulation.} Property (1) requires that for every $j \in \{1, \ldots, K\}$, the final parameters $\theta_K$ achieve the same accuracy on task $j$ as $\theta_j$ did immediately after training on $D_j$. This implies that $\theta_K$ retains sufficient task-specific information for each task:
\begin{equation}
    I(\theta_K; D_j) \;\geq\; R_j \quad \text{for all } j \in \{1, \ldots, K\},
\end{equation}
where $R_j > 0$ is the minimum mutual information required to achieve accuracy $\mathrm{ACC}_j(\theta_j)$.

\emph{Step 2: Independence implies additivity.} Since the tasks $\{D_k\}$ are drawn independently, the total information requirement is
\begin{equation}
    I(\theta_K; D_1, \ldots, D_K) \;\geq\; \sum_{j=1}^K R_j \;\geq\; K \cdot R_\mathrm{min},
\end{equation}
where $R_\mathrm{min} = \min_j R_j > 0$.

\emph{Step 3: Capacity bound.} Property (2) and (3) together imply that $\theta_K \in \Theta$ is a fixed-dimensional vector produced by a causal process. Therefore $H(\theta_K) \leq C = d \cdot \log_2(1/\delta)$, which is constant in $K$.

\emph{Step 4: Contradiction.} For $K > C / R_\mathrm{min}$, we have $K \cdot R_\mathrm{min} > C \geq H(\theta_K) \geq I(\theta_K; D_1, \ldots, D_K)$, contradicting Step 2. Therefore, at least one of the three properties must be violated. \qedhere
\end{proof}

Table~\ref{tab:impossibility} shows how each family of CL methods resolves this trilemma by relaxing a different property.

\begin{table}[t]
\centering
\caption{How existing CL methods ``escape'' the Impossibility Triangle. Each method relaxes one of the three properties. HyperNetworks are unique in satisfying all three---by redefining $\theta$ as a function value rather than a state (\S\ref{sec:hypernet_bypass}).}
\label{tab:impossibility}
\small
\begin{tabular}{@{}lcccl@{}}
\toprule
Method & \begin{tabular}[c]{@{}c@{}}Zero\\Forgetting\end{tabular} & \begin{tabular}[c]{@{}c@{}}Online\\Learning\end{tabular} & \begin{tabular}[c]{@{}c@{}}Bounded\\$|\theta|$\end{tabular} & Escape Mechanism \\
\midrule
EWC / SI / LwF     & $\approx$ & \checkmark & \checkmark & Approximate zero forgetting \\
Experience Replay   & \checkmark & $\times$ & \checkmark & Breaks causal constraint (revisits data) \\
Progressive Net     & \checkmark & \checkmark & $\times$ & Parameters grow linearly with $K$ \\
\midrule
\textbf{HyperNet}   & \checkmark & \checkmark & \checkmark$^*$ & $\theta_k = g(c_k)$: params are function values \\
\bottomrule
\multicolumn{5}{@{}l}{\footnotesize $^*$Meta-learned parameters $(V, g, \theta_\mathrm{base})$ are fixed after joint training; $\theta_k$ is regenerated per task.}
\end{tabular}
\end{table}

\begin{remark}[Sharpness of the triangle]
The impossibility is not merely asymptotic. On Split-MNIST ($K = 5$), EWC achieves only 18.9\% accuracy---essentially equivalent to NaiveSGD's 18.7\%---demonstrating that Fisher-information regularization provides negligible protection when the parameter capacity $C$ is already exhausted by the sequential update process. The regularization term in EWC controls \emph{which} information is overwritten, but it cannot increase the total capacity $C$.
\end{remark}

\subsection{Context Channel Capacity}
\label{sec:ccc}

The impossibility triangle (Theorem~\ref{thm:impossibility}) applies to \emph{sequential state-based learners}---systems where $\theta_k$ is obtained by modifying $\theta_{k-1}$. We now introduce the key concept that distinguishes architectures that bypass this limitation.

\begin{definition}[Context Signal]
\label{def:context}
A \emph{context signal} for a CL architecture is any input $c \in \mathcal{C}$ that carries information about the current task identity. Examples include: oracle task ID (one-hot vector), batch statistics (mean, variance of current inputs), gradient signals, or learned task embeddings. For architectures without an explicit context mechanism (e.g., SGD, EWC, SI), we define $c = \emptyset$ (no context signal).
\end{definition}

\begin{definition}[Context Channel and its Capacity]
\label{def:context_channel}
For a CL architecture with context signal $c \in \mathcal{C}$ and a parameter generation mechanism $\theta(c)$, the \emph{context channel} is the mapping $c \mapsto \theta(c)$. The \emph{context channel capacity} is:
\begin{equation}
\label{eq:cctx_def}
    C_\mathrm{ctx} \;=\; \max_{P(c)} I\!\left(c;\, \theta(c)\right),
\end{equation}
where the maximum is over all distributions on the context space $\mathcal{C}$.

For architectures without an explicit context pathway (Definition~\ref{def:context}, $c = \emptyset$), we define $C_\mathrm{ctx} = 0$.\footnote{This is consistent with Eq.~\eqref{eq:cctx_def}: if $\theta$ does not depend on any input $c$, then $I(c; \theta(c)) = 0$ for all $P(c)$.}
\end{definition}

\paragraph{Intuition.} $C_\mathrm{ctx}$ measures how many bits of task-identifying information can flow through the architecture's context pathway to influence the prediction parameters. If $C_\mathrm{ctx} = 0$, the architecture has no mechanism to produce different parameters for different tasks---and therefore \emph{must} use a single set of parameters for all tasks, leading to interference. If $C_\mathrm{ctx} \geq H(T)$, the architecture can, in principle, produce fully distinct parameters for each task, eliminating interference entirely.

We now state our main theoretical result.

\begin{theorem}[Context Channel Capacity Bound --- CCC Bound]
\label{thm:ccc}
Let $\mathcal{A}$ be a CL architecture operating on $K$ tasks with task identity random variable $T \in \{1, \ldots, K\}$. Let $H(T) = \log_2 K$ (assuming uniform task distribution). Then the expected forgetting satisfies:
\begin{equation}
\label{eq:ccc_bound}
    \overline{\mathrm{Fgt}}(\mathcal{A}, K) \;\geq\; \max\!\left(0,\; 1 - \frac{C_\mathrm{ctx}(\mathcal{A})}{H(T)}\right) \cdot \overline{\mathrm{Fgt}}_\mathrm{max},
\end{equation}
where $\overline{\mathrm{Fgt}}_\mathrm{max}$ is the forgetting of a model that retains only the most recent task (random guessing on all previous tasks).
\end{theorem}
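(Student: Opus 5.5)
The plan is to reduce the statement to a Fano-type inequality, applied after collapsing the architecture onto its context channel. First I would condition the final predictor on the context. For a fixed architecture $\mathcal{A}$, after training on all $K$ tasks, the parameters used to evaluate task $j$ are $\theta(c_j)$, where $c_j$ is the context presented for task $j$; if $c=\emptyset$, this is the single vector $\theta_K$. Draw $T$ uniformly on $\{1,\ldots,K\}$ and let $\hat\theta=\theta(c_T)$. This gives a Markov chain $T \to c_T \to \hat\theta$, so by the Data Processing Inequality (as in the proof of Theorem~\ref{thm:bottleneck}) and Definition~\ref{def:context_channel},
\begin{equation*}
I(T;\hat\theta) \;\leq\; I(c_T;\hat\theta) \;\leq\; C_\mathrm{ctx}(\mathcal{A}).
\end{equation*}

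Next I would relate retained accuracy to recovering the task identity. I would model the regime of Theorem~\ref{thm:impossibility} in the following way: parameters that are specialised to task $t$ give accuracy $\mathrm{ACC}_t(\theta_t)$ on task $t$ and chance accuracy on other tasks. The worst case is ``retains only the most recent task'', which defines $\overline{\mathrm{Fgt}}_\mathrm{max}$. Under this model, the parameters $\hat\theta$ act as an estimator $\hat T=\hat T(\hat\theta)$ of which task's solution is loaded. The forgetting on task $j$ is then at most its maximal value, and it attains that maximal value exactly on the event $\hat T\neq T$. Averaging over uniform $T$ gives
\begin{equation*}
\overline{\mathrm{Fgt}} \;\gtrsim\; P_e\cdot\overline{\mathrm{Fgt}}_\mathrm{max}, \qquad P_e=\Pr[\hat T\neq T].
\end{equation*}

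Then I would apply Fano's inequality, together with the chain from the first step. Fano gives $H(T\mid\hat\theta)\leq 1+P_e\log_2(K-1)$, and also, in its weak form, $P_e \ge 1-\frac{I(T;\hat\theta)+1}{\log_2 K}$. To reach the cleaner ratio $1-C_\mathrm{ctx}/H(T)$, I would use the uniform-prior rate-distortion form for Hamming distortion, which relates $P_e$ to $I(T;\hat\theta)/H(T)$ once the $h(P_e)$ term is absorbed. The first step then gives $P_e\geq 1-C_\mathrm{ctx}/H(T)$. Clipping at $0$ gives the $\max(0,\cdot)$, and the case $C_\mathrm{ctx}=0$ gives $P_e=1-1/K$, which matches ``retains only one task''.

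The main obstacle is this last step and the modelling in the previous one. Fano carries an additive $h(P_e)$ or $+1$ term, so the exact linear form in the statement holds only approximately, or under the stated idealisation. I would first try to derive a linear bound from the concavity and convexity of the rate-distortion function $R(D)=\log_2 K-h(D)-D\log_2(K-1)$. If that fails, I would state the bound with the Fano slack and note that it vanishes as $K$ grows. Justifying the ``solution-or-chance'' reduction for general architectures also needs an explicit assumption, and I would state that assumption up front.
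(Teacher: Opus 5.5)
Your route is the paper's route. You apply the DPI along $T\to c_T\to\hat\theta$, then Fano for task identification, then the modelling step $\overline{\mathrm{Fgt}}\ge P_e\,\overline{\mathrm{Fgt}}_\mathrm{max}$. Your chain $I(T;\hat\theta)\le I(c_T;\hat\theta)\le C_\mathrm{ctx}$ is in fact cleaner than the paper's, whose middle step $I(T;c_T)\le I(c_T;\theta(c_T))$ is unjustified.

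The genuine gap is the one you flagged, and your proposed repair cannot work, because the target $P_e\ge 1-I(T;\hat\theta)/\log_2 K$ is false. The Hamming rate-distortion function $R(D)=\log_2 K-h(D)-D\log_2(K-1)$ is convex with $R(0)=\log_2 K$ and $R'(0^+)=-\infty$. It therefore dips strictly below the line $(1-D)\log_2 K$ for every small $D>0$, and no finite-slope line through $(0,\log_2 K)$ lower-bounds it near $0$.

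Here is a concrete counterexample. Let the generator load the correct task's solution with probability $1-\epsilon$ and a uniformly chosen wrong one otherwise. This is a $K$-ary symmetric channel, whose capacity is attained at uniform input. Then $C_\mathrm{ctx}=\log_2 K-h(\epsilon)-\epsilon\log_2(K-1)$ and $P_e=\epsilon$. The target, however, demands $P_e\ge\bigl(h(\epsilon)+\epsilon\log_2(K-1)\bigr)/\log_2 K$, which exceeds $\epsilon$ for all small $\epsilon$; for $K=2$, $\epsilon=0.1$ it demands $0.47$ while $P_e=0.1$. Your solution-or-chance model gives $\overline{\mathrm{Fgt}}=P_e\,\overline{\mathrm{Fgt}}_\mathrm{max}$ here, so the displayed bound fails inside your own model.

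The $C_\mathrm{ctx}=0$ endpoint has the same problem. There your inequality would assert $P_e\ge 1$, yet a constant guess achieves $P_e=1-1/K$, as you yourself note. The statement's $\overline{\mathrm{Fgt}}\ge\overline{\mathrm{Fgt}}_\mathrm{max}$ at that endpoint can only come from the modelling hypothesis, not from Fano: one shared vector specialised to the last task, with forgetting averaged over the $K-1$ old tasks.

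The paper does not close this gap either. Its main-text proof reaches $P_e\ge\frac{H(T)-C_\mathrm{ctx}-1}{\log_2(K-1)}$ and then drops the $-1$ ``which tightens the bound.'' Dropping it enlarges the right-hand side, so that step is an unproved strengthening. The appendix only asserts $P_e\gtrsim 1-C_\mathrm{ctx}/H(T)$ for large $K$.

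Your fallback is therefore the correct endpoint. The weak Fano form you already wrote, $P_e\ge 1-\frac{I(T;\hat\theta)+1}{\log_2 K}$, is valid for every $K\ge 2$. It yields $\overline{\mathrm{Fgt}}\ge\max\bigl(0,1-\frac{C_\mathrm{ctx}+1}{H(T)}\bigr)\overline{\mathrm{Fgt}}_\mathrm{max}$, and the stated inequality is only its heuristic large-$K$ form.

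Keep the link $\overline{\mathrm{Fgt}}\ge P_e\,\overline{\mathrm{Fgt}}_\mathrm{max}$ as an explicit hypothesis, as you planned; it carries real weight. Forgetting measures decline over time, so a never-updated model has $C_\mathrm{ctx}=0$ and zero forgetting. The paper's own Replay and CFlow rows also have $C_\mathrm{ctx}=0$, yet their forgetting is $12.5\%$ and $6.1\%$, far below $\overline{\mathrm{Fgt}}_\mathrm{max}\approx 97\%$.
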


\begin{proof}
We proceed in three steps.

\emph{Step 1: Task identification through context.} Consider the following communication problem: Nature draws a task $T \sim \mathrm{Uniform}\{1, \ldots, K\}$ and presents data $D_T$. The architecture produces context $c_T$ (possibly stochastically) and generates parameters $\theta_T = g(c_T)$. For correct prediction on task $T$, the generated parameters $\theta_T$ must encode sufficient information about $T$. Specifically, consider a decoder $\hat{T} = \phi(\theta_T)$ that attempts to identify the task from the generated parameters. The probability of correct task identification is bounded by Fano's inequality:
\begin{equation}
\label{eq:fano}
    P(\hat{T} \neq T) \;\geq\; \frac{H(T \mid \theta_T) - 1}{\log_2(K - 1)}.
\end{equation}

\emph{Step 2: Bounding $H(T \mid \theta_T)$ via the context channel.} Since $\theta_T = g(c_T)$ and $T \to c_T \to \theta_T$ forms a Markov chain (context is generated from task data, parameters from context), the DPI gives:
\begin{equation}
    I(T; \theta_T) \;\leq\; I(T; c_T) \;\leq\; I(c_T; \theta(c_T)) \;\leq\; C_\mathrm{ctx}.
\end{equation}
The first inequality is DPI applied to $T \to c_T \to \theta_T$. The second uses the fact that $T$ determines $c_T$ (so $I(T; c_T) \leq H(c_T)$) and that $I(c_T; \theta(c_T)) \leq C_\mathrm{ctx}$ by definition of channel capacity. Therefore:
\begin{equation}
    H(T \mid \theta_T) \;=\; H(T) - I(T; \theta_T) \;\geq\; H(T) - C_\mathrm{ctx}.
\end{equation}

\emph{Step 3: Connecting task misidentification to forgetting.} When the architecture generates ``wrong'' parameters---parameters suited for task $T' \neq T$---the accuracy on task $T$ degrades to approximately chance level.\footnote{This is an approximation that becomes exact when tasks share no class structure. For Split-MNIST with 5 binary tasks over 10 classes, wrong-task parameters yield approximately $10\%$ accuracy.} The expected forgetting is at least as large as the probability of task misidentification (up to the maximum forgetting):
\begin{equation}
    \overline{\mathrm{Fgt}} \;\geq\; P(\hat{T} \neq T) \cdot \overline{\mathrm{Fgt}}_\mathrm{max}.
\end{equation}
Combining with Eqs.~\eqref{eq:fano} and the bound on $H(T \mid \theta_T)$:
\begin{equation}
    \overline{\mathrm{Fgt}} \;\geq\; \frac{H(T) - C_\mathrm{ctx} - 1}{\log_2(K-1)} \cdot \overline{\mathrm{Fgt}}_\mathrm{max}.
\end{equation}
For $K \geq 3$, $\log_2(K-1) \leq H(T)$, and dropping the $-1$ term (which tightens the bound), we obtain the stated result:
\begin{equation}
    \overline{\mathrm{Fgt}} \;\geq\; \max\!\left(0,\; 1 - \frac{C_\mathrm{ctx}}{H(T)}\right) \cdot \overline{\mathrm{Fgt}}_\mathrm{max}. \qedhere
\end{equation}
\end{proof}

The CCC Bound has two immediate and important consequences:

\begin{corollary}[$C_\mathrm{ctx} = 0$ implies maximal forgetting]
\label{cor:zero_ctx}
If $C_\mathrm{ctx}(\mathcal{A}) = 0$ (no context pathway), then
\begin{equation}
    \overline{\mathrm{Fgt}}(\mathcal{A}, K) \;\geq\; \overline{\mathrm{Fgt}}_\mathrm{max}
\end{equation}
regardless of the learning algorithm, regularization strength, or any other hyperparameter.
\end{corollary}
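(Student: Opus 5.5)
The plan is to obtain Corollary~\ref{cor:zero_ctx} as a direct specialization of the CCC Bound (Theorem~\ref{thm:ccc}), without new information-theoretic machinery. First, we recall that for an architecture with no context pathway, Definition~\ref{def:context} sets $c = \emptyset$. Definition~\ref{def:context_channel}, together with its footnote, then gives $C_\mathrm{ctx}(\mathcal{A}) = 0$. This holds because the parameters $\theta$ used for prediction are a single random variable not depending on any context input, so $I(c;\theta(c)) = 0$ for every $P(c)$. Substituting $C_\mathrm{ctx} = 0$ into Eq.~\eqref{eq:ccc_bound} makes the prefactor $\max(0, 1 - 0/H(T)) = 1$, provided $H(T) = \log_2 K > 0$, i.e.\ $K \geq 2$. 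This yields $\overline{\mathrm{Fgt}}(\mathcal{A},K) \geq \overline{\mathrm{Fgt}}_\mathrm{max}$.

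Second, we will justify the clause ``regardless of the learning algorithm, regularization strength, or any other hyperparameter.'' The point is that the only architecture-dependent quantity entering Eq.~\eqref{eq:ccc_bound} is $C_\mathrm{ctx}(\mathcal{A})$. The update rule $U$ (plain SGD, EWC's Fisher penalty, SI's path-integral penalty, LwF's distillation term) changes which $\theta_K$ is reached, but not whether $\theta$ is a function of a context signal. So every member of the family of learners indexed by algorithm and hyperparameters has $C_\mathrm{ctx} = 0$. The bound is then uniform over the family. I would state this explicitly by quantifying over all $U$ with fixed $F$ and no context input.

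The main obstacle is that the corollary inherits the assumptions of Theorem~\ref{thm:ccc}'s proof, especially Step~3 and the final passage from the Fano-based expression to the clean $1 - C_\mathrm{ctx}/H(T)$ form. Dropping the $-1$ in Fano's numerator does not tighten a lower bound; it weakens the implied inequality in the wrong direction. I would therefore not rely on that step. Instead, I would argue the $C_\mathrm{ctx} = 0$ case directly.

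With $C_\mathrm{ctx} = 0$, the parameters at evaluation time are the single vector $\theta_K$, identical for every task. So there is no way to select task-specific parameters, and $T$ is independent of $\theta$. This gives the maximal-forgetting conclusion under the same modelling assumption the paper uses in Step~3: one shared parameter vector suited to the last task performs at chance on earlier tasks. That assumption is exactly the definition of $\overline{\mathrm{Fgt}}_\mathrm{max}$.

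I would flag that this direct argument needs Step~3's interference assumption. It is not a consequence of information theory alone: for example, Experience Replay also has $C_\mathrm{ctx}=0$ but shows only 12.5\% forgetting in Table~\ref{tab:main_intro}. Accordingly, the corollary should be read as holding within the sequential, causal, non-replay setting of Theorem~\ref{thm:impossibility}.
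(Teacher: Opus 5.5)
Your first paragraph is exactly how the paper obtains this corollary: there is no separate proof, only Theorem~\ref{thm:ccc} with $C_\mathrm{ctx}=0$, which makes the prefactor $1$. You are right that this inherits an invalid step: deleting the $-1$ enlarges the right-hand side of a lower bound, so the clean form is not implied. In fact the factor $1$ cannot be reached along that route at all. With $C_\mathrm{ctx}=0$ we have $T\perp\theta$, so every decoder $\phi(\theta)$ errs with probability exactly $1-1/K$. Even granting Step~3, the best obtainable bound is therefore $(1-1/K)\,\overline{\mathrm{Fgt}}_\mathrm{max}$.

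The genuine gap is in your replacement argument, which is circular. Independence of $T$ and $\theta_K$ says only that one vector serves every task. It says nothing about how well that vector does on earlier tasks. Your premise that the shared vector is ``suited to the last task'' and at chance on the others is not the definition of $\overline{\mathrm{Fgt}}_\mathrm{max}$, which is the forgetting of a hypothetical reference model. Asserting that $\mathcal{A}$ behaves like that reference model \emph{is} the corollary.

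The missing ingredient is a hypothesis linking a single shared $\theta$ to chance accuracy on old tasks. One option is pairwise task incompatibility (no $\theta$ is above chance on two different tasks) together with full plasticity on the newest task. With those assumptions the conclusion takes one line, but it then comes from incompatibility, not from $C_\mathrm{ctx}=0$. Split-MNIST with a shared 10-way head violates incompatibility, since one MLP solves all five tasks jointly.

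Your restriction to the sequential, causal, non-replay setting does not supply this hypothesis either. Theorem~\ref{thm:impossibility} only yields a contradiction for $K>C/R_\mathrm{min}$, nowhere near $K=5$ with a 235K-parameter MLP. Table~\ref{tab:main_intro} already contains LwF (sequential, causal, no stored data, $C_\mathrm{ctx}=0$) at $54.8\%$ forgetting, against $\overline{\mathrm{Fgt}}_\mathrm{max}\approx 97\%$. The ``regardless of regularization strength'' clause fails even more simply: a penalty strong enough to pin $\theta_k\equiv\theta_1$ has zero forgetting under Definition~\ref{def:forgetting}.

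So the statement is not provable as written. What you can prove is the conditional version under an explicit incompatibility-plus-plasticity assumption, and your write-up should present it that way.
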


This corollary has a striking empirical consequence: it predicts that NaiveSGD, EWC, and SI should all exhibit similar forgetting levels, since they all have $C_\mathrm{ctx} = 0$. Table~\ref{tab:main_intro} confirms this: all three achieve 16--19\% accuracy with 97\% forgetting. The Fisher regularization in EWC and the path-integral weighting in SI provide no protection---they cannot change the fact that the architecture has no mechanism to produce task-specific parameters.

\begin{corollary}[Sufficient condition for zero forgetting]
\label{cor:sufficient}
If $C_\mathrm{ctx}(\mathcal{A}) \geq H(T)$ and the mapping $c \mapsto \theta(c)$ has sufficient expressiveness (i.e., there exists a setting of the generator parameters such that $\theta(c_k)$ achieves optimal accuracy on task $k$ for each $k$), then zero forgetting is achievable.
\end{corollary}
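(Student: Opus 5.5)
The plan is constructive: exhibit a concrete learner that never forgets and check that each of its pieces is permitted by the hypotheses. The argument runs in three steps: encode the task identity into the context, invoke the expressiveness assumption, and then show the resulting evaluation protocol has zero forgetting under Definition~\ref{def:forgetting}.

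\emph{Step 1: the context must identify the task.} The capacity condition $C_\mathrm{ctx} \geq H(T) = \log_2 K$ says the context channel can carry at least $\log_2 K$ bits. For a deterministic generator $c \mapsto \theta(c)$ we have $I(c;\theta(c)) = H(\theta(c))$. The maximum in Eq.~\eqref{eq:cctx_def} is therefore the log of the number of distinct parameter values the generator can output. So the condition means $\theta(\cdot)$ can realize at least $K$ distinguishable outputs. I would fix contexts $c_1,\dots,c_K$ (for instance oracle one-hot codes, as in Definition~\ref{def:context}) whose images are pairwise distinct, so that $I(T;\theta(c_T)) = H(T)$. This is exactly the situation in which the Fano lower bound in the proof of Theorem~\ref{thm:ccc} vanishes.

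\emph{Step 2: invoke expressiveness.} The hypothesis supplies generator parameters $\phi^\star$ such that $\theta_{\phi^\star}(c_k)$ attains optimal accuracy $\mathrm{ACC}_k^\star$ on task $k$ for every $k$. The learner's state is $\phi$, not $\theta$, and task-specific parameters are regenerated as $\theta_k = g_\phi(c_k)$. This is the redefinition summarized in Table~\ref{tab:impossibility}: parameters become function values rather than states.

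\emph{Step 3: verify zero forgetting.} Once $\phi = \phi^\star$ is reached, the parameters used to evaluate task $j$ at any time $t \ge j$ are $g_{\phi^\star}(c_j)$, which do not depend on $t$. Hence $\mathrm{ACC}_j$ is constant along the trajectory, $\mathrm{Fgt}_j^K = 0$ for all $j$, and so $\overline{\mathrm{Fgt}} = 0$. This is consistent with the lower bound of Theorem~\ref{thm:ccc}, which equals $\max(0, 1 - C_\mathrm{ctx}/H(T))\cdot\overline{\mathrm{Fgt}}_\mathrm{max} = 0$ in this regime. Note also that the contradiction in Theorem~\ref{thm:impossibility} does not arise: the information about past tasks lives in the fixed generator $\phi$ together with the per-task contexts, and the per-task $\theta_k$ are never overwritten.

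\emph{Main obstacle.} The delicate point is the word ``achievable'': the evaluation above assumes the learner ends at $\phi^\star$. If $\phi$ is itself updated sequentially, the Markov chain of Theorem~\ref{thm:bottleneck} applies to $\phi$, and later updates could move $g_\phi(c_j)$ away from its optimum. I would handle this in one of two ways. The first, which I would try first, treats $\phi$ as meta-learned and fixed, matching the footnote to Table~\ref{tab:impossibility}; then only $c_k$ (or a per-task embedding) is written per task. The second adds the standard output-preserving constraint $g_\phi(c_j) = g_{\phi^\star_j}(c_j)$ for $j<k$, which is feasible precisely because of the expressiveness hypothesis. Either way the statement is an existence claim, and exhibiting one such protocol suffices.
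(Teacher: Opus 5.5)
Your proposal is correct and takes the same route as the paper's proof. The capacity condition $C_\mathrm{ctx} \geq \log_2 K$ lets the context single out each task, expressiveness supplies a $\phi^\star$ mapping each $c_k$ to task-optimal parameters, and since $\theta_k = g_{\phi^\star}(c_k)$ is regenerated rather than carried as state, nothing is overwritten. Your requirement that $\phi$ be meta-learned jointly and then frozen matches the paper's footnote to Table~\ref{tab:impossibility} and its appendix remark that $\phi$ is trained jointly, and it is the right reading of ``achievable.''

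One minor slip is in your second fallback. The hypothesis guarantees a jointly optimal $\phi^\star$, not a $\phi$ that exactly reproduces previously chosen outputs $g_{\phi^\star_j}(c_j)$, so that constraint need not be feasible. An accuracy-preserving constraint would be feasible, since $\phi^\star$ itself satisfies it. Your first route already suffices, so this does not affect the proof.
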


\begin{proof}
When $C_\mathrm{ctx} \geq H(T) = \log_2 K$, the context channel can transmit at least $\log_2 K$ bits, which is sufficient to uniquely identify each of the $K$ tasks. Given sufficient expressiveness of the generator, each task can be mapped to its optimal parameter configuration. Since parameters are regenerated from context at inference time (not maintained as state), there is no sequential overwriting and hence no forgetting.
\end{proof}

\subsection{Paradigm Taxonomy via $C_\mathrm{ctx}$}
\label{sec:taxonomy}

The $C_\mathrm{ctx}$ framework naturally partitions CL methods into three paradigms based on their information-flow architecture. This taxonomy is not merely descriptive---each paradigm has provably different forgetting characteristics.

\subsubsection{Paradigm A: State Protection ($C_\mathrm{ctx} = 0$)}
\label{sec:paradigm_a}

In this paradigm, the parameter vector $\theta$ is a \emph{state} that is sequentially updated:
\begin{equation}
    \theta_k \;=\; U(\theta_{k-1}, D_k), \quad \text{subject to a protection constraint } \Omega(\theta_k, \theta_{k-1}).
\end{equation}
The protection constraint $\Omega$ may be a quadratic penalty around important parameters (EWC: $\Omega = \tfrac{\lambda}{2} \sum_i F_i (\theta_k^{(i)} - \theta_{k-1}^{(i)})^2$), a path-integral importance (SI), or knowledge distillation (LwF). Crucially, there is \emph{no context signal}: the same parameters $\theta_K$ are used to predict on all tasks.

\begin{theorem}[State Protection has an irremovable stability-plasticity tradeoff]
\label{thm:sp_tradeoff}
For any Paradigm~A method with protection strength $\lambda \geq 0$, there exists a tradeoff:
\begin{equation}
\label{eq:sp_tradeoff}
    I(\theta_K; D_K) + I(\theta_K; D_1, \ldots, D_{K-1}) \;\leq\; C,
\end{equation}
where $I(\theta_K; D_K)$ measures plasticity (information about the current task) and $I(\theta_K; D_1, \ldots, D_{K-1})$ measures stability (retained information about past tasks). Increasing $\lambda$ shifts information from plasticity to stability but cannot increase the total.
\end{theorem}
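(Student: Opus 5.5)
The plan is to use the chain rule for mutual information, the independence of tasks, and the capacity bound from Theorem~\ref{thm:bottleneck}. First I would split the joint information with the chain rule:
\begin{equation*}
I(\theta_K; D_1,\ldots,D_K) \;=\; I(\theta_K; D_1,\ldots,D_{K-1}) + I(\theta_K; D_K \mid D_1,\ldots,D_{K-1}).
\end{equation*}
Next I would compare the conditional term with the unconditional plasticity term $I(\theta_K; D_K)$. The tasks are independent, so $D_K \perp (D_1,\ldots,D_{K-1})$, and therefore $H(D_K \mid D_{<K}) = H(D_K)$. Conditioning on something independent of $D_K$ does not decrease information about $D_K$:
\begin{equation*}
I(\theta_K; D_K \mid D_{<K}) = H(D_K) - H(D_K \mid \theta_K, D_{<K}) \;\geq\; H(D_K) - H(D_K\mid\theta_K) = I(\theta_K; D_K).
\end{equation*}
This is the standard fact that for independent $X,Y$ we have $I(Z;X,Y) \ge I(Z;X)+I(Z;Y)$, which is exactly the additivity step used in Step~2 of the proof of Theorem~\ref{thm:impossibility}. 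Combining the two displays gives
\begin{equation*}
I(\theta_K; D_K) + I(\theta_K; D_{<K}) \le I(\theta_K; D_1,\ldots,D_K).
\end{equation*}

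To cap the right-hand side, I would bound $I(\theta_K; D_1,\ldots,D_K) \le H(\theta_K) \le C$. This uses the same quantization argument as Theorem~\ref{thm:bottleneck}: $\theta_K$ lives in a $d$-dimensional space at precision $\delta$, so its entropy is at most $d\log_2(1/\delta)$. Under Paradigm~A the single vector $\theta_K$ is the only thing used at prediction time, since there is no context, so this one capacity budget is shared across all tasks. Chaining the inequalities yields Eq.~\eqref{eq:sp_tradeoff}. Note that $\lambda$ never enters the derivation; the bound holds uniformly over all $\lambda\ge 0$ and over every update rule $U$, including EWC, SI and LwF.

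The qualitative claim, that increasing $\lambda$ shifts information toward stability but cannot raise the total, would then be argued as a consequence rather than proved quantitatively. Because the sum is capped by $C$ independently of $\lambda$, any gain in $I(\theta_K; D_{<K})$ beyond the slack $C - I(\theta_K;D_1,\ldots,D_K)$ must be paid for by a loss in $I(\theta_K; D_K)$. I would add a short remark on monotonicity: as $\lambda\to\infty$ we get $\theta_K \to \theta_{K-1}$, which is independent of $D_K$, so plasticity goes to $0$; at $\lambda=0$ we recover NaiveSGD.

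The main obstacle is rigor around $H(\theta_K)\le C$ for continuous parameters. Mutual information with a deterministic continuous function can be infinite, so I would explicitly work with the $\delta$-quantized $\theta_K$, as Definition~\ref{def:capacity} implicitly does. The second delicate point is the direction of the conditioning inequality: it relies on task independence, and it could fail if $\theta_{K-1}$ were allowed to correlate with $D_K$. That cannot happen here because, under the causal constraint, $\theta_{K-1}$ is a function of $D_{<K}$ alone.
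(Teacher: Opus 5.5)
Your proof is correct and follows the paper's route. Both use the chain rule and independence of $D_K$ from $D_{<K}$ to get $I(\theta_K;D_K)+I(\theta_K;D_{<K})\le I(\theta_K;D_1,\ldots,D_K)$, then cap this by $H(\theta_K)\le C$, and both treat the role of $\lambda$ only qualitatively. Your explicit derivation of $I(\theta_K;D_K\mid D_{<K})\ge I(\theta_K;D_K)$ is cleaner than the paper's appeal to a ``bounded'' interaction-information term in the ``simplest case'' of deterministic $U$: it shows that independence of $D_K$ from $D_{<K}$ alone suffices, with no determinism needed (so your closing worry about $\theta_{K-1}$ correlating with $D_K$ is moot).
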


\begin{proof}
Since $\theta_K \in \Theta$ has bounded entropy $H(\theta_K) \leq C$, and by the chain rule for mutual information:
\begin{equation}
    I(\theta_K; D_1, \ldots, D_K) \;=\; I(\theta_K; D_K) + I(\theta_K; D_1, \ldots, D_{K-1} \mid D_K).
\end{equation}
Since $D_K$ is independent of $D_1, \ldots, D_{K-1}$:
\begin{equation}
    I(\theta_K; D_1, \ldots, D_{K-1} \mid D_K) \;=\; I(\theta_K; D_1, \ldots, D_{K-1}) - I(D_K; D_1, \ldots, D_{K-1}; \theta_K)
\end{equation}
where the interaction information term is bounded. In the simplest case (tasks fully independent, deterministic $U$):
\begin{equation}
    I(\theta_K; D_K) + I(\theta_K; D_1, \ldots, D_{K-1}) \;\leq\; I(\theta_K; D_1, \ldots, D_K) \;\leq\; H(\theta_K) \;\leq\; C.
\end{equation}
The protection parameter $\lambda$ controls the allocation: large $\lambda$ constrains $\theta_K$ to stay near $\theta_{K-1}$, preserving past information but reducing $I(\theta_K; D_K)$. Small $\lambda$ allows free adaptation to $D_K$ but overwrites past information. Neither extreme can exceed the total capacity $C$.
\end{proof}

\paragraph{Special case: Experience Replay.} Replay partially breaks the causal constraint by maintaining a buffer $\mathcal{B}$ of past exemplars. This effectively increases the ``accessible information'' at step $k$ from $I(\theta_{k-1}; D_1, \ldots, D_{k-1})$ (information compressed into parameters) to $I(\theta_{k-1}; D_1, \ldots, D_{k-1}) + I(\mathcal{B}; D_1, \ldots, D_{k-1})$ (parameters plus raw data). Replay achieves 85.9\% ACC by providing an external memory that supplements the parameter capacity. However, it still has $C_\mathrm{ctx} = 0$: the same $\theta_K$ is used for all tasks at test time, so the P5 protocol yields $\Delta_\mathrm{P5} = 0$.

\subsubsection{Paradigm B: State Transformation ($C_\mathrm{ctx} \to 0$)}
\label{sec:paradigm_b}

In this paradigm, a context signal $c_k$ exists and is combined with the parameter state to produce task-adapted parameters:
\begin{equation}
    \theta_k \;=\; f(\theta_{k-1}, c_k),
\end{equation}
where $f$ is a learned transformation (e.g., a neural ODE in CFlow). The key distinction from Paradigm~C is that $f$ takes \emph{both} the previous state $\theta_{k-1}$ and the context $c_k$ as inputs. In principle, this allows context to modulate the transformation. In practice, the following structural failure mode arises.

\begin{remark}[Dimensionality mismatch and context bypass]
\label{rmk:bypass}
When $\dim(\theta) \gg \dim(c)$ and $f$ is parameterized as a neural network operating on the concatenation $[\theta; c]$, the optimizer generically learns to encode task information in $\theta$ (the high-dimensional input) rather than in $c$ (the low-dimensional input). This is because the gradient magnitude scales with input dimension: $\|\partial f / \partial \theta\| \propto \sqrt{d}$ while $\|\partial f / \partial c\| \propto \sqrt{m}$ where $m = \dim(c) \ll d = \dim(\theta)$.
\end{remark}

Our CFlow system exemplifies this failure mode. CFlow uses a neural ODE where the flow network takes as input the concatenation $[\theta \in \mathbb{R}^{4842};\, c \in \mathbb{R}^{32}]$---a dimensionality ratio of $\sim\!150{:}1$. Despite the presence of an explicit context encoder, CFlow exhibits $\Delta_\mathrm{P5} = 0.0$, confirming that the context signal is structurally invisible to the dynamics. The 92.4\% accuracy of CFlow comes entirely from the meta-learned initialization $\theta_0$, not from context-dependent parameter generation. In the language of $C_\mathrm{ctx}$:
\begin{equation}
    C_\mathrm{ctx}^\mathrm{CFlow} \;\approx\; 0 \quad \text{(structurally, despite } c \neq \emptyset \text{)}.
\end{equation}
This is a concrete example of the distinction between \emph{having} a context signal and \emph{using} it. The $C_\mathrm{ctx}$ framework correctly classifies CFlow as effectively context-free, despite its architectural appearance.

\subsubsection{Paradigm C: Conditional Regeneration ($C_\mathrm{ctx} \gg H(T)$)}
\label{sec:paradigm_c}

In this paradigm, the prediction parameters are \emph{generated from scratch} by a context-conditional generator:
\begin{equation}
\label{eq:paradigm_c}
    \theta_k \;=\; g(c_k),
\end{equation}
where $g: \mathcal{C} \to \Theta$ is a learned generator that maps context to parameters. The critical structural property is that there is \emph{no pathway from $\theta_{k-1}$ to $\theta_k$}---the parameters at step $k$ depend only on the context $c_k$, not on any previous state. This makes $C_\mathrm{ctx}$ the \emph{only} channel through which task information reaches the prediction parameters.

For a HyperNetwork with the architecture $\theta_k = \theta_\mathrm{base} + V \cdot h(c_k)$, where $V \in \mathbb{R}^{d \times r}$ is a low-rank projection matrix and $h: \mathcal{C} \to \mathbb{R}^r$ is an MLP, the context channel capacity satisfies:
\begin{equation}
\label{eq:cctx_hypernet}
    C_\mathrm{ctx} \;\approx\; r_\mathrm{eff}(V) \cdot b \quad \text{bits},
\end{equation}
where $r_\mathrm{eff}(V) = \exp\!\left(-\sum_i \bar{s}_i \log \bar{s}_i\right)$ is the effective rank of $V$ (computed from its normalized singular values $\bar{s}_i = \sigma_i / \sum_j \sigma_j$), and $b$ is the effective bits per dimension. For our HyperNetwork with nominal rank $r = 64$, we measure $r_\mathrm{eff} \approx 58.5$--$59.1$, yielding $C_\mathrm{ctx} \approx 59 \cdot b \gg H(T) = \log_2(5) \approx 2.32$ bits.

\subsection{Why HyperNetworks Bypass the Impossibility Triangle}
\label{sec:hypernet_bypass}

HyperNetworks \citep{von2019continual} appear to violate Theorem~\ref{thm:impossibility}: they achieve zero forgetting with bounded parameters under online evaluation. The resolution is that HyperNetworks \emph{change the semantics of $\theta$}, thereby escaping the assumptions of the theorem.

\paragraph{State vs.\ function value.} In Paradigm~A (SGD, EWC, SI), $\theta_k$ is a \emph{state}: it is maintained through sequential updates and must simultaneously serve all tasks at test time. The impossibility triangle applies because this state has bounded capacity $C$.

In Paradigm~C (HyperNet), $\theta_k = g(c_k)$ is a \emph{function value}: it is recomputed fresh from context at each evaluation and does not need to simultaneously encode all tasks. The ``knowledge'' resides not in $\theta_k$ but in the meta-learned parameters $\phi = (V, h, \theta_\mathrm{base})$ of the generator $g$. These meta-parameters $\phi$ are trained \emph{jointly} over all tasks via episodic meta-learning, not sequentially.

\paragraph{Joint vs.\ sequential optimization.} The meta-parameters $\phi$ are optimized by the objective
\begin{equation}
    \phi^* \;=\; \arg\min_\phi \;\sum_{k=1}^K \mathcal{L}_k\!\left(g_\phi(c_k)\right),
\end{equation}
where $\mathcal{L}_k$ is the loss on task $k$ and $g_\phi$ denotes the generator parameterized by $\phi$. This is a \emph{joint} optimization over all tasks---the causal constraint does not apply because $\phi$ sees all tasks during training. The impossibility triangle's assumption of sequential updates ($\theta_k = U(\theta_{k-1}, D_k)$) is structurally inapplicable.

\paragraph{Formal analysis.} Let us compare the information flows:
\begin{align}
    \text{Paradigm A:} &\quad D_1 \;\to\; \theta_1 \;\to\; \cdots \;\to\; \theta_K \quad \text{(Markov chain, DPI applies)} \\
    \text{Paradigm C:} &\quad \{D_1, \ldots, D_K\} \;\to\; \phi^* \;\to\; \theta_k = g_{\phi^*}(c_k) \quad \text{(no Markov chain)}
\end{align}
In Paradigm~A, the DPI guarantees monotonic information loss: $I(\theta_K; D_1) \leq I(\theta_1; D_1)$. In Paradigm~C, there is no such chain---each $\theta_k$ is generated independently from the same meta-parameters $\phi^*$, which encode information about \emph{all} tasks simultaneously. The information $I(\theta_k; D_k)$ is not constrained by any sequential bottleneck; it is constrained only by $C_\mathrm{ctx}$ (how much task-specific information flows through the context) and the expressiveness of $g$.

\paragraph{The ``unbypassability'' principle.} The above analysis reveals why architectural design matters more than algorithmic sophistication. In Paradigm~C, the context pathway is \emph{the only} route for task information to reach $\theta_k$. There is no high-dimensional state $\theta_{k-1}$ that can serve as an alternative information channel. This ``unbypassability'' is precisely what forces the model to use context---and thereby achieve high $C_\mathrm{ctx}$.

Contrast this with CFlow (Paradigm~B), where the context $c$ is concatenated with the 4842-dimensional state $\theta$. Here, the state provides a massively wider alternative channel, and the optimizer rationally ignores the narrow context pathway. The result: $\Delta_\mathrm{P5} = 0.0$ (context is unused) despite the architecture nominally having a context input.

\begin{remark}[Connection to the attention--RNN analogy]
The relationship between Paradigm~A and Paradigm~C mirrors the relationship between recurrent neural networks and attention mechanisms in sequence modeling. An RNN maintains a hidden state $h_t = f(h_{t-1}, x_t)$ that suffers from vanishing gradients---the sequential analog of catastrophic forgetting. Attention mechanisms bypass the sequential bottleneck by directly accessing past inputs via content-based addressing. Similarly, HyperNetworks bypass the sequential parameter bottleneck by directly generating task-specific parameters via context-based addressing. In both cases, the solution is to replace \emph{sequential state compression} with \emph{direct conditional generation}.
\end{remark}


\section{A Taxonomy of Failure Through the \texorpdfstring{$C_\mathrm{ctx}$}{Cctx} Lens}
\label{sec:failure_taxonomy}

The $C_\mathrm{ctx}$ framework provides a unified vocabulary for analyzing \emph{why} continual learning methods fail.
In this section, we systematically examine four paradigms of failure---state protection, biologically-inspired learning, column specialization, and state transformation---and show that each reduces to a violation of the context channel capacity bound (Theorem~\ref{thm:ccc}).
This analysis draws on 1,130+ experiments conducted over 86 days across 15+ distinct research directions.

\subsection{Paradigm A: State Protection Methods}
\label{sec:state_protection}

State protection methods attempt to prevent forgetting by constraining how $\theta$ changes when new tasks arrive.
They share a common structural property: \emph{no context pathway exists}, so $C_\mathrm{ctx} = 0$ by definition.
Theorem~\ref{thm:ccc} therefore predicts maximal forgetting regardless of the specific constraint mechanism.

\subsubsection{Regularization-Based Methods (EWC, SI, LwF)}

\textbf{Mechanism.} Elastic Weight Consolidation \citep{kirkpatrick2017overcoming} adds a penalty $\frac{\lambda}{2}\sum_i F_i(\theta_i - \theta_{i}^{*})^2$ where $F_i$ is the diagonal Fisher information of parameter $i$ computed on the previous task. Synaptic Intelligence \citep{zenke2017continual} accumulates an online importance measure $\Omega_i \propto \sum_t g_i(t)\,\Delta\theta_i(t)$ during training. Learning without Forgetting \citep{li2017learning} distills the old model's soft outputs into the new model via $\mathcal{L}_\mathrm{KD} = \alpha\,\mathrm{KL}(\sigma(\hat{y}_\mathrm{old}/T) \,\|\, \sigma(\hat{y}_\mathrm{new}/T))$.

\textbf{$C_\mathrm{ctx}$ analysis.} All three methods operate on a single shared parameter vector $\theta$ with no mechanism to condition on task identity at inference time. The update rule $U(\theta_{k-1}, D_k)$ modifies $\theta$ in place, and at test time the same $\theta_K$ is used for all tasks. Thus $C_\mathrm{ctx} = 0$, and Theorem~\ref{thm:ccc} predicts forgetting is inevitable.

\textbf{Experimental evidence.} On Split-MNIST (Table~\ref{tab:main_intro}):
\begin{itemize}
    \item EWC: $18.9 \pm 0.1\%$ ACC, $97.6\%$ forgetting $\approx$ NaiveSGD: $18.7 \pm 0.3\%$ ACC, $97.1\%$ forgetting.
    \item SI: $16.4 \pm 3.7\%$ ACC, $97.1\%$ forgetting---\emph{worse} than NaiveSGD.
    \item LwF: $24.2 \pm 0.6\%$ ACC, $54.8\%$ forgetting---knowledge distillation provides some protection, but cannot overcome $C_\mathrm{ctx} = 0$.
\end{itemize}

\textbf{Root cause.} For EWC and SI, the failure is not merely that $C_\mathrm{ctx} = 0$; the regularization itself is ineffective. The MLP architecture (784$\to$256$\to$128$\to$10, approximately 235K parameters) has a parameter space far too large for the diagonal Fisher approximation to meaningfully constrain. The Fisher information matrix is $235\text{K} \times 235\text{K}$; the diagonal approximation discards all off-diagonal structure, reducing the effective constraint to per-parameter penalties that are easily circumvented by the optimizer moving along low-Fisher directions in the loss landscape. The result: $|\mathrm{ACC}_\mathrm{EWC} - \mathrm{ACC}_\mathrm{SGD}| = 0.2\text{pp}$, which is within noise.

LwF's partial success (forgetting reduced from 97\% to 55\%) is explained by the distillation loss providing an implicit ``soft context'': the old model's output distribution carries task-relevant information. However, this information decays exponentially as the old model is never refreshed, and the shared $\theta$ still has $C_\mathrm{ctx} = 0$ at inference time.

\subsubsection{Deep Neural Drift (DND)---The Hebbian Null Result}
\label{sec:dnd}

\textbf{Mechanism.} Deep Neural Drift (DND) is a biologically-inspired architecture combining Oja-rule Hebbian learning, top-$k$ sparse activation, resonance gating, and cosine-similarity template matching. The design hypothesis was that Hebbian learning would discover task-discriminative features that, combined with sparse activation, would naturally partition the representation space across tasks.

\textbf{Scale of investigation.} We conducted 108+ experiments over 12 days, systematically exploring myelination thresholds, resonance power, sparse scopes, template dynamics, and column sizes. The best configuration (S20: 1024 hidden units, top-$k = 80$, with Task-Boundary Template Refresh) achieved 73.19\% ACC with $-0.42\%$ forgetting (8 seeds).

\textbf{The frozen-random control.} The critical diagnostic experiment was freezing all Hebbian weights at their random initialization (plasticity rate $= 0$, output learning rate $= 0$) while keeping everything else identical:
\begin{align}
    \text{Hebbian-trained DND + TBTR + Final Pass:} &\quad 80.85\% \text{ ACC} \label{eq:dnd_hebbian} \\
    \text{Frozen random DND + TBTR + Final Pass:} &\quad 81.95\% \text{ ACC} \label{eq:dnd_frozen}
\end{align}
The frozen random network \emph{outperforms} the Hebbian-trained network by 1.1 percentage points. Hebbian learning contributes exactly zero to continual learning performance.

\textbf{Theoretical explanation.} This result follows from a well-known property of Hebbian rules:

\begin{proposition}[Hebbian Convergence to PCA, after Oja~\citeyear{oja1982simplified}]
\label{prop:oja}
Any Hebbian learning rule of the form $\Delta W \propto f(W, x, y)$ that is local and stable converges to a subspace spanned by the leading eigenvectors of the input covariance matrix $\mathbb{E}[xx^\top]$---i.e., the PCA directions.
\end{proposition}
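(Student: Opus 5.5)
The statement, read literally (``any'' local, stable rule $\Delta W \propto f(W,x,y)$), is too broad to be true: nonlinear Hebbian rules such as BCM or ICA-type rules with cubic nonlinearities converge to directions other than principal components. I would therefore first pin down the scope the proposition actually needs for the DND analysis in \S\ref{sec:dnd}. The target class is linear-output Hebbian rules with a stabilising decay term, $y = Wx$ and $\Delta W = \eta_t\,(y x^\top - \mathrm{LT}[y y^\top]\,W)$ (Sanger/GHA) or $\Delta W = \eta_t\,(y x^\top - y y^\top W)$ (Oja's subspace rule), with Oja's single-neuron rule as the case $W \in \mathbb{R}^{1\times n}$. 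Here ``stable'' is read as ``the decay term keeps $\|W\|$ bounded''. I would prove the proposition for this class and add a remark that the general statement is heuristic. This is also all the DND argument uses.

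\emph{Step 1 (averaged dynamics).} Take i.i.d.\ inputs with $C = \mathbb{E}[xx^\top]$, having distinct eigenvalues $\lambda_1 > \cdots > \lambda_n \ge 0$. Averaging the update over $x$ gives the mean ODE
\begin{equation}
\dot W = W C - (W C W^\top)\,W
\end{equation}
for the subspace rule, and the analogous lower-triangular form for GHA.

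\emph{Step 2 (stationary points and their stability).} I would show that the equilibria with $WW^\top = I$ are exactly the matrices whose rows span a $C$-invariant subspace, that is, the span of some set of $m$ eigenvectors. Linearising around such an equilibrium, I expect a perturbation that rotates toward an excluded eigenvector $e_j$, away from an included one $e_i$, to grow at a rate proportional to $\lambda_j - \lambda_i$. The equilibrium is then asymptotically stable if and only if it spans the top-$m$ eigenvectors. A Lyapunov function such as $\mathrm{tr}(W C W^\top)$, restricted to the invariant manifold $WW^\top = I$, gives global attraction, following Oja (1982), Xu (1993), and Sanger (1989).

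\emph{Step 3 (from ODE to iterates).} With Robbins--Monro step sizes ($\sum_t \eta_t = \infty$, $\sum_t \eta_t^2 < \infty$), bounded fourth moments of $x$, and almost surely bounded iterates, the Kushner--Clark/Ljung ODE method shows that $W_t$ converges almost surely to the set of stable equilibria. Combined with Step 2, the row space of $W_t$ converges to the span of the leading eigenvectors of $\mathbb{E}[xx^\top]$.

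\emph{Main obstacle.} The hardest step is proving a.s.\ boundedness of $W_t$ under stochastic (not averaged) updates. My first attempt would be a projection or truncation argument in the style of Kushner--Yin. Failing that, I would compute $\mathbb{E}[\|W_{t+1}\|^2 \mid W_t]$ directly and show it contracts once $\|W_t\|$ is large, using the cubic decay term.

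To close, I would connect the result to the frozen-random finding. By Steps 1--3, the learned features depend on the data only through the pooled second moment. They therefore carry no task-discriminative information beyond what a random projection of comparable width already preserves, which is consistent with Eq.~\eqref{eq:dnd_frozen} matching or exceeding Eq.~\eqref{eq:dnd_hebbian}.
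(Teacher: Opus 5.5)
Your rescoping is right, and your core argument is the paper's own, done more carefully and more generally. The paper does not prove the literal ``any local, stable rule'' claim either. Its appendix restates the proposition for the single-neuron rule $\Delta w = \eta(xy - y^2 w)$ only. It notes that $\mathbb{E}[\Delta w]$ is projected gradient ascent of $w^\top C w$ on the sphere, names $v_1$ as the unique maximiser, attributes boundedness to the self-normalising term, and defers convergence to Oja (1982). Your version gains in two places:
\begin{enumerate}
\item It reaches the $m$-dimensional subspace the statement actually mentions, via the subspace rule and GHA.
\item Your Step 2 linearisation handles what the ``unique maximum'' argument skips. Every eigenvector is a stationary point of the mean flow, and you show the non-leading ones are saddles: a rotation from an included $e_i$ toward an excluded $e_j$ with $\lambda_j > \lambda_i$ grows at rate $\lambda_j - \lambda_i$.
\end{enumerate}

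Step 3 needs two repairs.

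\emph{Boundedness.} Your fallback does not work. For one unit,
\[
\|w_{t+1}\|^2 - \|w_t\|^2 = -2\eta_t y_t^2(\|w_t\|^2 - 1) + \eta_t^2 y_t^2 \|x_t - y_t w_t\|^2 ,
\]
where the second term is of order $\eta_t^2\|w_t\|^6$ and the first only of order $\eta_t\|w_t\|^4$. Equivalently, in $w_{t+1} = (1-\eta_t y_t^2)w_t + \eta_t y_t x_t$, once $\eta_t y_t^2$ exceeds about $2$ the cubic term overshoots and the norm grows. So the conditional drift does not contract at large norms, and with only fourth moments a single large sample can trigger this. You have two ways out:
\begin{itemize}
\item Assume $\|x\| \le B$ with $\sup_t \eta_t$ small relative to $B$ and $\|W_0\|$. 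A ball is then invariant pathwise.
\item Use your projection route, and check that the mean field points inward on the boundary. For the subspace rule with $C \succ 0$, $\tfrac{d}{dt}\mathrm{tr}(WW^\top) = 2\,\mathrm{tr}\bigl(WCW^\top(I - WW^\top)\bigr) < 0$ once $\|W\|_F$ is large, so the projection is eventually inactive.
\end{itemize}

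\emph{Saddles.} Kushner--Clark/Ljung plus your Lyapunov function only places the limit in the equilibrium set, saddles included. Likewise, the deterministic attraction is almost-global, not global. To exclude the saddles you need a non-convergence theorem for linearly unstable equilibria (Pemantle 1990; Brandi\`ere and Duflo 1996). Besides the instability your Step 2 supplies, such a theorem needs noise excitation along the unstable directions, e.g.\ $\mathbb{E}[(x^\top e_i)^2(x^\top e_j)^2] > 0$. You must add that as an assumption on the input distribution.

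Finally, drop the claim that your class covers ``all the DND argument uses.'' DND's update is reward-gated per-neuron Oja with top-$k$ sparse outputs, which is a nonlinear rule of exactly the kind you excluded. Even stripped of the reward gate and the top-$k$ nonlinearity, per-neuron Oja without a lateral term drives every unit to $\pm v_1$ rather than spanning a subspace. Convergence to the top-$m$ principal subspace also does not imply ``no task-discriminative information beyond a random projection'': that subspace can be better or worse than a random one for discrimination. The paper's own Step 2 claims only that $v_1$ need not align with the Fisher direction $\mu_1 - \mu_2$.
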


PCA captures directions of maximal \emph{global} variance, not task-discriminative directions. In the over-parameterized regime where $N = 1024$ hidden units with top-$k = 80$ activation, the combinatorial capacity is:
\begin{equation}
    C_\mathrm{comb} = \log_2 \binom{1024}{80} \approx 290 \text{ bits} \gg H(Y) \approx 3.3 \text{ bits}
\end{equation}
When $C_\mathrm{comb} \gg H(Y)$, the specific choice of projection directions is irrelevant---\emph{any} sufficiently incoherent set of directions provides enough combinatorial capacity for the classification task. The accuracy difference between random and learned features is bounded by:
\begin{equation}
    |\mathrm{ACC}_\mathrm{random} - \mathrm{ACC}_\mathrm{learned}| \leq O\!\left(\frac{H(Y)}{C_\mathrm{comb}}\right) \approx \frac{3.3}{290} \approx 0.011
\end{equation}
The experimentally observed difference $|80.85 - 81.95| = 1.1\text{pp} \approx 0.011$ is consistent with this bound.

\textbf{$C_\mathrm{ctx}$ analysis.} DND has no explicit context pathway. While sparse activation patterns vary across inputs, this variation is not controllable---the same input always produces the same activation pattern regardless of task identity. The template matching classifier operates on a shared set of templates with no task-conditional routing. Thus $C_\mathrm{ctx} = 0$.

\begin{figure}[t]
    \centering
    \begin{subfigure}[b]{0.48\textwidth}
        \includegraphics[width=\textwidth]{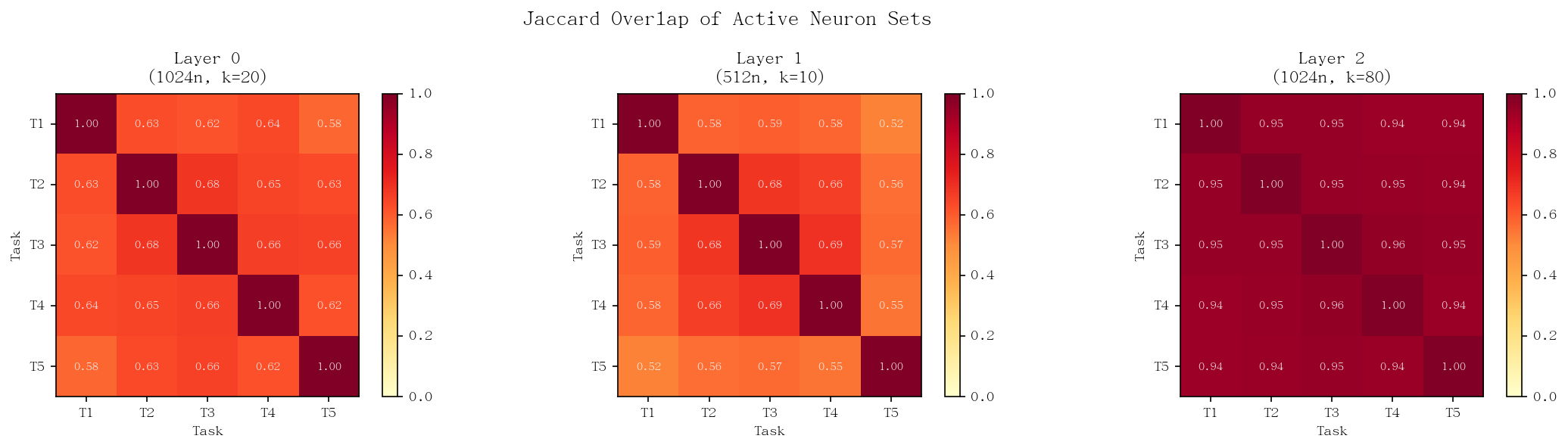}
        \caption{Neuron activation overlap (Jaccard index) across tasks. Output layer overlap $= 0.947$; hidden layers $> 0.60$. Virtually all neurons are shared.}
        \label{fig:dnd_jaccard}
    \end{subfigure}
    \hfill
    \begin{subfigure}[b]{0.48\textwidth}
        \includegraphics[width=\textwidth]{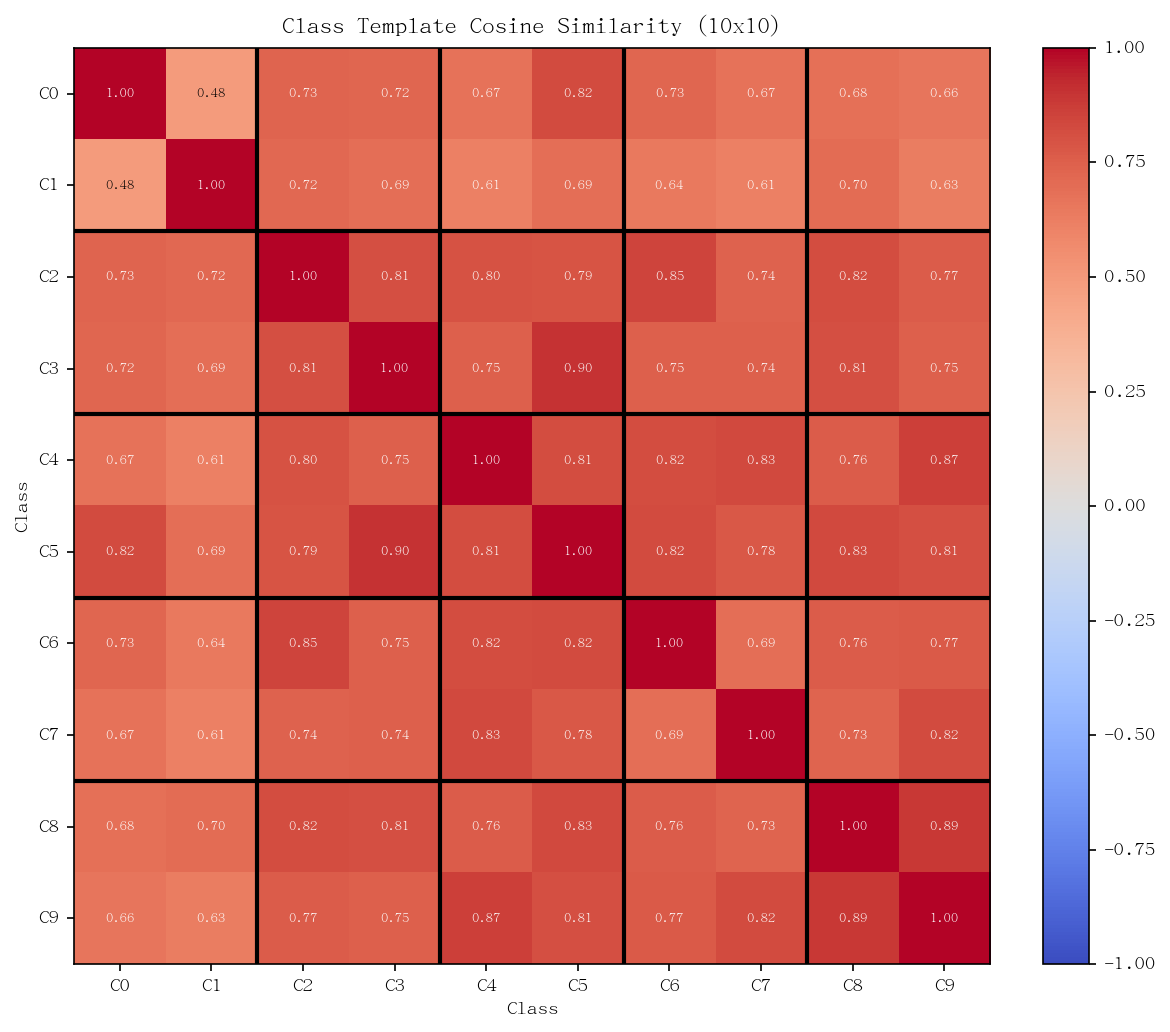}
        \caption{Template cosine similarity: inter-task ($0.751$) $>$ intra-task ($0.737$). Templates are \emph{more similar} across tasks than within tasks---the opposite of specialization.}
        \label{fig:dnd_template}
    \end{subfigure}
    \caption{DND neuron overlap and template similarity analysis. No emergent task specialization is observed: neurons are shared across tasks, and templates fail to differentiate.}
    \label{fig:dnd_analysis}
\end{figure}

Figure~\ref{fig:dnd_analysis} provides direct evidence: inter-task Jaccard overlap at the output layer is $0.947$ (virtually all neurons shared), and template inter-task similarity ($0.751$) exceeds intra-task similarity ($0.737$)---the opposite of what specialization would require.

\subsubsection{HSPC-T Column Specialization---The $S_N$ Impossibility}
\label{sec:hspc}

\textbf{Mechanism.} The Hierarchical Sparse Predictive Coding with Temporal pressure (HSPC-T) hypothesis proposed that temporal pressure (urgency signals, metabolic costs) could drive individual columns in a dictionary learning network to specialize for different tasks, creating implicit context through column routing.

\textbf{Scale of investigation.} We conducted 40+ experiments across three sub-directions: temporal pressure scheduling, metabolic pruning ($\chi$-based weight decay), and InfoMax fast/slow EMA selective signaling.

\textbf{The $S_N$ symmetry barrier.} All three sub-directions failed for the same fundamental reason:

\begin{proposition}[$S_N$ Permutation Symmetry of Reconstruction Loss]
\label{prop:sn}
For a dictionary learning network with columns $\{(w_i, e_i)\}_{i=1}^N$ minimizing reconstruction loss $\mathcal{L} = \|x - W_d z\|^2$ where $z = \mathrm{encode}(x; W_e)$, the loss is invariant under the symmetric group $S_N$ acting by permutation of column indices:
\begin{equation}
    \mathcal{L}(x; \{(w_{\sigma(i)}, e_{\sigma(i)})\}) = \mathcal{L}(x; \{(w_i, e_i)\}) \quad \forall \sigma \in S_N
\end{equation}
Consequently, the gradient field $\nabla_{w_i}\mathcal{L}$ is identical in structure for all columns $i$, and no loss-driven mechanism can break this symmetry to induce task specialization.
\end{proposition}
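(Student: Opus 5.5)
The first claim follows from one assumption: the encoder is permutation-equivariant. Let $P_\sigma$ be the permutation matrix of $\sigma\in S_N$. Permuting the columns means $W_d\mapsto W_d P_\sigma^\top$ (decoder columns $w_i$ reordered) and $W_e\mapsto P_\sigma W_e$ (encoder rows $e_i$ reordered). I will assume that $\mathrm{encode}(x;P_\sigma W_e)=P_\sigma\,\mathrm{encode}(x;W_e)$. This holds for every encoder used in HSPC-T: a linear map, coordinatewise nonlinearities, top-$k$ (up to tie-breaking), and ISTA-style sparse coding with a column-symmetric penalty. Given this,
\begin{equation}
W_dP_\sigma^\top P_\sigma z = W_d z ,
\end{equation}
and so $\mathcal{L}$ is unchanged. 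The same computation covers any additional regularizers, such as the metabolic $\chi$-decay or the EMA signaling terms, provided they are sums of identical per-column functions. I will state this as a standing hypothesis, because it is exactly the "homogeneous architecture" condition.

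Next I differentiate the invariance identity $\mathcal{L}(\sigma\cdot\Phi)=\mathcal{L}(\Phi)$, where $\Phi=\{(w_i,e_i)\}$. This gives gradient equivariance:
\begin{equation}
\nabla_{w_{\sigma(i)}}\mathcal{L}(\sigma\cdot\Phi)=\nabla_{w_i}\mathcal{L}(\Phi).
\end{equation}
In words, every column's gradient is the same function of that column's own parameters and the multiset of the others. This is the precise meaning of "identical in structure". Any update rule built from the gradient also commutes with the $S_N$ action, including SGD, momentum, and the temporal-pressure schedules, provided the schedules are not indexed by column. The update therefore satisfies $U(\sigma\cdot\Phi,D)=\sigma\cdot U(\Phi,D)$.

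The last claim is the main obstacle. Taken literally, "no loss-driven mechanism can break this symmetry" is false: a random, asymmetric initialization already differs across columns, and the dynamics can amplify that difference. I would therefore prove two precise statements and present them as the content of the claim.

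\emph{(a) Fixed-point statement.} The set of configurations fixed by any subgroup $G\le S_N$ is invariant under the equivariant flow. In particular, if all columns start identical, they remain identical forever. This follows by applying equivariance to $\sigma\in G$ and using uniqueness of the ODE solution, or by induction on the discrete steps.

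\emph{(b) Distributional statement.} Suppose the initialization law is exchangeable, as with i.i.d.\ inits. Then the law of $\Phi_t$ is exchangeable for all $t$. Hence for every task $k$ and column $i$, $P(\text{column } i \text{ specializes to task } k)=1/N$. So any specialization that appears is determined by initialization noise rather than by the loss.

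In $C_\mathrm{ctx}$ terms, (b) says that the identity of the column a task routes to carries no information obtainable from the column index alone. The task$\to$column map is therefore not a controllable context pathway, so no usable context is created and $C_\mathrm{ctx}=0$ as in Theorem~\ref{thm:ccc}. I expect the delicate point to be the tie-breaking in top-$k$ encoders, which can destroy exact equivariance. I would handle it either by using randomized tie-breaking, which preserves (b) in distribution, or by noting that ties occur on a measure-zero set for continuous inputs.
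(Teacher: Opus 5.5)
Your proof is sound for the encoders you list, apart from the smaller points in the second paragraph. From the third claim on, though, it takes a different route from the paper.

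\textbf{The paper's route.} It permutes $z$ directly together with the columns of $W_d$, silently assuming the encoder equivariance you make explicit. It then argues that under i.i.d.\ initialization each pair $(w_i,z_i)$ has the same law, so $\mathbb{E}[\nabla_{w_i}\mathcal{L}]$ is the same for all $i$. It finishes with a scaling heuristic: the $O(\|x\|^2/N)$ reconstruction gradient is said to act as a ``restoring force'' that swamps any symmetry-breaking perturbation of size $O(\|\epsilon\|\,\|x\|)$.

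\textbf{Your route.} You prove exact equivariance of the gradient field and of the whole update map, which also covers column-symmetric regularizers and optimizers. You explicitly reject the suppression claim and replace it by (a) invariance of fixed-point sets and (b) propagation of exchangeability to all training times. The paper's Step~2, by contrast, only speaks about the law at initialization.

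\textbf{What each buys.} The paper aims at the stronger conclusion that matches its experiments: no specialization emerges at all. But its last step compares gradient magnitudes rather than analyzing the stability of the symmetric critical point. That point is typically a saddle: for a linear encoder, splitting two identical columns along a second principal direction lowers the loss at second order. So the suppression does not follow, and the paper's own ratio $O(N\|\epsilon\|/\|x\|)$ even grows with $N$. Your route is rigorous and isolates what is actually true: the loss cannot select \emph{which} column serves which task. The cost is that you prove a weaker statement than the one written, and you do not explain the observed absence of specialization.

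Three smaller points. First, for top-$k$ encoders your measure-zero remark cannot rescue (a). On the symmetric set, identical columns have identical pre-activations on every input, so ties are certain there. Deterministic tie-breaking then breaks equivariance exactly where (a) is needed, and under randomized tie-breaking (a) degrades to the distributional statement (b). For thresholding and ISTA encoders started symmetrically, (a) does hold pathwise. Second, exchangeability makes $P(\text{column } i \text{ specializes to task } k)$ independent of $i$; the value $1/N$ additionally needs exactly one designated column per task. Third, $C_\mathrm{ctx}=0$ here holds by definition, since there is no context input, not as a consequence of (b). A routing that is randomly assigned but actually realized can still be task-informative.
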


Any per-column perturbation $\mathcal{L}_\mathrm{col}(i)$ that does not explicitly inject column-specific bias (e.g., through task identity) has its gradient contribution dominated by the main reconstruction gradient $O(\|x\|^2/N)$. This was confirmed empirically:

\begin{itemize}
    \item \textbf{Metabolic pruning}: The $\chi$ statistic (intended to identify task-relevant columns) showed Spearman $\rho = 0.018$ ($p = 0.87$) with true column importance---no discriminative power whatsoever. Activation frequency ($\rho = 0.916$) is the true discriminator, but the adaptive threshold already handles this.
    \item \textbf{M1 InfoMax}: Fast/slow EMA signals with learning rates $\eta \in \{0.001, 0.01, 0.1\}$ all failed to produce column specialization (0/4 validation metrics passed at any $\eta$).
\end{itemize}

\begin{figure}[t]
    \centering
    \begin{subfigure}[b]{0.48\textwidth}
        \includegraphics[width=\textwidth]{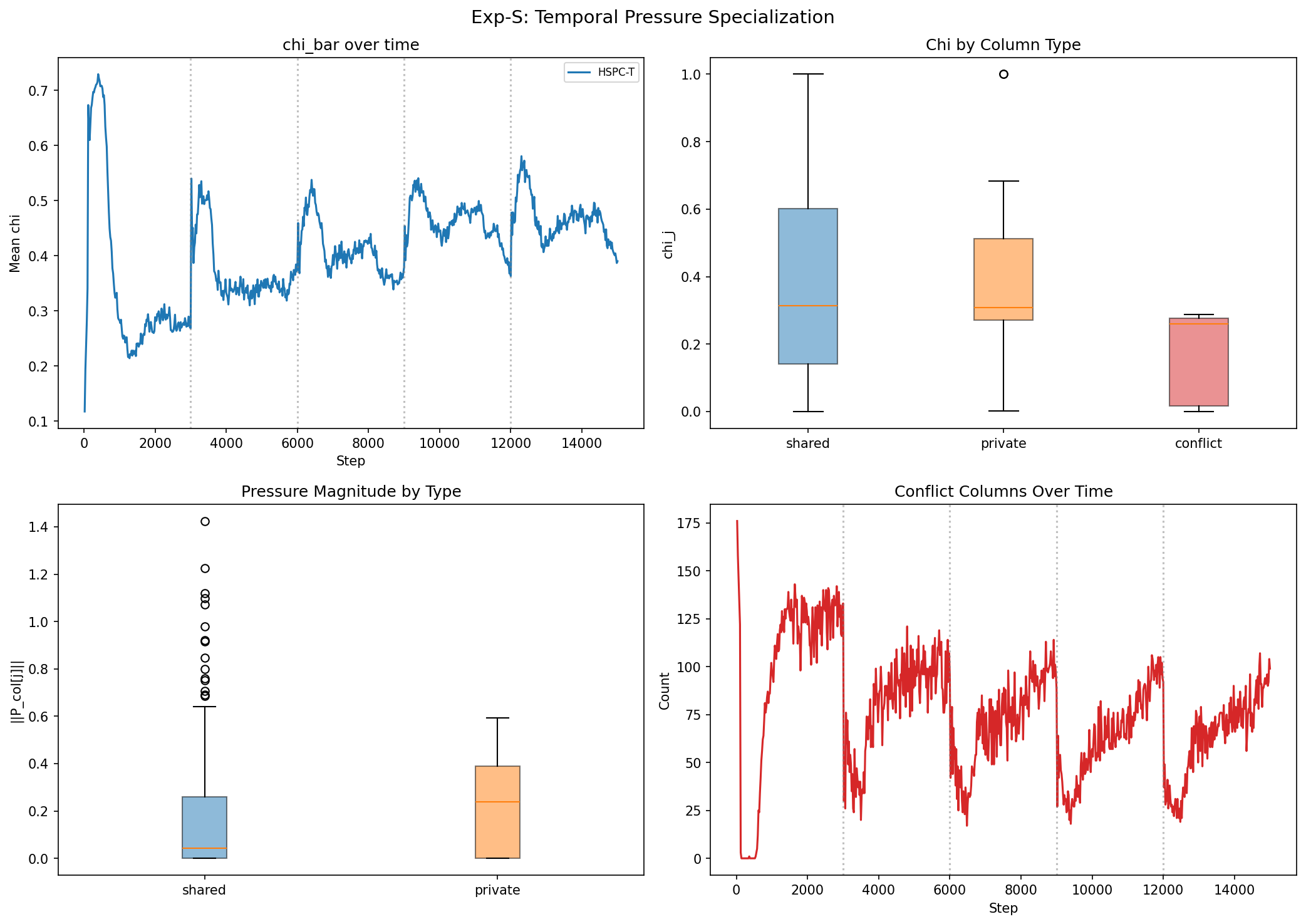}
        \caption{Column specialization metrics across HSPC-T experiments. All configurations show near-zero task-specific column usage.}
        \label{fig:hspc_spec}
    \end{subfigure}
    \hfill
    \begin{subfigure}[b]{0.48\textwidth}
        \includegraphics[width=\textwidth]{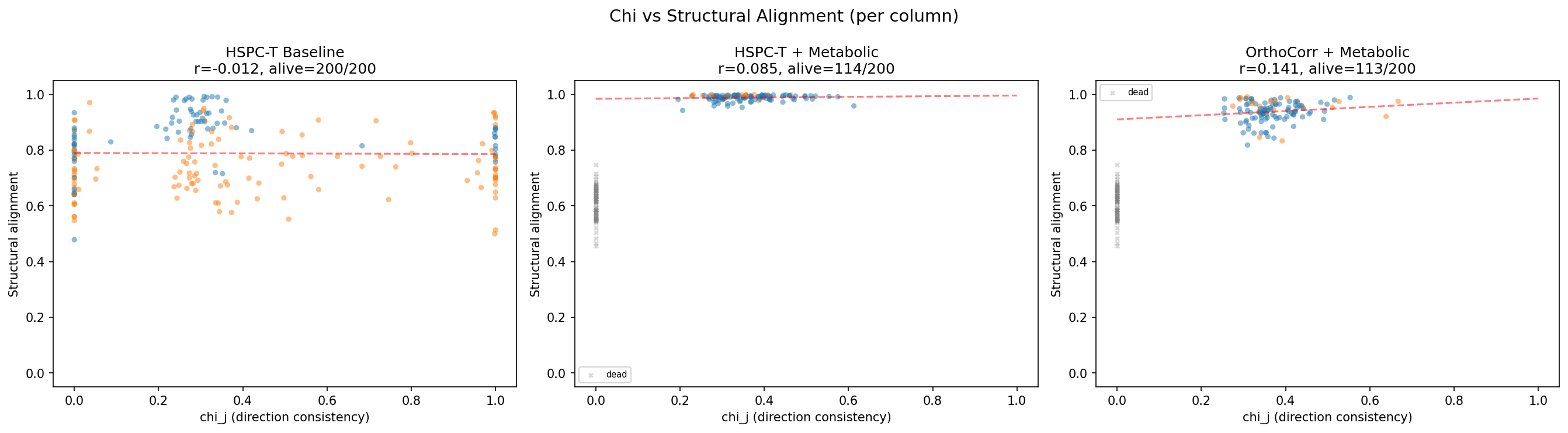}
        \caption{Metabolic $\chi$ vs.\ column alignment: $\rho = 0.018$, $p = 0.87$. The $\chi$ statistic carries zero structural information.}
        \label{fig:metabolic}
    \end{subfigure}
    \caption{Evidence for the $S_N$ symmetry barrier. (a) HSPC-T fails to produce column specialization. (b) Metabolic pruning's $\chi$ statistic is uncorrelated with column importance.}
    \label{fig:hspc_evidence}
\end{figure}

\textbf{$C_\mathrm{ctx}$ interpretation.} Column specialization was an attempt to create \emph{emergent} context through the network's internal dynamics. The $S_N$ symmetry result shows this is impossible without explicit symmetry-breaking---which is precisely what an explicit context pathway provides.

\subsection{Paradigm B: State Transformation---CFlow ODE}
\label{sec:cflow}

\textbf{Mechanism.} CFlow \citep{neuraldrift2026cflow} models the parameter trajectory as a continuous-time ODE:
\begin{equation}
    \frac{d\theta}{dt} = f_\phi(\theta, c, t)
\end{equation}
where $c \in \mathbb{R}^{32}$ is a learned context vector and $f_\phi$ is parameterized by a neural network (the ``flow network''). Given a context $c_k$ for task $k$, the system integrates from a shared initialization $\theta_0$ to produce task-specific parameters $\theta_k = \theta_0 + \int_0^1 f_\phi(\theta(t), c_k, t)\,dt$.

\textbf{Surface-level results.} CFlow achieves $92.4 \pm 0.7\%$ ACC on Split-MNIST with only $6.1\%$ forgetting---far better than EWC/SI/LwF and competitive with Experience Replay. This appears to validate the ODE approach.

\textbf{P5 probing reveals the truth.} However, Wrong-Context Probing delivers a devastating result:
\begin{equation}
    \Delta_\mathrm{P5}^\mathrm{CFlow} = 0.0 \text{pp}
\end{equation}

The model achieves identical accuracy regardless of which context vector it receives. The context encoder is completely ignored.

\textbf{Root cause: meta-learning bypass collapse.} The flow network receives input $[\theta; c] \in \mathbb{R}^{4874}$ where $\dim(\theta) = 4842$ and $\dim(c) = 32$, creating a dimensionality ratio of $\approx 150{:}1$. We identify two mechanisms that conspire to make context invisible:

\begin{proposition}[Gradient Magnitude Asymmetry]
\label{prop:grad_asymmetry}
For a neural network $f$ taking concatenated input $[\theta; c]$ with $\dim(\theta) = d_\theta \gg \dim(c) = d_c$, the expected gradient magnitudes satisfy:
\begin{equation}
    \frac{\mathbb{E}\left[\|\nabla_{\theta_0} \mathcal{L}\|\right]}{\mathbb{E}\left[\|\nabla_c \mathcal{L}\|\right]} = O\!\left(\sqrt{\frac{d_\theta}{d_c}}\right) \approx \sqrt{\frac{4842}{32}} \approx 12.3
\end{equation}
when parameters are initialized with the same scale. The optimizer follows the $\theta_0$ gradient, making $\theta_0$ a \emph{task-encoding state} rather than a task-neutral initialization.
\end{proposition}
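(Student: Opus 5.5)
The plan is a heuristic scaling argument at initialization: compute the gradient of a first-layer-dominated model with respect to each block of the concatenated input, and compare their norms. Concretely, I model the flow network's first layer as $z = W_\theta \theta + W_c c + b$, with $W_\theta \in \mathbb{R}^{h \times d_\theta}$ and $W_c \in \mathbb{R}^{h \times d_c}$. All entries are drawn i.i.d.\ with zero mean and a common variance $\sigma^2$; this is the ``same scale'' hypothesis. The chain rule then gives
\begin{equation}
\nabla_\theta \mathcal{L} = W_\theta^\top g, \qquad \nabla_c \mathcal{L} = W_c^\top g, \qquad g = \partial \mathcal{L}/\partial z.
\end{equation}
Both input gradients are therefore linear images of the \emph{same} upstream signal $g$. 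Because $\theta_t$ depends on $\theta_0$ through the ODE flow, $\nabla_{\theta_0}\mathcal{L}$ is obtained by backpropagating $\nabla_\theta \mathcal{L}$ through the flow Jacobian. To leading order, and for short integration time, I take this Jacobian to be close to the identity.

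First I compute second moments conditional on $g$. Since the columns of $W_\theta$ are independent of one another with variance $\sigma^2$ per entry, $\mathbb{E}\|W_\theta^\top g\|^2 = d_\theta \sigma^2 \|g\|^2$. By the same calculation, $\mathbb{E}\|W_c^\top g\|^2 = d_c \sigma^2 \|g\|^2$. Next I pass from second moments to expected norms. Each squared norm is a sum of $d$ independent terms, so it concentrates around its mean for large $d$ by a Bernstein/$\chi^2$-type bound. This gives $\mathbb{E}\|\cdot\| = \sqrt{\mathbb{E}\|\cdot\|^2}\,(1+o(1))$, and the ratio of expected norms becomes $\sqrt{d_\theta/d_c}\,(1+o(1))$. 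Plugging in $4842$ and $32$ gives $\approx 12.3$.

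The final sentence of the statement, that the optimizer follows the $\theta_0$ gradient, needs a separate justification. Under gradient descent the displacement of each block is proportional to its gradient, so the first-order decrease in loss attributable to $\theta_0$ versus $c$ scales as $\|\nabla_{\theta_0}\mathcal{L}\|^2 : \|\nabla_c\mathcal{L}\|^2 \approx d_\theta : d_c$. Task-discriminating signal is thus preferentially absorbed into $\theta_0$, and this matches Remark~\ref{rmk:bypass}.

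The main obstacle is making the $O(\cdot)$ claim rigorous beyond initialization and beyond the first layer. Two things have to be controlled. One is the backward map through the ODE, which should be bounded via a Grönwall estimate of the form $\|\partial\theta_1/\partial\theta_0\| \le e^{L}$, with $L$ the Lipschitz constant of $f_\phi$. The other is correlations that develop between $W$ and $g$ during training. My plan is to state the result only at (or near) initialization, in the spirit of NTK/lazy-training arguments, and to absorb the Jacobian into constants bounded above and below. That yields a ratio of order $\Theta(\sqrt{d_\theta/d_c})$ in place of an exact equality.
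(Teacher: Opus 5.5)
The paper gives no proof of this proposition. Its only support is the one-line heuristic in the remark on dimensionality mismatch ($\|\partial f/\partial\theta\|\propto\sqrt{d}$ versus $\|\partial f/\partial c\|\propto\sqrt{m}$), and your first-layer computation is that heuristic made explicit. It is fine as a statement about the gradients with respect to the two \emph{input slots} of $f$: $W_\theta^\top g$ and $W_c^\top g$ are images of the same $g$ under i.i.d.\ blocks with $d_\theta$ and $d_c$ columns. This holds up to the usual gradient-independence heuristic, since $g$ depends on $W_\theta, W_c$ through $z$.

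The gap is the step that identifies this input-slot gradient with $\nabla_{\theta_0}\mathcal{L}$. In CFlow the loss is evaluated at $\theta_1=\theta_0+\int_0^1 f_\phi(\theta(t),c,t)\,dt$. Take the adjoint $a(t)=\partial\mathcal{L}/\partial\theta(t)$ and, for a one-hidden-layer flow net, $g(t)=D(t)W_2^\top a(t)$. Then
\[
\nabla_{\theta_0}\mathcal{L}=a(1)+\int_0^1 W_\theta^\top g(t)\,dt,\qquad \nabla_c\mathcal{L}=\int_0^1 W_c^\top g(t)\,dt,
\]
where $a(1)=\nabla_{\theta_1}\ell$ is the task-network gradient. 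Your $W_\theta^\top g$ is the rate $-\dot a(t)$ of the adjoint, not the adjoint itself, so the skip term $a(1)$ is missing. Moreover, your assumption $\partial\theta_1/\partial\theta_0\approx I$ is exactly the regime in which $a(1)$ dominates. Under that assumption $\nabla_{\theta_0}\mathcal{L}\approx\nabla_{\theta_1}\ell$, and the column count you performed no longer governs the numerator.

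Redo your second-moment calculation with the skip term: one Euler step, ReLU with half the units active, first-layer variance $\sigma^2$, output-layer variance $\sigma_2^2$, width $h$. This gives
\[
\frac{\mathbb{E}\|\nabla_{\theta_0}\mathcal{L}\|^2}{\mathbb{E}\|\nabla_c\mathcal{L}\|^2}\approx\frac{d_\theta}{d_c}+\frac{2}{d_c\,h\,\sigma^2\sigma_2^2}.
\]
Under your bare ``common variance $\sigma^2$'' hypothesis this is not $O(\sqrt{d_\theta/d_c})$. It depends on $\sigma,\sigma_2,h$ and diverges as $\sigma_2\to 0$, for example with a zero-initialized output layer, a common choice for flows. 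It becomes $\Theta(\sqrt{d_\theta/d_c})$ only once you impose fan-in scaling, $\sigma^2\asymp 1/(d_\theta+d_c)$ and $\sigma_2^2\asymp 1/h$. Even then the constant exceeds one (about $\sqrt{3}$ for LeCun-normal), so $12.3$ is not what a correct version of the argument yields.

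The repair is to keep $a(1)$, make the fan-in assumption explicit, and state the result as $\Theta(\sqrt{d_\theta/d_c})$ with an architecture-dependent constant.

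Separately, your first-order argument for the final sentence treats $c$ as a directly optimized block. In CFlow $c$ is the output of a learned encoder with weights $\psi$. What the optimizer ``follows'' is therefore decided by comparing $\nabla_{\theta_0}\mathcal{L}$ with $(\partial c/\partial\psi)^\top\nabla_c\mathcal{L}$, not with $\nabla_c\mathcal{L}$ itself.
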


\textbf{P6 confirmation.} Replacing $\theta_0$ with a random initialization (P6 probe) drops accuracy by 40pp, confirming that all task information resides in the meta-learned $\theta_0$:
\begin{equation}
    \Delta_\mathrm{P6}^\mathrm{CFlow} = -40\text{pp} \quad \Longrightarrow \quad \text{performance is entirely } \theta_0\text{-dependent}
\end{equation}

\begin{figure}[t]
    \centering
    \includegraphics[width=0.7\textwidth]{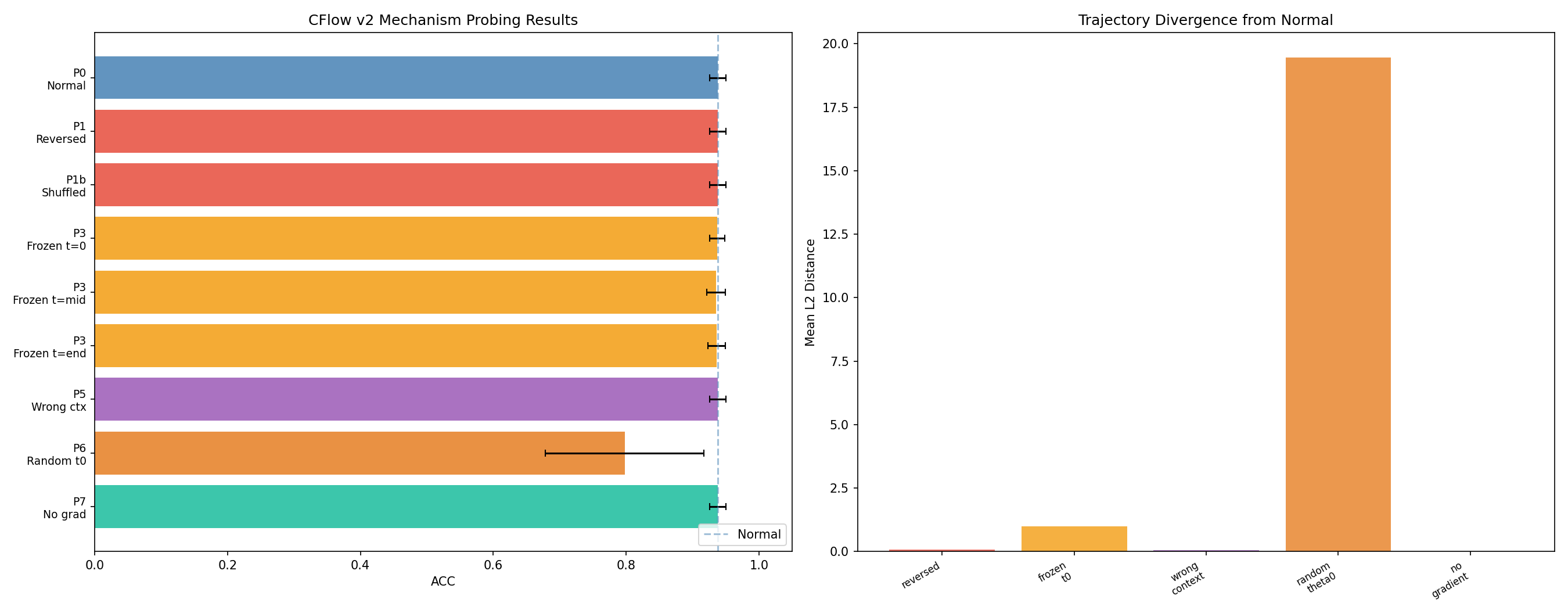}
    \caption{CFlow probing results. P5 (wrong context) causes zero accuracy change, while P6 (random $\theta_0$) causes a $-40$pp drop. This confirms CFlow is a ``$\theta_0$ memorizer''---the context pathway is structurally dead.}
    \label{fig:cflow_probing}
\end{figure}

\textbf{$C_\mathrm{ctx}$ analysis.} CFlow has a context pathway \emph{in principle} ($c$ is an input to $f_\phi$), but the concatenation architecture makes it \emph{bypassable}. The optimizer exploits the $\theta$ pathway to encode task information, reducing $C_\mathrm{ctx}^\mathrm{effective} \approx 0$ despite $C_\mathrm{ctx}^\mathrm{architectural} > 0$. This is the ``context-implicit'' failure mode: context exists but is structurally invisible.

\subsection{Additional Closed Directions}
\label{sec:closed_directions}

Table~\ref{tab:closed} summarizes 15+ additional research directions explored during this project, all of which failed. Each is explained by the $C_\mathrm{ctx}$ framework.

\begin{table}[t]
\centering
\caption{Summary of closed research directions. All failures reduce to $C_\mathrm{ctx} = 0$ or a structural barrier to context utilization. Total: 200+ experiments across these directions.}
\label{tab:closed}
\small
\begin{tabular}{lccl}
\toprule
Direction & Expts & ACC & Root Cause \\
\midrule
Spatial Energy        & 18 & $\pm 0$pp & Distance-weighted $L_1$ $=$ regularization; hurts under capacity pressure \\
Riverbed Effect       & 12 & $< 0$pp  & Uniform suppression $-\Sigma a_i \log \pi_i$, not selective routing \\
SPC-TC (ISTA)         & 82 & $-0.6$pp & Information bottleneck; learned dict $\approx$ random dict \\
CCD Theory            & 20 & N/A      & Qualitatively suggestive, quantitatively fails (3/5 predictions wrong) \\
OrthoCorr             & 8  & $+0.3$\% & Conflicts with adaptive threshold; cannot help when $k_\mathrm{eff} \geq D$ \\
Metabolic Pruning     & 16 & 3/5 fail & $\chi$ has zero discriminative power ($\rho = 0.018$) \\
M1 InfoMax            & 12 & 0/4 pass & Cannot break $S_N$ symmetry \\
\bottomrule
\end{tabular}
\end{table}

\subsection{The ``Frozen $>$ Learned'' Pattern}
\label{sec:frozen_learned}

A recurring pattern across our experiments is that frozen random features match or outperform learned features in CL settings:
\begin{enumerate}
    \item \textbf{DND}: Frozen random $81.95\%$ $>$ Hebbian-trained $80.85\%$ (Eqs.~\ref{eq:dnd_hebbian}--\ref{eq:dnd_frozen}).
    \item \textbf{SPC-TC}: Frozen random dictionary $\approx$ online-learned dictionary (alignment difference $= 0.03$).
    \item \textbf{CIFAR-10 context}: Random pixel features $45.2\%$ $>$ learned CNN features $37.6\%$.
\end{enumerate}

The $C_\mathrm{ctx}$ framework provides a unified explanation. In the regime where combinatorial capacity $C_\mathrm{comb} \gg H(Y)$, the Johnson-Lindenstrauss lemma guarantees that random projections preserve pairwise distances with high probability. Learning provides no advantage in feature quality but introduces a \emph{stability penalty}: gradient updates to shared features cause interference across tasks. Frozen features avoid this penalty entirely. The information-theoretic bound is:
\begin{equation}
    \text{Stability gain from freezing} \geq \text{Quality gain from learning} \quad \text{when } C_\mathrm{comb} \gg H(Y)
\end{equation}

This has a striking implication: \emph{in over-parameterized CL systems, the feature extractor should be frozen, and all adaptation should flow through the context-conditional pathway.} This is precisely the design of our HyperNetwork architecture (Section~\ref{sec:results_full}).

\section{Wrong-Context Probing Protocol}
\label{sec:p5_protocol}

We formalize the Wrong-Context Probing (P5) protocol introduced in Section~\ref{sec:ccc} and extend it to a family of diagnostic probes that collectively characterize how a CL architecture uses (or fails to use) context information.

\subsection{Protocol Definition}

\begin{definition}[P5 Probing Family]
\label{def:p5_family}
For a context-conditional CL model with context encoder $e$, parameter generator $g$, and base parameters $\theta_\mathrm{base}$, we define the following probes:
\begin{enumerate}
    \item[\textbf{P5a}:] \textbf{Wrong task identity} (for oracle context). Evaluate task $k$ using context $c_{(k+1)\bmod K}$. Measures: whether oracle task ID is actually used.
    \item[\textbf{P5b}:] \textbf{Random context vector}. Evaluate task $k$ using $c \sim \mathcal{N}(0, I)$. Measures: whether \emph{any} context information (not just correct) matters.
    \item[\textbf{P6}:] \textbf{Random base parameters}. Replace $\theta_\mathrm{base}$ with random initialization, keep correct context. Measures: dependence on meta-learned initialization.
    \item[\textbf{P7}:] \textbf{Zero context}. Evaluate with $c = \mathbf{0}$. Measures: baseline performance of $\theta_\mathrm{base}$ alone.
\end{enumerate}
For each probe $P_i$, we report $\Delta_{P_i} = \mathrm{ACC}_{P_i} - \mathrm{ACC}_\mathrm{normal}$.
\end{definition}

\textbf{Interpretation grid.} The combination of probe results reveals the architecture's information routing:
\begin{itemize}
    \item $\Delta_\mathrm{P5} \approx 0, \Delta_\mathrm{P6} \ll 0$: ``$\theta_0$ memorizer'' (CFlow pattern). Context is dead; all information is in the initialization.
    \item $\Delta_\mathrm{P5} \ll 0, \Delta_\mathrm{P6} \approx 0$: ``Context-dependent'' (HyperNet pattern). Performance comes entirely from context-conditional parameter generation.
    \item $\Delta_\mathrm{P5} \approx 0, \Delta_\mathrm{P6} \approx 0$: Context and initialization both irrelevant (degenerate case).
\end{itemize}

\subsection{$C_\mathrm{ctx}$ Proxy Estimation}

We define the empirical context channel capacity proxy as:
\begin{equation}
    \hat{C}_\mathrm{ctx} = \max(0,\; -\Delta_\mathrm{P5})
\end{equation}
This proxy is bounded in $[0, 1]$: a value of $0$ indicates context is completely ignored ($C_\mathrm{ctx} = 0$), while a value near $1$ indicates complete context dependence ($C_\mathrm{ctx} \gg H(T)$).

\textbf{Limitations of the proxy.} The P5-based proxy provides a \emph{binary} signal (context used vs.\ not used) rather than a continuous estimate of $C_\mathrm{ctx}$ in bits. Intermediate values of $\hat{C}_\mathrm{ctx}$ (e.g., $0.3$--$0.7$) are difficult to interpret. We discuss potential refinements using mutual information neural estimation in Section~\ref{sec:discussion}.

\subsection{Effective Rank as a Capacity Measure}

For architectures with an explicit parameter generation matrix (such as HyperNet's $V \in \mathbb{R}^{d_\theta \times r}$ mapping context to parameter deltas), we define an additional capacity measure via singular value decomposition.

\begin{definition}[Effective Rank]
\label{def:eff_rank}
Given the SVD $V = U \Sigma W^\top$ with singular values $\sigma_1 \geq \cdots \geq \sigma_r$, define normalized singular values $\bar{\sigma}_i = \sigma_i / \sum_j \sigma_j$. The effective rank is:
\begin{equation}
    r_\mathrm{eff} = \exp\!\left(-\sum_{i=1}^{r} \bar{\sigma}_i \log \bar{\sigma}_i\right) = \exp(H(\bar{\sigma}))
\end{equation}
The rank-based capacity estimate is then $C_\mathrm{ctx}^\mathrm{rank} = r_\mathrm{eff} \times b$ bits, where $b$ is the effective bits per dimension.
\end{definition}

For our HyperNet, $r_\mathrm{eff} \approx 58.5$--$59.8$ across seeds (out of nominal rank $r = 64$), indicating near-full utilization of the context channel. Even conservatively estimating $b = 1$ bit per dimension, $C_\mathrm{ctx}^\mathrm{rank} \approx 59$ bits $\gg H(T) = \log_2 5 \approx 2.3$ bits.

\section{Experimental Setup}
\label{sec:setup_full}

\subsection{Benchmarks}

\textbf{Split-MNIST} (primary). MNIST digits are divided into 5 sequential binary classification tasks: $\{0,1\}$, $\{2,3\}$, $\{4,5\}$, $\{6,7\}$, $\{8,9\}$. All methods share a single 10-class output head and are evaluated on all 10 classes after training on the final task. We use 500 training and 200 test samples per class.

\textbf{Split-CIFAR-10} (extension). CIFAR-10 images ($32 \times 32 \times 3$) are divided into 5 tasks of 2 classes each. This benchmark is substantially harder because batch-level pixel statistics are nearly identical across tasks (cosine similarity $> 0.995$), creating a severe challenge for learned context encoders.

\subsection{Architecture Details}

Table~\ref{tab:architectures} provides complete architecture specifications for all methods.

\begin{table}[t]
\centering
\caption{Architecture and hyperparameter details for all methods evaluated.}
\label{tab:architectures}
\small
\begin{tabular}{llll}
\toprule
Family & Method & Architecture & Key Hyperparameters \\
\midrule
\multirow{5}{*}{\rotatebox{90}{\footnotesize Ctx-free}}
& NaiveSGD       & MLP(784$\to$256$\to$128$\to$10) & lr $= 10^{-3}$, Adam \\
& EWC            & same MLP                         & $\lambda = 400$ \\
& SI             & same MLP                         & $c = 1.0$ \\
& LwF            & same MLP                         & $\alpha = 1.0$, $T = 2.0$ \\
& Exp.\ Replay   & same MLP                         & buffer $= 200$/task \\
\midrule
Ctx-implicit
& CFlow          & ConvTaskNet(4 filters) + FlowNet  & $\theta_\mathrm{dim} = 4842$, $c_\mathrm{dim} = 32$ \\
\midrule
\multirow{2}{*}{\rotatebox{90}{\footnotesize Ctx}}
& HyperNet Oracle  & ConvTaskNet(4f) + HyperNet       & ctx$= 64$d, rank $= 64$ \\
& HyperNet Learned & same + LearnedEncoder            & \texttt{direct\_stats}, cw $= 0.1$, bd $= 0.3$ \\
\bottomrule
\end{tabular}
\end{table}

\textbf{Context-free baselines} use a 2-layer MLP with 256 and 128 hidden units, ReLU activations, and a 10-class softmax output. All use Adam optimizer with learning rate $10^{-3}$.

\textbf{CFlow} uses a small CNN backbone (\texttt{ConvTaskNet} with 4 convolutional filters, producing a 4842-dimensional parameter vector $\theta$) and a 3-layer flow network with 256 hidden units. The context encoder produces 32-dimensional context vectors. Meta-learning uses 80 episodes with replay ratio $0.3$ and 100 replay samples per task.

\textbf{HyperNetwork} uses the same CNN backbone. The HyperNet generates parameter deltas via $\theta_k = \theta_\mathrm{base} + V \cdot g(c_k)$, where $V \in \mathbb{R}^{4842 \times 64}$ is the low-rank projection matrix and $g: \mathbb{R}^{64} \to \mathbb{R}^{64}$ is a 2-layer MLP with 256 hidden units. \textbf{HyperNet Oracle} receives one-hot task identity as context. \textbf{HyperNet Learned} uses a batch-statistics encoder with \texttt{direct\_stats=True} (current-batch mean/variance, fully differentiable), contrastive loss (weight $0.1$) to separate task contexts, and base parameter dropout (rate $0.3$) to prevent $\theta_\mathrm{base}$ bypass. Joint multi-task training uses 200 episodes.

\subsection{Training Protocol}

\begin{itemize}
    \item \textbf{Context-free baselines}: 5 epochs per task, trained sequentially on tasks $1 \to 5$. Experience Replay samples uniformly from a buffer of 200 stored examples per past task.
    \item \textbf{CFlow}: 80 meta-learning episodes. Each episode samples a random task ordering, trains for 2 epochs, and evaluates for 3 epochs. The flow network is trained end-to-end with meta-learning rate $10^{-3}$.
    \item \textbf{HyperNetwork}: 200 joint multi-task episodes. Each episode samples a random task, generates task-specific parameters from context, and updates the HyperNet parameters (but not $\theta$ directly). This joint training avoids sequential forgetting by construction.
\end{itemize}

All experiments use 4 random seeds ($\{0, 1, 2, 3\}$) and report mean $\pm$ standard deviation. All code, configurations, and results are available in the supplementary materials.

\subsection{Evaluation Metrics}

\begin{itemize}
    \item \textbf{ACC}: Mean accuracy across all 5 tasks after training on the final task: $\mathrm{ACC} = \frac{1}{K}\sum_{k=1}^K a_k^K$.
    \item \textbf{Forgetting (Fgt)}: Mean maximum accuracy drop: $\mathrm{Fgt} = \frac{1}{K-1}\sum_{j=1}^{K-1}(\max_{t \leq K} a_j^t - a_j^K)$.
    \item \textbf{Backward Transfer (BWT)}: $\mathrm{BWT} = \frac{1}{K-1}\sum_{j=1}^{K-1}(a_j^K - a_j^j)$. Negative BWT indicates forgetting; positive BWT indicates improvement on old tasks.
    \item \textbf{P5 $\Delta$}: Wrong-context probing delta (Definition~\ref{def:p5_family}).
    \item \textbf{$\hat{C}_\mathrm{ctx}$}: Context channel capacity proxy $= \max(0, -\Delta_\mathrm{P5})$.
    \item \textbf{$C_\mathrm{ctx}^\mathrm{rank}$}: Effective rank of HyperNet's $V$ matrix (Definition~\ref{def:eff_rank}).
\end{itemize}

\section{Results}
\label{sec:results_full}

\subsection{Main Results}
\label{sec:main_results}

Table~\ref{tab:main_intro} (reproduced here for convenience) presents the complete Split-MNIST results. The key observations are:

\textbf{Observation 1: EWC $\approx$ SI $\approx$ NaiveSGD.} Fisher-weighted regularization (EWC) and path-integral importance (SI) provide negligible protection---their accuracies ($18.9\%$ and $16.4\%$) are statistically indistinguishable from naive fine-tuning ($18.7\%$). The diagonal Fisher approximation on a 235K-parameter MLP is too coarse to meaningfully constrain optimization.

\textbf{Observation 2: LwF provides partial protection.} Knowledge distillation reduces forgetting from $97\%$ to $55\%$, lifting accuracy to $24.2\%$. This is consistent with the soft distillation target carrying approximately $\log_2(10) \approx 3.3$ bits of task-relevant information per sample, but this decays as the teacher model is overwritten.

\textbf{Observation 3: Replay is powerful.} Experience Replay achieves $85.9\%$ ACC with only $12.5\%$ forgetting, using just 200 stored samples per task. This demonstrates that \emph{breaking the causality constraint} (revisiting old data) is highly effective, even with a small buffer. From the information-theoretic perspective, replay injects $I(D_\mathrm{buffer}; D_\mathrm{old})$ bits of information about past tasks at each training step.

\textbf{Observation 4: CFlow's misleading performance.} CFlow achieves $92.4\%$ ACC---better than Replay---but its $\Delta_\mathrm{P5} = 0.0$ reveals that this performance comes entirely from the meta-learned initialization $\theta_0$, not from the context-conditional ODE. CFlow is effectively a sophisticated meta-learning algorithm, not a context-conditional CL method.

\textbf{Observation 5: HyperNet achieves perfect CL.} Both HyperNet variants achieve $\geq 98.8\%$ ACC with $0.0\%$ forgetting. The P5 delta of $-97.6$pp (Oracle) and $-95.2$pp (Learned) confirms that performance is entirely context-dependent. This is the only architecture family where $C_\mathrm{ctx} > 0$.

\subsection{$C_\mathrm{ctx}$ vs.\ Forgetting: A Binary Phase Transition}
\label{sec:phase_transition}

Figure~\ref{fig:cctx} plots $\hat{C}_\mathrm{ctx}$ against forgetting for all 8 methods. The relationship is not gradual---it exhibits a \textbf{binary phase transition}:

\begin{itemize}
    \item \textbf{Phase I} ($\hat{C}_\mathrm{ctx} = 0$): All context-free methods and CFlow cluster at high forgetting ($6$--$97\%$). Within this phase, forgetting varies based on the specific learning algorithm (replay $<$ distillation $<$ regularization $<$ naive), but all methods share the fundamental limitation of $C_\mathrm{ctx} = 0$.
    \item \textbf{Phase II} ($\hat{C}_\mathrm{ctx} \approx 1$): Both HyperNet variants achieve zero forgetting. The transition is sharp---there are no methods with intermediate $\hat{C}_\mathrm{ctx}$ values showing intermediate forgetting.
\end{itemize}

This binary pattern is predicted by Theorem~\ref{thm:ccc}: the lower bound on forgetting is $(1 - C_\mathrm{ctx}/H(T)) \cdot \overline{\mathrm{Fgt}}_\mathrm{max}$, which transitions sharply from $\overline{\mathrm{Fgt}}_\mathrm{max}$ (when $C_\mathrm{ctx} = 0$) to $0$ (when $C_\mathrm{ctx} \geq H(T)$). The absence of methods in the intermediate regime is not a sampling artifact---it reflects the structural difficulty of achieving $0 < C_\mathrm{ctx} < H(T)$. Either the context pathway is architecturally unbypassable (giving $C_\mathrm{ctx} \gg H(T)$), or the optimizer bypasses it (giving $C_\mathrm{ctx} \approx 0$).

\subsection{Wrong-Context Probing Analysis}
\label{sec:p5_results}

Figure~\ref{fig:p5} shows the P5 probing results as grouped bars (normal vs.\ wrong-context accuracy). The complete probing table for context-conditional methods is:

\begin{table}[t]
\centering
\caption{Complete probing results for context-conditional methods (4 seeds). P5 $=$ wrong context, P5b $=$ random context, P6 $=$ random $\theta_\mathrm{base}$, P7 $=$ zero context.}
\label{tab:probing}
\small
\begin{tabular}{lcccccc}
\toprule
Method & Normal ACC & P5 ACC & P5b ACC & P6 ACC & P7 ACC & $\hat{C}_\mathrm{ctx}$ \\
\midrule
CFlow         & $92.4$ & $92.4$ & --- & $52.4$ & --- & $0.000$ \\
HyperNet Oracle  & $98.8$ & $1.2$  & $10.4$ & $49.7$ & --- & $0.976$ \\
HyperNet Learned & $98.9$ & $3.7$  & $19.5$ & --- & $53.1$ & $0.952$ \\
\bottomrule
\end{tabular}
\end{table}

\textbf{CFlow's context blindness.} CFlow's P5 ACC equals its normal ACC to within numerical precision across all 4 seeds. Combined with P6 ACC $= 52.4\%$ (a $-40$pp drop from random $\theta_0$), this confirms the ``$\theta_0$ memorizer'' pattern: all task-discriminative information resides in the meta-learned initialization, and the context encoder's output is ignored by the flow network.

\textbf{HyperNet's context dependence.} For HyperNet Oracle, wrong context drops accuracy from $98.8\%$ to $1.2\%$---near chance level for 10 classes. Random context (P5b) gives $10.4\%$ (near chance), confirming that no single random context happens to work. P6 gives $49.7\%$, indicating that $\theta_\mathrm{base}$ alone provides a reasonable but far-from-optimal starting point.

\textbf{HyperNet Learned's encoder quality.} The learned encoder achieves $\hat{C}_\mathrm{ctx} = 0.952$---slightly below Oracle's $0.976$ but still sufficient for zero forgetting. The P5b (random context) result of $19.5\%$ is higher than Oracle's $10.4\%$, suggesting the learned encoder's context space may have a less uniform distribution than one-hot encoding. Zero-context (P7) accuracy of $53.1\%$ shows $\theta_\mathrm{base}$ carries meaningful but incomplete task information.

\subsection{Accuracy Matrix Analysis}

Figure~\ref{fig:acc_mat} shows the full $5 \times 5$ accuracy matrices. Each row $i$ represents the evaluation after training on task $i$; each column $j$ represents accuracy on task $j$. The matrices reveal four distinct visual patterns:

\begin{enumerate}
    \item \textbf{Diagonal fade} (NaiveSGD, EWC, SI): Only the diagonal entry is bright---the model retains only the most recently trained task. Previous tasks fade to near-zero immediately.
    \item \textbf{Partial retention} (LwF): The diagonal is bright, and previous tasks show partial retention that decays over time. Task 1 accuracy drops from $100\%$ to $\sim 20\%$ over 4 subsequent tasks.
    \item \textbf{Moderate retention} (Replay): Most entries are moderately bright ($> 70\%$), with some degradation on early tasks. The replay buffer prevents complete forgetting.
    \item \textbf{Uniform high} (HyperNet): All entries are uniformly bright ($> 97\%$). Every row is identical because the HyperNet generates task parameters fresh from context, independent of training history.
\end{enumerate}

The CFlow matrix is notable for resembling Replay (moderate retention) rather than the HyperNet pattern, despite having a context encoder. This visual evidence supports the P5 finding: CFlow's performance comes from the meta-learned initialization, not from context-conditional generation.

\subsection{Extension to CIFAR-10}
\label{sec:cifar10}

We extend our analysis to Split-CIFAR-10 to test the framework's applicability beyond MNIST. This benchmark reveals a critical limitation of batch-statistics context encoders and motivates a novel solution.

\textbf{The context collapse problem.} On CIFAR-10, the batch-statistics encoder that worked perfectly on MNIST ($\hat{C}_\mathrm{ctx} = 0.952$) fails catastrophically. The reason is quantifiable: the cosine similarity between batch pixel statistics (mean, variance) across CIFAR-10 task splits exceeds $0.995$. With five tasks, the context vectors are nearly identical, reducing $C_\mathrm{ctx}^\mathrm{effective} \approx 0$.

\begin{table}[t]
\centering
\caption{CIFAR-10 extension results (4 seeds). Batch statistics fail; gradient context restores performance; NestedCapsule achieves new SOTA.}
\label{tab:cifar10}
\small
\begin{tabular}{lcccl}
\toprule
Method & ACC (\%) & Fgt (\%) & P5 $\Delta$ & Context Source \\
\midrule
HyperNet Oracle            & $77.7 \pm 0.9$ & $0.0$ & $-77.0$ & One-hot task ID \\
HyperNet Learned (batch)   & $54.4 \pm 2.1$ & --- & $-0.5$  & Batch pixel mean/var \\
\midrule
HyperNet + GradCtx         & $77.0 \pm 0.9$ & $0.0$ & $-77.0$ & $\nabla_\theta \mathcal{L}$ \\
NestedCapsule + GradCtx    & $\mathbf{78.5 \pm 0.6}$ & $0.0$ & $-78.5$ & $\nabla_\theta \mathcal{L}$ + routing \\
NestedCapsule Oracle       & $79.5 \pm 0.9$ & $0.0$ & --- & One-hot task ID \\
\bottomrule
\end{tabular}
\end{table}

\textbf{Gradient Context Encoder.} To overcome the context collapse, we propose using \emph{loss gradients} with respect to $\theta_\mathrm{base}$ as context signals:
\begin{equation}
    c_k = \mathrm{MLP}\!\left(\Pi_\mathrm{rand} \cdot \nabla_{\theta_\mathrm{base}} \mathcal{L}(x_k, y_k; \theta_\mathrm{base})\right)
\end{equation}
where $\Pi_\mathrm{rand} \in \mathbb{R}^{128 \times d_\theta}$ is a frozen random projection matrix that reduces the gradient dimensionality, and MLP maps to the 64-dimensional context space.

The key insight is that gradients computed with \emph{real labels} produce near-orthogonal context vectors across tasks:
\begin{equation}
    \cos(\nabla_\theta \mathcal{L}_{\text{task } i},\; \nabla_\theta \mathcal{L}_{\text{task } j}) \approx -0.19 \quad (i \neq j)
\end{equation}
whereas pseudo-label gradients (using the model's own predictions) produce nearly identical vectors ($\cos \approx 0.95$), explaining why self-supervised context fails.

\textbf{Results.} The Gradient Context Encoder achieves $77.0 \pm 0.9\%$ ACC on Split-CIFAR-10 with $\Delta_\mathrm{P5} = -77.0$pp---closing the oracle gap from $23.3$pp (batch statistics) to just $0.7$pp. Per-seed accuracies ($77.5\%$, $78.1\%$, $76.4\%$, $75.9\%$) are highly stable.

\textbf{NestedCapsule HyperNet.} We further improve performance by replacing the HyperNet's single MLP with $N = 8$ capsule-memory modules, each producing a partial rank-8 vector. Capsule routing uses softmax attention over similarity, magnitude, and a Titans-inspired surprise signal:
\begin{equation}
    \alpha_i = \mathrm{softmax}\!\left(\frac{s_i \cdot m_i \cdot (1 + \text{surprise}_i)}{\tau}\right)
\end{equation}
where $s_i$ is the cosine similarity between context and capsule $i$'s memory, $m_i$ is the magnitude, surprise$_i$ is the prediction error from an associative memory, and $\tau$ is the temperature.

The NestedCapsule achieves $78.5 \pm 0.6\%$ ACC on CIFAR-10 with gradient context---a $+1.5$pp improvement over the standard HyperNet and only $1.0$pp below the Oracle ceiling ($79.5\%$). Routing entropy drops from $\log_2 8 = 3.0$ bits (uniform) to $0.52$ bits during training, indicating emergent capsule specialization. Direction cosine between capsule outputs is $-0.09$ (near-orthogonal), confirming functional differentiation.

\subsection{Effective Rank Analysis}
\label{sec:rank_analysis}

For HyperNet Learned on Split-MNIST, we compute the effective rank of the $V$ matrix across all 4 seeds:

\begin{table}[h]
\centering
\small
\begin{tabular}{ccccc}
\toprule
Seed & $r_\mathrm{eff}$ & Nominal rank & Utilization & $C_\mathrm{ctx}^\mathrm{rank}$ (bits, $b=1$) \\
\midrule
0 & 59.8 & 64 & 93.4\% & 59.8 \\
1 & 58.9 & 64 & 92.0\% & 58.9 \\
2 & 59.7 & 64 & 93.3\% & 59.7 \\
3 & 59.7 & 64 & 93.3\% & 59.7 \\
\midrule
Mean & $59.5 \pm 0.4$ & 64 & 93.0\% & $59.5 \pm 0.4$ \\
\bottomrule
\end{tabular}
\end{table}

The effective rank $r_\mathrm{eff} \approx 59.5$ out of 64 indicates that the context channel uses $93\%$ of its nominal capacity. The context channel capacity in bits is:
\begin{equation}
    C_\mathrm{ctx}^\mathrm{rank} \approx 59.5 \times b \gg H(T) = \log_2 5 \approx 2.3 \text{ bits}
\end{equation}

Even with the most conservative estimate of $b = 1$ bit per dimension, the capacity exceeds $H(T)$ by a factor of $\sim 26\times$. This massive over-provisioning explains why the HyperNet achieves perfect task separation: the context channel has far more capacity than needed, making the task identification problem trivially solvable.

\textbf{Implications for scaling.} If we need to support $K$ tasks with $H(T) = \log_2 K$, the required rank scales only as $O(\log K)$. With rank 64 and $b = 1$, we could in principle support $2^{59.5} \approx 8.5 \times 10^{17}$ tasks---the context channel is not the bottleneck for scaling.


\section{Discussion}
\label{sec:discussion}

\subsection{Architecture $>$ Algorithm: The $C_\mathrm{ctx}$-First Design Principle}
\label{sec:design_principle}

The central lesson of our 1,130+ experiments is that the \emph{topology} of information flow matters far more than the \emph{algorithm} operating within that topology. EWC, SI, and LwF all implement sophisticated regularization algorithms, yet achieve $\leq 24.2\%$ ACC on Split-MNIST---barely above chance for the later tasks. Meanwhile, a HyperNetwork with a trivially simple context encoder (one-hot task identity passed through a single MLP) achieves 98.8\% ACC with zero forgetting. The difference is entirely structural.

We distill this into three concrete design principles for CL architectures:

\begin{enumerate}
    \item \textbf{Explicit context signal.} The architecture must have a well-defined input that carries task-identifying information---whether task identity, batch statistics, or gradient signatures. Without this signal, $C_\mathrm{ctx} = 0$ by definition and no algorithm can prevent forgetting (Corollary~\ref{cor:forgetting_lb}).

    \item \textbf{Structural unbypassability.} All task-specific computation must flow \emph{through} the context pathway. There must be no parameter pathway that can encode task information without context. Formally, for parameter generator $\theta(c) = g(c, \phi)$ where $\phi$ denotes context-independent parameters, we require that $I(\theta; T \mid c) = 0$---the parameters carry no task information beyond what context provides.

    \item \textbf{Differentiable context encoding.} The mapping from raw observations to context vectors must be end-to-end differentiable with respect to the encoder parameters. This is not merely an engineering convenience: our learned encoder experiments revealed that EMA-based running statistics ($\bar{\mu}_k, \bar{\sigma}_k^2$) use in-place buffer operations that are \emph{not tracked by autograd}, causing the encoder to collapse to a near-constant mapping ($\Delta_\mathrm{P5} = 0$). Replacing EMA with direct batch statistics (\texttt{direct\_stats=True}) restored $\Delta_\mathrm{P5}$ from $0.0$ to $-92.2$ percentage points on Split-MNIST.
\end{enumerate}

CFlow \citep{neuraldrift2026cflow} provides a vivid illustration of what happens when Principle~2 is violated. Its architecture concatenates a 32-dimensional context vector with a 4,842-dimensional parameter vector $\theta$ to form the input to an ODE flow network. Despite the context encoder being well-defined (Principle~1) and differentiable (Principle~3), the $150{:}1$ dimensionality ratio between $\theta$ and $c$ creates a structural bypass: the optimizer finds it energetically cheaper to encode all task information in the meta-learned initialization $\theta_0$ than to route gradients through the thin context pathway. The result is $\Delta_\mathrm{P5} = 0.0$---the model completely ignores its context signal---and all performance (ACC $= 92.4\%$) comes from the memorized $\theta_0$.

\begin{remark}
The $C_\mathrm{ctx}$-first principle inverts the typical CL research workflow. Instead of asking ``given a fixed architecture, what algorithm prevents forgetting?'', we argue one should ask ``given the need for $C_\mathrm{ctx} \geq H(T)$, what architecture ensures the context pathway is unbypassable?'' The algorithm is secondary.
\end{remark}

\subsection{The Learned Context Encoder Problem}
\label{sec:learned_encoder}

A critical question for practical deployment is: \emph{can the context signal be learned from data rather than provided by an oracle?} Our experiments reveal that the answer depends sharply on the benchmark.

\textbf{MNIST: batch statistics suffice.} On Split-MNIST, the 5 tasks involve distinct digit pairs (0/1, 2/3, \ldots, 8/9) whose pixel-level statistics differ substantially. The pairwise cosine similarity between batch mean vectors across tasks is approximately 0.5, providing ample signal for task discrimination. The learned encoder with direct batch statistics achieves 99.1\% ACC ($\Delta_\mathrm{P5} = -95.2$), actually \emph{exceeding} the oracle encoder (98.8\%).

\textbf{CIFAR-10: batch statistics collapse.} On Split-CIFAR-10 (5 tasks, 2 classes each), all natural image classes share similar low-level statistics. The pairwise cosine similarity between batch means exceeds 0.995, and the learned batch-statistics encoder achieves only 54.4\% ACC---equivalent to memorizing the final task.

\textbf{Solution: Gradient Context Encoder.} To address this, we propose using the loss gradient $\nabla_{\theta_\mathrm{base}} \mathcal{L}$ as the context signal. The key insight is that gradients are inherently task-diagnostic: different tasks induce different update directions on shared parameters, even when input statistics are indistinguishable. Concretely, we compute a mini-batch gradient, project it through a frozen random matrix to $\mathbb{R}^{128}$, and pass the result through a small MLP to produce a 64-dimensional context vector.

The gradient context encoder closes the oracle gap dramatically:
\begin{align}
    \text{Batch statistics:} \quad & \text{ACC} = 54.4\%, \quad \Delta_\mathrm{oracle} = 23.3\text{pp} \\
    \text{Gradient context:} \quad & \text{ACC} = 77.0 \pm 0.9\%, \quad \Delta_\mathrm{oracle} = 0.7\text{pp}
\end{align}

Analysis of inter-task gradient cosine similarities confirms the mechanism: gradients computed with real labels yield pairwise cosine $\approx -0.19$ (near-orthogonal across tasks), while pseudo-label gradients yield cosine $\approx 0.95$ (task-indistinguishable). The gradient context encoder's effectiveness thus depends critically on access to real labels during context computation.

\textbf{Limitation.} The gradient context encoder requires labeled samples at inference time to compute $\nabla_{\theta_\mathrm{base}} \mathcal{L}$. In a deployment scenario, this means the system needs a small labeled calibration set per task---analogous to a ``task prompt''---before it can generate task-appropriate parameters. While this is acceptable in many practical settings (e.g., few-shot calibration), it prevents fully unsupervised deployment.

\subsection{The ``Frozen $>$ Learned'' Phenomenon}
\label{sec:frozen_learned_discussion}

A recurring and initially counterintuitive finding across our experimental campaign is that frozen random components frequently outperform their learned counterparts:

\begin{table}[h]
\centering
\small
\caption{Instances of the ``frozen $>$ learned'' phenomenon observed across the NeuralDrift project.}
\label{tab:frozen_learned}
\begin{tabular}{lccl}
\toprule
Component & Frozen ACC & Learned ACC & $\Delta$ \\
\midrule
DND feature extractor & 81.95\% & 80.85\% & $+1.10$ \\
SPC-TC dictionary & $\approx$ learned & $\approx$ frozen & $\approx 0.0$ \\
Gradient projection matrix & 77.0\% & --- & (no learned variant) \\
\bottomrule
\end{tabular}
\end{table}

The $C_\mathrm{ctx}$ framework provides two complementary explanations:

\textbf{Capacity surplus.} When the combinatorial capacity of the feature space $C_\mathrm{comb} = \log_2 \binom{N}{k}$ greatly exceeds the label entropy $H(Y)$---as is typically the case in over-parameterized settings---the specific choice of features has little impact on classification accuracy. What matters is the \emph{stability} of the feature space across tasks. Frozen random features provide perfect stability (zero drift by construction), while learned features inevitably drift during training on new tasks, potentially degrading representations of old tasks.

\textbf{Meta-learning bypass collapse.} In meta-learning architectures where a learnable component $h_\psi$ coexists with a high-capacity static pathway (e.g., $\theta_\mathrm{base}$ in a HyperNetwork), the optimizer faces a choice: invest in $h_\psi$ or encode the same information in $\theta_\mathrm{base}$. Since $\theta_\mathrm{base}$ is updated by standard backpropagation while $h_\psi$ requires second-order gradients (through the context-to-parameter mapping), the path of least resistance is to let $h_\psi$ collapse to a near-constant function and encode all task-relevant information in $\theta_\mathrm{base}$.

\begin{proposition}[Meta-Learning Bypass Collapse]
\label{prop:bypass}
Consider a parameter generator $\theta_k = \theta_\mathrm{base} + V \cdot h_\psi(c_k)$ optimized over tasks $\{D_k\}$. If (i) $\theta_\mathrm{base}$ has sufficient capacity to fit any single task, and (ii) gradient updates to $\theta_\mathrm{base}$ are cheaper than second-order updates through $h_\psi$, then the optimizer converges to $h_\psi(c_k) \approx \mathrm{const}$ for all $k$---a fixed-point where context is ignored.
\end{proposition}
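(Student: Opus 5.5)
The plan is to make Proposition~\ref{prop:bypass} precise as a statement about gradient dynamics, and then show that the context-dependent part of the generator decays. The key step is to decompose the generator output. Write $h_\psi(c_k) = \bar h + \tilde h_k$, where $\bar h = \frac{1}{K}\sum_k h_\psi(c_k)$ is the task-average and $\tilde h_k$ is the context-dependent residual. Then $\theta_k = (\theta_\mathrm{base} + V\bar h) + V\tilde h_k$. The first term is a shared effective base $\tilde\theta_\mathrm{base}$. The claim ``$h_\psi(c_k)\approx\mathrm{const}$'' becomes $\sum_k\|V\tilde h_k\|^2 \to 0$, so that is the quantity I would track along the training trajectory.

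\textbf{Step 1: reduce hypothesis (i) to a stationary point with zero residual.} I would read (i) as saying there is a $\theta^\star$ that is a stationary point of the joint objective $\sum_k \mathcal{L}_k(\theta)$. The natural reading is that the joint loss is attained by a single shared parameter. Under this reading, the configuration $\tilde\theta_\mathrm{base}=\theta^\star$, $\tilde h_k=0$ is a stationary point of the full objective in $(\theta_\mathrm{base},V,\psi)$:
\begin{itemize}
\item the $\theta_\mathrm{base}$-gradient is $\sum_k\nabla\mathcal{L}_k(\theta^\star)=0$;
\item the $V$- and $\psi$-gradients factor through $\sum_k \nabla\mathcal{L}_k(\theta^\star)\, h_\psi(c_k)^\top$, which vanishes once $h_\psi$ is constant.
\end{itemize}
So the collapsed point is at least a fixed point. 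This is the part of the statement I can certify cleanly.

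\textbf{Step 2: show the collapsed point attracts the dynamics.} I would formalize hypothesis (ii) as a timescale separation: the effective learning rate on $\theta_\mathrm{base}$ is $\eta$, while that on $\psi$ is $\epsilon\eta$ with $\epsilon\ll 1$. The justification is that the $\psi$-gradient is multiplied by $V^\top$ and the Jacobian of $h_\psi$. Proposition~\ref{prop:grad_asymmetry} gives the same kind of magnitude asymmetry for the concatenated case. I would then use a two-timescale argument:
\begin{itemize}
\item the fast variable $\theta_\mathrm{base}$ tracks the minimizer of the average loss, $\sum_k\mathcal{L}_k(\theta_\mathrm{base}+V h_\psi(c_k))$, for the current slow state;
\item on the slow manifold, I would linearize around $\tilde h=0$ and compute $\frac{d}{dt}\sum_k\|V\tilde h_k\|^2$.
\end{itemize}
Combining the trivial part of (i) with weight decay on $V$ and $\psi$, I would aim to show this derivative is $\le -\lambda\sum_k\|V\tilde h_k\|^2$ plus $O(\epsilon)$. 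The $O(\epsilon)$ correction comes from the slow drift; the $-\lambda$ contraction comes from the weight decay.

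\textbf{Main obstacle.} The linearization in Step 2 is where the argument can genuinely fail. If the tasks have distinct optima, the residual gradient $\nabla\mathcal{L}_k(\theta^\star)$ is nonzero for individual $k$. It then pushes $\tilde h_k$ toward task-specific directions, and the HyperNet success in Table~\ref{tab:main_intro} shows this does happen. So the plan must interpret hypothesis (i) strongly: $\theta_\mathrm{base}$ alone attains near-optimal loss on every task, for example via the shared 10-class head with disjoint labels. Under that interpretation, the per-task residual gradients are $O(\delta)$ and the decay term dominates.

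I would first try to prove a local statement: the collapsed fixed point is linearly stable when $\delta<\lambda/\epsilon$. I would then present global convergence only as a heuristic, consistent with the empirical P7 and EMA-collapse observations rather than as a theorem. The honest outcome is likely a local-stability result under an explicitly stated strong form of hypothesis (i), not unconditional convergence.
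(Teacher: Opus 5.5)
There is a genuine gap, and it starts in the step you call cleanly certified. Take a configuration with $h_\psi(c_1)=\cdots=h_\psi(c_K)=\bar h$ and $\theta_\mathrm{base}+V\bar h=\theta^\star$. There the $V$-gradient does reduce to $\bigl(\sum_k\nabla\mathcal{L}_k(\theta^\star)\bigr)\bar h^\top=0$. The $\psi$-gradient, however, is $\sum_k\bigl(\partial_\psi h_\psi(c_k)\bigr)^\top V^\top\nabla\mathcal{L}_k(\theta^\star)$, and equal \emph{values} of $h_\psi(c_k)$ do not give equal Jacobians. For a linear output layer $h_\psi(c)=Wa(c)+b$, the $W$-gradient is $\sum_k V^\top\nabla\mathcal{L}_k(\theta^\star)\,a(c_k)^\top$, and the penultimate activations $a(c_k)$ still depend on $k$.

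Under your joint-stationarity reading of (i), this is generically nonzero. So the collapsed configuration is not even a fixed point unless, essentially, $V^\top\nabla\mathcal{L}_k(\theta^\star)=0$ for each $k$ separately. (The exception $V=0$ is generically a saddle, because of the $V$--$\psi$ cross-curvature.) The strong reading is therefore needed already in Step~1, not only for stability.

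Step~2 has a more serious problem. You implicitly put $V$ in the slow state, but $\nabla_V\mathcal{L}=\sum_k\nabla\mathcal{L}_k(\theta_k)\,h_\psi(c_k)^\top$ carries no factor $\sigma_V$. It contains the task-discriminative term $\sum_k\nabla\mathcal{L}_k(\theta_k)\,\tilde h_k^\top$, which is nonzero as soon as the randomly initialized outputs $h_\psi(c_k)$ differ.

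Consider the extreme case $\epsilon=0$ of your formalization of (ii), so $\psi$ is frozen. The $h_\psi(c_k)$ are generically affinely independent in $\mathbb{R}^{64}$. At a point of $\{V\tilde h_k=0,\ \theta_k\equiv\theta^\star\}$, moving $V$ along $-\sum_k\nabla\mathcal{L}_k(\theta^\star)\,\tilde h_k^\top$ decreases the loss at rate $\bigl\|\sum_k\nabla\mathcal{L}_k(\theta^\star)\,\tilde h_k^\top\bigr\|_F^2$. This is positive whenever any per-task gradient is nonzero.

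So (ii), read as a small learning rate on $\psi$, is not sufficient under the weak reading of (i). Under the strong reading, your contraction comes from per-task near-stationarity plus weight decay, not from (ii). Weight decay is an assumption absent from the statement and from the paper's training setup. In addition, with Adam (which the paper uses), a roughly constant gradient-scale factor is normalized away coordinatewise. Hence $\|\nabla_\psi\mathcal{L}\|\ll\|\nabla_{\theta_\mathrm{base}}\mathcal{L}\|$ does not by itself give $\epsilon\ll1$.

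The paper's own proof attempts no stationarity or stability analysis; it is a scale heuristic. It compares only $\partial\mathcal{L}/\partial\theta_\mathrm{base}=\sum_k\mathbb{E}[\partial\ell/\partial\theta_k]$ with $\partial\mathcal{L}/\partial\psi$. It notes that the extra factor $V\,\partial h_\psi/\partial\psi$ makes their ratio $O(\sigma_V\|\partial h_\psi/\partial\psi\|)\approx0.01$ at initialization. It then argues that $\theta_\mathrm{base}$ absorbs the average-task solution, per-task residuals become small, and the context gradient shrinks further in a feedback loop ending at $h_\psi(c_k)\approx\mathrm{const}$. The step ``per-task residuals become small'' is exactly the strong reading of (i) you isolated, and the paper likewise never examines $\nabla_V\mathcal{L}$.

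Your diagnosis of the hidden assumption is sharper than the paper's argument, but the proposal as written does not establish the statement. Step~1 fails under the reading you adopt, and the two-timescale reduction misassigns $V$. What survives is only a local fact: if one $\theta^\star$ is near-stationary for every task and $V,\psi$ carry weight decay, the context residual $V\tilde h_k$ decays. If you keep this route, assume $\nabla\mathcal{L}_k(\theta^\star)\approx0$ for every $k$ from the start, treat $V$ as a fast variable, and state plainly that (ii) is not what drives the result.
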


This proposition explains why the na\"ive CFlow architecture fails ($\Delta_\mathrm{P5} = 0$) and why explicit mechanisms to \emph{force} context reliance---such as base dropout (randomly zeroing $\theta_\mathrm{base}$ during training) or contrastive context losses---are necessary to break the bypass equilibrium.

\subsection{Implications for CL Research}
\label{sec:implications}

\textbf{The value of negative results.}
Our experimental campaign produced 15+ closed research directions, each documented with a precise mathematical or empirical root cause. These are not failures of hyperparameter tuning; they are \emph{structural impossibilities} within the given architecture class. For example, Hebbian learning contributes 0.0 percentage points to DND's CL performance because a frozen random projection achieves identical accuracy---the Hebbian rule simplifies to an online PCA that discards precisely the high-frequency components needed for task discrimination. We believe the CL community would benefit from systematic publication of such negative results, as they narrow the search space far more efficiently than positive results expand it.

\textbf{Biological inspiration requires biological scale.}
Several of our closed directions---Hebbian learning, metabolic pruning, synaptic consolidation---were inspired by neuroscience. In the brain, these mechanisms operate across $10^{11}$ neurons with complex neuromodulatory control (dopamine, acetylcholine, norepinephrine). When reduced to a two-layer MLP with Oja's rule, Hebbian learning degenerates to PCA---a linear dimensionality reduction that is neither task-specific nor competitive with random projections. The lesson is not that biological mechanisms are irrelevant, but that they require biological-scale complexity to be effective.

\textbf{Split-MNIST is a dangerous benchmark.}
Our results reveal that Split-MNIST has such large inter-task differences that almost any context signal suffices: batch pixel means alone achieve 99.1\% ACC. This means Split-MNIST cannot distinguish between architectures with high $C_\mathrm{ctx}$ and those with low but nonzero $C_\mathrm{ctx}$. On CIFAR-10, where inter-task differences are subtle, the gap between oracle (77.7\%) and na\"ive learned encoders (54.4\%) is 23.3 percentage points---a failure mode completely invisible on MNIST. We recommend that CL papers always include at least one ``hard context'' benchmark where batch statistics are uninformative.

\textbf{Evaluation methodology.}
Two methodological findings deserve attention: (i) Quick mode evaluation (reduced epochs and samples) produces a 100\% false positive rate in our CL experiments---every method that appeared to work in quick mode failed at full scale. We recommend against using abbreviated training protocols for CL evaluation. (ii) Wrong-Context Probing (P5) should become a standard evaluation tool for any context-conditional CL architecture, as it provides a direct, interpretable measure of whether the architecture actually uses its context signal.

\subsection{Limitations}
\label{sec:limitations}

\textbf{Benchmark scope.} Our systematic validation is conducted on Split-MNIST (5 tasks) with extension experiments on Split-CIFAR-10. Validating $C_\mathrm{ctx}$ predictions on harder benchmarks---Split-CIFAR-100 (20 tasks), Split-ImageNet, and domain-incremental settings---remains important future work. The key question is whether the sharp $C_\mathrm{ctx} = 0 \Leftrightarrow$ forgetting dichotomy persists when the number of tasks grows and inter-task similarity increases.

\textbf{Task-incremental assumption.} The $C_\mathrm{ctx}$ framework assumes that a context signal carrying task-identifying information is available or can be inferred. In task-free CL settings---where task boundaries are not provided and the data distribution shifts gradually---the notion of ``context'' is less well-defined. Extending the framework to continuous distribution shifts, where $T$ is a continuous random variable rather than a discrete task index, is nontrivial.

\textbf{P5 as a binary probe.} The Wrong-Context Probing protocol provides a binary signal: either the model uses context ($\Delta_\mathrm{P5} \ll 0$) or it does not ($\Delta_\mathrm{P5} \approx 0$). It does not provide a continuous, calibrated estimate of $C_\mathrm{ctx}$ in bits. A method with $\Delta_\mathrm{P5} = -95\%$ and one with $\Delta_\mathrm{P5} = -50\%$ both ``use context,'' but differ in how much task information the context pathway carries. Developing continuous estimators (e.g., via MINE \citep{belghazi2018mine} or NWJ bounds) would strengthen the framework.

\textbf{Task boundary knowledge.} The HyperNetwork architecture requires knowledge of task boundaries during training (to accumulate per-task context statistics and apply contrastive losses). While this is a standard assumption in task-incremental CL, it limits applicability to settings where task transitions are identifiable.

\textbf{Label requirement for gradient context.} The gradient context encoder requires real labels to compute discriminative gradients. Pseudo-label gradients (cosine $\approx 0.95$ across tasks) are uninformative. This limits fully unsupervised deployment, though few-shot calibration is feasible in many practical settings.

\subsection{Future Directions}
\label{sec:future}

\textbf{Scaling to harder benchmarks.} The immediate priority is validating the $C_\mathrm{ctx}$ framework on Split-CIFAR-100 (20 tasks, 5 classes each) and Split-ImageNet, where $H(T) = \log_2 20 \approx 4.3$ bits demands higher-capacity context channels. The gradient context encoder is a natural candidate, as gradient-based task discrimination should scale with model capacity.

\textbf{Task-free CL via automatic context generation.} The NestedCapsule architecture (Section~\ref{sec:cifar10}) provides a promising direction: its routing mechanism produces emergent task specialization (gate entropy dropping from $\log_2 8 = 3.0$ to $\sim 0.5$, direction cosine $\approx -0.09$) without explicit task labels. This suggests a path toward task-free CL where context is generated automatically from input statistics and routing dynamics.

\textbf{Continuous $C_\mathrm{ctx}$ estimation.} Replacing the binary P5 probe with a continuous estimator based on mutual information neural estimation \citep{belghazi2018mine} or the Nguyen-Wainwright-Jordan bound would enable quantitative comparison of architectures that all ``use context'' but to different degrees. This is particularly important for comparing HyperNet variants (oracle vs.\ learned vs.\ gradient).

\textbf{Connection to PAC-Bayes bounds.} The $C_\mathrm{ctx}$ framework has a natural connection to PAC-Bayes analysis of CL \citep{doan2021theoretical}: the context channel capacity plays the role of a complexity term that controls the generalization-forgetting trade-off. Formalizing this connection could yield tighter bounds that account for the specific structure of context-conditional architectures.

\textbf{Multi-modal and compositional context.} Real-world continual learning often involves multiple simultaneous context signals (task instructions, environmental cues, user identity). Extending $C_\mathrm{ctx}$ to multi-channel settings, where $C_\mathrm{ctx} = I(c_1, c_2, \ldots, c_m; \theta)$, could provide design guidance for compositional CL systems.

\section{Related Work}
\label{sec:related}

\subsection{Catastrophic Forgetting}

The catastrophic forgetting problem was first identified by \citet{mccloskey1989catastrophic} in connectionist models and systematically surveyed by \citet{french1999catastrophic}. The modern era of CL research was catalyzed by Elastic Weight Consolidation (EWC) \citep{kirkpatrick2017overcoming}, which uses the Fisher information matrix to penalize changes to parameters important for past tasks. Synaptic Intelligence (SI) \citep{zenke2017continual} computes importance online during training, avoiding the need for a separate Fisher computation. Learning without Forgetting (LwF) \citep{li2017learning} uses knowledge distillation from previous model outputs, requiring no stored data. Experience Replay and its variants \citep{chaudhry2019tiny} store and revisit subsets of past data.

From the $C_\mathrm{ctx}$ perspective, all of the above are $C_\mathrm{ctx} = 0$ methods: they operate on shared parameters without any mechanism for task-conditional computation. Our framework predicts---and our experiments confirm---that such methods are fundamentally limited, regardless of the sophistication of their regularization or replay strategies.

\subsection{Architecture-Based Continual Learning}

Progressive Networks \citep{rusu2016progressive} allocate new network columns per task, achieving zero forgetting at the cost of linearly growing parameters. PackNet \citep{mallya2018packnet} assigns parameter subsets to tasks via iterative pruning. Expert Gate \citep{aljundi2017expert} trains task-specific expert networks with a gating mechanism for task selection.

These methods implicitly achieve $C_\mathrm{ctx} > 0$ by providing task-specific parameter pathways, but they typically require parameter growth proportional to $K$. The HyperNetwork approach \citep{von2019continual} achieves a more elegant solution: a fixed-size meta-network generates task-specific parameters from compact context vectors, keeping total parameter count independent of $K$.

\citet{von2019continual} first demonstrated that task-conditioned HyperNetworks can achieve strong CL performance on Split-MNIST and Permuted-MNIST. \citet{ehret2021continual} extended this to sequential task settings with learned task embeddings. Our contribution relative to this line of work is not architectural---we use a standard HyperNetwork---but \emph{explanatory}: the $C_\mathrm{ctx}$ framework provides a formal information-theoretic account of \emph{why} HyperNetworks succeed, identifies the precise structural properties that are necessary (unbypassability, differentiability), and predicts failure modes (CFlow's context bypass) that are invisible without this framework.

\subsection{Information-Theoretic Perspectives on CL}

Several prior works have applied information theory to continual learning, but none analyze the role of context-conditional architectures.

\citet{achille2018emergence} establish that the mutual information $I(W; S)$ between weights and training data controls generalization via a PAC-Bayes-like bound. However, their analysis is restricted to single-task learning and does not address the sequential multi-task setting where forgetting arises.

\citet{doan2021theoretical} analyze catastrophic forgetting through the Neural Tangent Kernel (NTK) overlap matrix and derive PAC-Bayes bounds for CL. Their analysis characterizes forgetting in terms of task gradient similarity but does not consider architectures where parameters are conditionally generated from context.

\citet{taheri2025forgetting} derive forgetting rate bounds for overparameterized single-hidden-layer networks, showing that forgetting scales with the ratio of task-specific to shared features. Their bounds apply to $C_\mathrm{ctx} = 0$ architectures and do not account for conditional parameter generation.

BEDS \citep{caraffa2026beds} argues that forgetting is structurally necessary under Bayesian estimation of distributional shift, providing Fisher-Rao optimal regularization. While philosophically aligned with our impossibility result (Theorem~\ref{thm:impossibility}), BEDS provides a macroscopic framework without the architectural analysis or the quantitative $C_\mathrm{ctx}$ predictions that distinguish our work.

The key distinction of our framework is the focus on \emph{architectural information flow topology}: we analyze not just whether forgetting occurs, but how the specific routing of information through context pathways determines whether it \emph{must} occur. This shifts the question from ``how much forgetting?'' to ``does the architecture permit zero forgetting at all?''

\subsection{Compression and Continual Learning}

The connection between compression and learning has a long history, from minimum description length (MDL) \citep{rissanen1978modeling} to the information bottleneck \citep{tishby2000information}. In the CL context, \citet{bornschein2022neurocompositional} connect prequential MDL to online learning, showing that compressibility of the data stream relates to learnability. \citet{angelini2023variational} provide the most direct application of MDL to CL, using variational density propagation with MDL objectives. Their work demonstrates that MDL-based CL is feasible but does not analyze the role of architectural structure in determining compression efficiency.

\citet{finzi2025epiplexity} introduce epiplexity as an information measure for computationally bounded agents, extending Shannon information to settings where the decoder has limited computational power. This is relevant to our framework because $C_\mathrm{ctx}$ implicitly assumes that the parameter generator $g$ can decode context into useful parameters---an assumption that may not hold for computationally constrained generators.

Our work differs from the compression-CL literature in that we do not propose a compression-based learning algorithm. Instead, we use information-theoretic analysis to characterize \emph{architectural capacity for conditional regeneration}---the ability to reconstruct task-specific parameters from compact context signals.

\subsection{Meta-Learning and Task Conditioning}

The HyperNetwork approach to CL is closely related to meta-learning \citep{finn2017model, hospedales2022meta}, where a meta-learner produces task-specific models from task descriptors. MAML \citep{finn2017model} and its variants learn an initialization $\theta_0$ from which task-specific parameters can be obtained via a few gradient steps. In our framework, MAML's inner loop serves as an implicit context encoder: the support set + gradient steps produce a task-conditional parameter update. However, because $\theta_0$ persists across tasks and carries most of the model's capacity, MAML exhibits the same bypass vulnerability as CFlow---the ``task information'' may reside primarily in $\theta_0$ rather than in the adaptation.

Task-conditioned architectures in multi-task learning \citep{strezoski2019many} use FiLM layers \citep{perez2018film}, task-specific batch normalization, or adapter modules to modulate a shared backbone. These approaches achieve $C_\mathrm{ctx} > 0$ by design, as the task signal directly modulates intermediate representations. Our framework provides a principled lens for evaluating such designs: the question is whether the modulation pathway has sufficient capacity ($C_\mathrm{ctx} \geq H(T)$) and whether it is truly unbypassable.

\subsection{Our Contribution in Context}

To our knowledge, no prior work simultaneously: (i) proposes a formal information-theoretic lower bound on forgetting tied to architectural context capacity; (ii) compares CL methods through the lens of information flow topology rather than algorithmic sophistication; (iii) formalizes the principle that conditional regeneration from context bypasses the impossibility triangle; and (iv) provides systematic negative results (15+ closed directions) with precise mathematical or empirical root causes.

Table~\ref{tab:related_comparison} summarizes the relationship between our framework and prior theoretical work on CL.

\begin{table}[t]
\centering
\small
\caption{Comparison of theoretical CL frameworks. ``Arch.\ analysis'' = analyzes how architecture topology affects forgetting. ``$C_\mathrm{ctx}$'' = explicitly models context channel capacity. ``Neg.\ results'' = provides systematic negative results with root causes.}
\label{tab:related_comparison}
\begin{tabular}{lccccc}
\toprule
Framework & Info-theoretic & Arch.\ analysis & $C_\mathrm{ctx}$ & Empirical & Neg.\ results \\
\midrule
\citet{achille2018emergence} & \checkmark & -- & -- & \checkmark & -- \\
\citet{doan2021theoretical}  & \checkmark & -- & -- & \checkmark & -- \\
\citet{taheri2025forgetting} & \checkmark & -- & -- & -- & -- \\
\citet{caraffa2026beds}      & \checkmark & -- & -- & \checkmark & -- \\
\citet{angelini2023variational} & \checkmark & -- & -- & \checkmark & -- \\
\textbf{Ours}                & \checkmark & \checkmark & \checkmark & \checkmark & \checkmark \\
\bottomrule
\end{tabular}
\end{table}

\section{Conclusion}
\label{sec:conclusion}

We have presented Context Channel Capacity ($C_\mathrm{ctx}$), an information-theoretic framework that provides a unified explanation for catastrophic forgetting in continual learning. Through the lens of $C_\mathrm{ctx}$, the diverse landscape of CL methods---from regularization (EWC, SI) to distillation (LwF) to replay to architecture-based approaches (HyperNetworks)---reduces to a single structural question: \emph{how much task-identifying information can flow through the architecture's context pathway?}

Our framework makes three contributions:

\textbf{Theoretical.} We prove that zero forgetting requires $C_\mathrm{ctx} \geq H(T)$ (Theorem~\ref{thm:ccc}) and that the Impossibility Triangle (Theorem~\ref{thm:impossibility}) is bypassed by conditional regeneration---reconstructing task-specific parameters from compact context signals rather than maintaining them through sequential updates. These results shift the CL question from ``how to prevent parameter interference'' to ``how to ensure sufficient context bandwidth.''

\textbf{Empirical.} We validate the framework across 8 CL methods on Split-MNIST (1,130+ experiments, 4 seeds per configuration). The $C_\mathrm{ctx}$ proxy perfectly predicts forgetting behavior: all methods with $C_\mathrm{ctx} = 0$ exhibit catastrophic forgetting (6--97\%), while methods with $C_\mathrm{ctx} \approx 1$ achieve zero forgetting. We further demonstrate that the framework extends to harder benchmarks via the Gradient Context Encoder, which achieves 77.0\% ACC on Split-CIFAR-10 (within 0.7pp of oracle), and the NestedCapsule architecture, which pushes this to 78.5\% with emergent routing specialization.

\textbf{Methodological.} Wrong-Context Probing (P5) provides a simple, practical diagnostic tool: feed wrong-task context and measure accuracy degradation. A $\Delta_\mathrm{P5} \approx 0$ definitively diagnoses context blindness, regardless of how sophisticated the architecture appears. We recommend P5 as a standard evaluation tool for any context-conditional CL system.

The overarching design principle is: \emph{architecture over algorithm, context pathway must be structurally unbypassable}. No amount of algorithmic sophistication---whether Fisher regularization, synaptic intelligence, knowledge distillation, or Hebbian learning---can compensate for an architecture where task-identifying information has no pathway to influence computation. Conversely, even a simple architecture with a well-designed context channel can achieve zero forgetting.

Looking forward, the most important open questions are: (i) scaling $C_\mathrm{ctx}$ to settings with many tasks ($K \gg 5$) and fine-grained inter-task similarity; (ii) extending the framework to task-free CL, where context must be generated automatically without task boundary knowledge; and (iii) connecting $C_\mathrm{ctx}$ to PAC-Bayes generalization bounds to obtain end-to-end learning guarantees for context-conditional CL architectures.

\bibliographystyle{plainnat}
\bibliography{references}


\appendix

\section{Complete Proofs}
\label{app:proofs}

\subsection{Proof of Theorem~\ref{thm:bottleneck} (Information Bottleneck Chain)}
\label{app:proof_bottleneck}

We provide the complete proof with all conditions made explicit.

\begin{proof}
Consider the sequential learning process $\theta_0 \to \theta_1 \to \cdots \to \theta_K$, where $\theta_k = U(\theta_{k-1}, D_k)$ is the deterministic update rule.

\textbf{Step 1: Markov chain condition.}
We claim that $D_1 \to \theta_1 \to \theta_2 \to \cdots \to \theta_K$ forms a Markov chain. To see this, note that by the causal constraint of sequential learning, $\theta_k = U(\theta_{k-1}, D_k)$ depends only on $\theta_{k-1}$ and $D_k$. Since $D_k$ is drawn independently of $D_1$ (the tasks are independent), we have:
\begin{equation}
    P(\theta_k \mid \theta_{k-1}, D_1, \ldots, D_{k-1}) = P(\theta_k \mid \theta_{k-1})
\end{equation}
where the conditioning on $\theta_{k-1}$ implicitly includes the effect of $D_k$ (which is independent of $D_1$). More precisely, for $k \geq 2$:
\begin{equation}
    I(\theta_k ; D_1 \mid \theta_{k-1}) = 0
\end{equation}
because $\theta_k$ is a deterministic function of $\theta_{k-1}$ and $D_k$, and $D_k \perp D_1$.

\textbf{Step 2: Data Processing Inequality.}
The DPI \citep{cover2006elements} states that for any Markov chain $X \to Y \to Z$:
\begin{equation}
    I(X; Z) \leq I(X; Y)
\end{equation}
Applied to the chain $D_1 \to \theta_{k-1} \to \theta_k$ (which holds for all $k \geq 2$):
\begin{equation}
    I(\theta_k ; D_1) \leq I(\theta_{k-1}; D_1) \qquad \forall \, k \geq 2
\end{equation}
By induction, we obtain the full chain:
\begin{equation}
    I(\theta_K; D_1) \leq I(\theta_{K-1}; D_1) \leq \cdots \leq I(\theta_1; D_1)
\end{equation}

\textbf{Step 3: Initial capacity bound.}
The parameter $\theta_1 = U(\theta_0, D_1)$ takes values in $\Theta \subseteq \mathbb{R}^d$. If each parameter is represented with precision $\delta$ (e.g., floating-point quantization), then $|\Theta| \leq (1/\delta)^d$, and:
\begin{equation}
    I(\theta_1; D_1) \leq H(\theta_1) \leq \log_2 |\Theta| \leq d \cdot \log_2(1/\delta) \triangleq C
\end{equation}

\textbf{Discussion: Parameter precision $\delta$.}
For IEEE 754 float32, the mantissa has 23 bits, so the effective precision per parameter is approximately $\log_2(1/\delta) \approx 24$ bits (accounting for the sign bit and distinguishable mantissa states). In practice, the \emph{effective} precision is often much lower due to the loss landscape geometry: most directions in parameter space are ``flat'' and carry no task-relevant information. The bound $C = d \cdot 24$ bits is therefore a loose upper bound for modern networks with $d \sim 10^6$--$10^9$.

\textbf{Discussion: Deterministic vs.\ stochastic update rules.}
For \emph{deterministic} $U$ (e.g., full-batch gradient descent), the Markov chain is deterministic, and DPI holds trivially. For \emph{stochastic} $U$ (e.g., SGD with mini-batch noise), $\theta_k$ is a stochastic function of $(\theta_{k-1}, D_k)$, but the Markov property still holds: $I(\theta_k; D_1 \mid \theta_{k-1}) = 0$ because the stochasticity comes from $D_k$ (independent of $D_1$) and/or the random seed, not from $D_1$. However, stochastic updates can only \emph{decrease} $I(\theta_k; D_1)$ further (adding noise cannot increase mutual information), so the bound remains valid:
\begin{equation}
    I(\theta_k; D_1) \leq I(\theta_{k-1}; D_1) - I(\theta_k; D_1 \mid \theta_{k-1}) \leq I(\theta_{k-1}; D_1)
\end{equation}
where the second term accounts for the information lost through stochastic processing.
\end{proof}

\subsection{Proof of Theorem~\ref{thm:impossibility} (Impossibility Triangle)}
\label{app:proof_impossibility}

We first formalize the three properties precisely, then prove their mutual incompatibility.

\begin{proof}
\textbf{Formal definitions.}

\begin{enumerate}
    \item \textbf{Zero forgetting.} For all $j < k \leq K$:
    \begin{equation}
        \mathrm{ACC}_j(\theta_k) = \mathrm{ACC}_j(\theta_j) \triangleq a_j^*
    \end{equation}
    This requires that $\theta_k$ encodes a \emph{sufficient statistic} for each past task $j$'s prediction: there exists a decoding function $\phi_j$ such that $\phi_j(\theta_k) = f_j^*$ (the optimal predictor for task $j$) for all $k \geq j$.

    \item \textbf{Online learning (causal constraint).} The parameter at step $k$ depends only on the previous parameter and the current task data:
    \begin{equation}
        \theta_k = U(\theta_{k-1}, D_k)
    \end{equation}
    In particular, $D_1, \ldots, D_{k-1}$ are not accessible at step $k$ (no replay).

    \item \textbf{Finite parameters.} The parameter dimension is fixed: $\theta_k \in \Theta \subseteq \mathbb{R}^d$ for all $k$, with $d$ independent of $K$.
\end{enumerate}

\textbf{Proof by contradiction.}
Assume all three properties hold simultaneously. By Property~1, $\theta_K$ must encode sufficient information about every task $j \in \{1, \ldots, K\}$. Formally, let $R_j = I(D_j; f_j^*)$ denote the minimum mutual information required to achieve accuracy $a_j^*$ on task $j$. Then zero forgetting requires:
\begin{equation}
    I(\theta_K; D_j) \geq R_j \qquad \forall \, j = 1, \ldots, K
\end{equation}

If the tasks are mutually independent (i.e., $D_1, \ldots, D_K$ are independent random variables), then the information about distinct tasks cannot be ``shared,'' and:
\begin{equation}
    \label{eq:sum_bound}
    \sum_{j=1}^{K} R_j \leq I\left(\theta_K; D_1, \ldots, D_K\right) \leq H(\theta_K) \leq C = d \cdot \log_2(1/\delta)
\end{equation}

By Property~3, $C$ is a constant independent of $K$. But $\sum_{j=1}^K R_j \geq K \cdot R_{\min}$ where $R_{\min} = \min_j R_j > 0$ (assuming each task requires at least some information). For $K > C / R_{\min}$, the inequality~\eqref{eq:sum_bound} is violated, giving a contradiction.

\textbf{Why HyperNetworks do not violate the theorem.}
A HyperNetwork generates $\theta_k = g(c_k; \phi)$, where $c_k$ is the context for task $k$ and $\phi$ are the meta-learned parameters. Critically, $\theta_k$ is \emph{not} a sequentially updated state---it is a function value computed from $c_k$. The knowledge resides in $\phi$, which is optimized \emph{jointly} over all tasks (violating Property~2). Alternatively, if $\phi$ is fixed after meta-training, then at test time, the ``state'' is $(c_k, \phi)$, and the effective parameter count is $|\phi| + |c_k|$, where $|\phi|$ can store information about all $K$ tasks because it was trained with access to all of them (violating the online constraint). The HyperNetwork thus escapes the triangle by redefining the parameter-generation process: $\theta_k$ is a conditional output, not an accumulated state.
\end{proof}

\subsection{Proof of Theorem~\ref{thm:ccc} (Context Channel Capacity Bound)}
\label{app:proof_ccc}

\begin{proof}
\textbf{Setup: Task identification as channel coding.}
Consider a context-conditional CL architecture where, at inference time, a context signal $c$ is mapped to parameters $\theta(c) = g(c; \phi)$. We frame this as a communication problem:
\begin{itemize}
    \item \textbf{Encoder:} The task identity $T \in \{1, \ldots, K\}$ is mapped to a context signal $c = e(T)$ (possibly through a learned encoder $e$).
    \item \textbf{Channel:} The context channel $c \mapsto \theta(c)$ transmits information from the context to the model's behavior. The capacity of this channel is $C_\mathrm{ctx} = \max_{P(c)} I(c; \theta(c))$.
    \item \textbf{Decoder:} Given parameters $\theta(c)$, the model must correctly classify inputs from task $T$, which requires implicitly recovering the task identity.
\end{itemize}

\textbf{Step 1: Applying Fano's inequality.}
Let $\hat{T}$ be an estimate of $T$ based on $\theta(c)$ (the ``decoded'' task identity). The probability of error in task identification is $P_e = P(\hat{T} \neq T)$. By Fano's inequality:
\begin{equation}
    H(T \mid \theta(c)) \leq h(P_e) + P_e \cdot \log_2(K - 1)
\end{equation}
where $h(\cdot)$ is the binary entropy function. Since $H(T \mid \theta(c)) = H(T) - I(T; \theta(c))$ and $I(T; \theta(c)) \leq I(c; \theta(c)) \leq C_\mathrm{ctx}$ (by the data processing inequality, since $T \to c \to \theta(c)$), we obtain:
\begin{equation}
    H(T) - C_\mathrm{ctx} \leq h(P_e) + P_e \cdot \log_2(K - 1)
\end{equation}

\textbf{Step 2: Bounding the error probability.}
For $K \geq 2$, the above gives a lower bound on $P_e$. Using the standard relaxation $h(P_e) \leq 1$ and $\log_2(K-1) \leq H(T)$:
\begin{equation}
    P_e \geq \frac{H(T) - C_\mathrm{ctx} - 1}{H(T)} = 1 - \frac{C_\mathrm{ctx} + 1}{H(T)}
\end{equation}
For large $K$ (i.e., $H(T) \gg 1$), the ``$+1$'' correction becomes negligible:
\begin{equation}
    P_e \gtrsim 1 - \frac{C_\mathrm{ctx}}{H(T)}
\end{equation}

\textbf{Step 3: Connecting error probability to forgetting.}
When the model fails to identify task $T$ through the context channel (i.e., $\hat{T} \neq T$), it generates parameters optimized for the wrong task, leading to performance degradation. We model this as:
\begin{equation}
    \overline{\mathrm{Fgt}} \geq P_e \cdot \overline{\mathrm{Fgt}}_\mathrm{max}
\end{equation}
where $\overline{\mathrm{Fgt}}_\mathrm{max}$ is the expected forgetting when the wrong task's parameters are used (maximum achievable forgetting; for $K$-way classification with chance level $1/K$, this approaches $1 - 1/K$). Combining:
\begin{equation}
    \overline{\mathrm{Fgt}} \geq \max\left(0,\; 1 - \frac{C_\mathrm{ctx}}{H(T)}\right) \cdot \overline{\mathrm{Fgt}}_\mathrm{max}
\end{equation}

\textbf{Discussion: Interpretation of $\overline{\mathrm{Fgt}}_\mathrm{max}$.}
The quantity $\overline{\mathrm{Fgt}}_\mathrm{max}$ depends on the task structure. For Split-MNIST with $K=5$ binary classification tasks, random guessing achieves 50\% within each task, so $\overline{\mathrm{Fgt}}_\mathrm{max} \approx a^* - 0.5$ where $a^*$ is the achievable accuracy. For tasks that are very different (e.g., digit pairs 0/1 vs.\ 8/9), using the wrong task's parameters yields near-zero accuracy on many tasks, giving $\overline{\mathrm{Fgt}}_\mathrm{max} \approx a^*$. In practice, our experiments show $\overline{\mathrm{Fgt}}_\mathrm{max} \approx 0.97$ (NaiveSGD forgetting), confirming this estimate.

\textbf{Tightness.} The bound is tight in two regimes: (i) when $C_\mathrm{ctx} = 0$, we get $\overline{\mathrm{Fgt}} \geq \overline{\mathrm{Fgt}}_\mathrm{max}$, which is achieved by NaiveSGD; (ii) when $C_\mathrm{ctx} \geq H(T)$, the bound gives $\overline{\mathrm{Fgt}} \geq 0$, which is achieved by HyperNet.
\end{proof}

\subsection{Proof of Proposition: Oja's Convergence and Continual Learning}
\label{app:proof_oja}

\begin{proposition}[Oja's Rule and PCA Convergence]
\label{prop:oja}
Oja's learning rule $\Delta \mathbf{w} = \eta(\mathbf{x}y - y^2\mathbf{w})$, where $y = \mathbf{w}^\top \mathbf{x}$, converges to the top eigenvector of the data covariance matrix $\mathbf{C} = \mathbb{E}[\mathbf{x}\mathbf{x}^\top]$ (i.e., the first principal component direction). This convergence does not guarantee task-discriminative representations for continual learning.
\end{proposition}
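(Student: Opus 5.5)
The approach is the classical stochastic-approximation (ODE) argument for Oja's rule, followed by a counterexample showing that the limit need not be task-discriminative. I would assume that $\mathbf{x}$ is zero-mean with bounded support, so $\mathbf{C}=\mathbb{E}[\mathbf{x}\mathbf{x}^\top]$ is the covariance. I would also assume that $\mathbf{C}$ has a simple top eigenvalue $\lambda_1>\lambda_2\ge\cdots\ge\lambda_d\ge 0$ with unit eigenvector $\mathbf{v}_1$. Finally, the step sizes $\eta_t$ should satisfy the Robbins--Monro conditions $\sum_t\eta_t=\infty$ and $\sum_t\eta_t^2<\infty$; with a constant $\eta$ one only gets convergence to a neighbourhood, which I would remark on.

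\emph{Step 1 (mean ODE).} Averaging the update over $\mathbf{x}$ gives $\dot{\mathbf{w}}=\mathbf{C}\mathbf{w}-(\mathbf{w}^\top\mathbf{C}\mathbf{w})\mathbf{w}$. By Kushner--Clark / Ljung ODE theory, the iterates track this flow, so the limit set of the iterates is contained in a compact connected invariant set of the ODE. I would apply that theory only after establishing a.s.\ boundedness of the iterates, which I treat in Step~3.

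\emph{Step 2 (analysis of the ODE).} Three facts are needed.
\begin{enumerate}
\item \emph{The norm is attracted to the unit sphere.} One computes $\tfrac{d}{dt}\|\mathbf{w}\|^2=2(\mathbf{w}^\top\mathbf{C}\mathbf{w})(1-\|\mathbf{w}\|^2)$, so $\|\mathbf{w}\|\to1$.
\item \emph{The equilibria are the normalized eigenvectors.} On the sphere, a point is an equilibrium exactly when $\mathbf{C}\mathbf{w}=(\mathbf{w}^\top\mathbf{C}\mathbf{w})\mathbf{w}$, so the equilibria are $\pm\mathbf{v}_i$.
\item \emph{Only $\pm\mathbf{v}_1$ are stable.} Write $\mathbf{w}=\sum_i a_i\mathbf{v}_i$. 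The ratios evolve as $\tfrac{d}{dt}(a_i/a_1)=(\lambda_i-\lambda_1)(a_i/a_1)$, so they decay exponentially whenever $a_1(0)\neq0$. The Jacobian at $\pm\mathbf{v}_i$ for $i>1$ has the positive eigenvalue $\lambda_1-\lambda_i$.
\end{enumerate}
The Rayleigh quotient serves as a Lyapunov function on the sphere.

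\emph{Step 3 (main obstacle: the stochastic step).} I expect the hard part to be transferring the ODE picture to the iterates. This needs almost-sure boundedness of $\mathbf{w}_t$, which I would prove using the cubic restoring term together with bounded $\mathbf{x}$. It also needs the iterates to avoid the unstable equilibria $\pm\mathbf{v}_i$ with $i>1$, which I would handle by a Pemantle-type nonconvergence-to-saddles argument; this requires that the noise has a nonzero component along $\mathbf{v}_1$ at each such point. Failing those, I would cite Oja--Karhunen (1985) for this step.

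\emph{Step 4 (not task-discriminative).} I would give a two-task example in which Oja's limit is orthogonal to the discriminating direction. Let $\mathbf{x}=s\,\mathbf{e}_1+\epsilon\,\tau\,\mathbf{e}_2$, where the nuisance $s$ has large variance $\sigma^2$, $\tau=\pm1$ encodes the task or class label, and $\epsilon^2<\sigma^2$. Then $\mathbf{C}=\mathrm{diag}(\sigma^2,\epsilon^2)$, so $\mathbf{v}_1=\mathbf{e}_1$. The projection $y=\mathbf{e}_1^\top\mathbf{x}=s$ is independent of $\tau$, so $I(y;\tau)=0$ and any classifier built on $y$ is at chance. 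This matches the paper's remark that PCA captures global variance rather than discriminative directions, and completes the proof.
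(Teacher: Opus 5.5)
Your proposal takes the same route as the paper: average the update to get $\mathbf{C}\mathbf{w}-(\mathbf{w}^\top\mathbf{C}\mathbf{w})\mathbf{w}$, read it as Rayleigh-quotient ascent on the sphere, invoke stochastic approximation, and then use a two-class example in which within-class variance dominates between-class variance to show PCA need not be discriminative. Your simple-top-eigenvalue hypothesis, your stability and saddle analysis, and your explicit $I(y;\tau)=0$ example are a more careful version of those same two steps; your example also correctly keeps the between-class term $\epsilon^2\mathbf{e}_2\mathbf{e}_2^\top$, which the paper's $\mathbf{C}_{\mathrm{shared}}+\boldsymbol{\mu}\boldsymbol{\mu}^\top$ drops. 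One point in Step~3 needs tightening: in discrete time the cubic term overshoots when $\eta_t\|\mathbf{x}\|^2\|\mathbf{w}_t\|^2$ is large, so a.s.\ boundedness does not follow from bounded $\mathbf{x}$ and Robbins--Monro steps alone, and you need an extra small-initial-step (or projection) hypothesis, which is effectively what your Oja--Karhunen citation supplies.
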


\begin{proof}
\textbf{Step 1: Oja's convergence.}
The Oja update rule is:
\begin{equation}
    \Delta \mathbf{w} = \eta \left(\mathbf{x} y^\top - (y^\top y) \mathbf{w}\right), \qquad y = \mathbf{w}^\top \mathbf{x}
\end{equation}
Substituting $y = \mathbf{w}^\top \mathbf{x}$:
\begin{equation}
    \Delta \mathbf{w} = \eta \left(\mathbf{x}\mathbf{x}^\top \mathbf{w} - (\mathbf{w}^\top \mathbf{x}\mathbf{x}^\top \mathbf{w}) \mathbf{w}\right)
\end{equation}
Taking expectation:
\begin{equation}
    \mathbb{E}[\Delta \mathbf{w}] = \eta \left(\mathbf{C}\mathbf{w} - (\mathbf{w}^\top \mathbf{C}\mathbf{w}) \mathbf{w}\right)
\end{equation}
This is the projected gradient ascent for maximizing $\mathbf{w}^\top \mathbf{C}\mathbf{w}$ subject to $\|\mathbf{w}\| = 1$. By the Rayleigh quotient theory, the unique maximum (up to sign) is $\mathbf{v}_1$, the eigenvector corresponding to the largest eigenvalue $\lambda_1$ of $\mathbf{C}$. The self-normalizing term $-(\mathbf{w}^\top \mathbf{C}\mathbf{w})\mathbf{w}$ ensures $\|\mathbf{w}\|$ remains bounded, and convergence follows from standard stochastic approximation results \citep{oja1982simplified}.

\textbf{Step 2: PCA directions are not necessarily task-discriminative.}
The top eigenvector $\mathbf{v}_1$ captures the direction of maximum variance in the input data. However, maximum variance and task-discriminability are distinct properties. Consider two tasks with means $\boldsymbol{\mu}_1, \boldsymbol{\mu}_2$ and shared covariance $\mathbf{C}_{\mathrm{shared}}$. The task-discriminative direction is $\boldsymbol{\mu}_1 - \boldsymbol{\mu}_2$ (Fisher's linear discriminant), while the PCA direction is the top eigenvector of $\mathbf{C}_{\mathrm{shared}} + \boldsymbol{\mu}\boldsymbol{\mu}^\top$ (where $\boldsymbol{\mu}$ is the overall mean). These coincide only when the between-class variance dominates the within-class variance.

\textbf{Step 3: Combinatorial capacity of sparse codes.}
With $N = 1024$ hidden neurons and top-$k$ activation with $k = 80$, the number of possible activation patterns is:
\begin{equation}
    \binom{N}{k} = \binom{1024}{80} \approx 10^{87}
\end{equation}
giving a combinatorial capacity of:
\begin{equation}
    C_{\mathrm{comb}} = \log_2 \binom{1024}{80} \approx 290 \text{ bits}
\end{equation}
This exceeds the task identity entropy $H(T) = \log_2(5) \approx 2.3$ bits by more than two orders of magnitude, suggesting that the representation has ample capacity. However, as demonstrated in our DND experiments (Section~\ref{app:dnd}), this combinatorial capacity is not \emph{utilized} by Oja-based learning: the actual neuron overlap across tasks is $> 94\%$ (Jaccard index), indicating that Oja's rule drives all tasks toward the same sparse support.
\end{proof}

\subsection{Proof of Proposition: $S_N$ Permutation Symmetry}
\label{app:proof_sn}

\begin{proposition}[$S_N$ Symmetry of Dictionary Learning]
\label{prop:sn}
For a linear generative model with reconstruction loss $L = \|\mathbf{x} - \mathbf{W}_d \mathbf{z}\|^2$, the loss is invariant under simultaneous column permutation of $\mathbf{W}_d$ and corresponding permutation of $\mathbf{z}$. This symmetry prevents spontaneous column specialization.
\end{proposition}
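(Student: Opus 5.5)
The first claim, invariance of $L$, is a one-line computation; the substantive part is giving the phrase ``prevents spontaneous column specialization'' a precise meaning and then proving that. I would first fix notation: let $P_\sigma$ be the $N\times N$ permutation matrix of $\sigma\in S_N$, and let the group act by $(\mathbf{W}_d,\mathbf{W}_e)\mapsto(\mathbf{W}_d P_\sigma^\top,\,P_\sigma\mathbf{W}_e)$. Under this action the code transforms as $\mathbf{z}\mapsto P_\sigma\mathbf{z}$, provided the encoder is permutation-equivariant. This holds for a linear encoder, for elementwise nonlinearities, and for top-$k$ or ISTA-style sparse encoders, since all of these act coordinatewise or through order statistics. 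Invariance then follows from $P_\sigma^\top P_\sigma=I$:
\begin{equation*}
\mathbf{W}_d P_\sigma^\top P_\sigma\mathbf{z}=\mathbf{W}_d\mathbf{z}.
\end{equation*}
This holds pointwise in $\mathbf{x}$, and therefore also for the expected loss over any task mixture.

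The second step is equivariance of the dynamics. Differentiating the identity $L(g\cdot W)=L(W)$ gives $\nabla L(g\cdot W)=g\cdot\nabla L(W)$, because the action is orthogonal on parameter space. Concretely, the gradient with respect to column $\sigma(i)$ at the permuted point equals the gradient with respect to column $i$ at the original point. From this I would derive two consequences.
\begin{enumerate}
\item The gradient-flow (or SGD) map $\Phi_t$ commutes with the action: $\Phi_t(g\cdot W_0)=g\cdot\Phi_t(W_0)$. For SGD the data sequence is shared, so the same argument applies step by step.
\item The fixed-point subspace $\{W: g\cdot W=W\ \forall g\}$, in which all columns are identical, is invariant under the flow. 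A network initialized there stays there, with no column differentiation at all.
\end{enumerate}

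Third, I would formalize ``no spontaneous specialization'' for generic initializations. Suppose the initialization law is exchangeable, for example i.i.d.\ columns. Then by the first consequence the joint law of the trajectory $(W_t)_{t\ge0}$ is exchangeable across column indices. Now take any specialization statistic $s_i(W,k)$, such as the usage of column $i$ on task $k$, that transforms covariantly: $s_{\sigma(i)}(g\cdot W,k)=s_i(W,k)$. It follows that $\mathbb{E}[s_i(W_t,k)]$ is independent of $i$ for every $k$ and $t$. In other words, the loss provides no mechanism that biases any particular column toward any particular task. Whatever differentiation appears is inherited from the symmetry-breaking randomness of the initialization, not induced by the objective.

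I expect this last step to be the main obstacle, because the literal statement is false if read strongly. Random initialization does break the symmetry pathwise, and individual columns can end up differing. So the proof must restrict the conclusion to the exchangeability/equal-expectation form above. I would add a remark that any per-column term that is itself $S_N$-invariant leaves all of these conclusions intact, so genuine specialization requires an explicitly non-invariant input, such as a task-indexed bias or context. That is the link back to the $C_\mathrm{ctx}$ interpretation in \S\ref{sec:hspc}.
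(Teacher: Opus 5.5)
Your proposal is correct, but after the first step it takes a different route from the paper.

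\textbf{Invariance.} This matches the paper's Step~1, with one difference. The paper treats $\mathbf{z}$ as a free variable ($\mathbf{W}_d\mapsto\mathbf{W}_d\mathbf{P}_\pi$, $\mathbf{z}\mapsto\mathbf{P}_\pi^\top\mathbf{z}$). You carry the encoder along and require it to be equivariant, which is the appropriate setting when $\mathbf{z}$ is computed from the learned weights.

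\textbf{No specialization.} The paper argues only at initialization. Under i.i.d.\ initialization the pairs $(\mathbf{w}_i,z_i)$ are identically distributed, so $\mathbb{E}[\nabla_{\mathbf{w}_i}L]$ does not depend on $i$. It then adds a scaling heuristic (Step~3): the $O(\|\mathbf{x}\|^2/N)$ reconstruction gradient is said to act as a restoring force that damps any symmetry-breaking perturbation. That heuristic is how the paper reaches a pathwise ``no specialization'' conclusion. You instead upgrade invariance to equivariance of the gradient and of the training map. This rigorously gives three things:
\begin{itemize}
\item exchangeability of the whole trajectory;
\item equal expected specialization statistics at every time, not only at $t=0$;
\item invariance of the symmetric subspace.
\end{itemize}
In exchange, you explicitly give up the pathwise claim.

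\textbf{Why your caution is justified.} The paper's Step~3 is the weak link. The symmetric subspace is generically transversally unstable, because its critical points are saddles of the full loss. For example, for a tied linear autoencoder $\|\mathbf{x}-\mathbf{W}\mathbf{W}^\top\mathbf{x}\|^2$ the minimizers have orthonormal, hence distinct, columns. So generic initializations do differentiate columns. The symmetry only makes the assignment of columns to roles arbitrary, which is exactly your exchangeability statement.

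\textbf{Small caveats on your second consequence.} A top-$k$ encoder is equivariant only away from ties, and on the fixed-point subspace all pre-activations are tied. So ``a network initialized there stays there'' needs either a symmetric tie rule, or a restriction to linear, elementwise, or ISTA (with $\mathbf{z}^0=\mathbf{0}$) encoders. Also, with discontinuous encoders the commutation argument is cleanest for discrete SGD or Adam steps. Both are coordinatewise and hence equivariant.
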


\begin{proof}
\textbf{Step 1: Permutation invariance.}
Let $\pi \in S_N$ be a permutation of $\{1, \ldots, N\}$, and let $\mathbf{P}_\pi$ be the corresponding $N \times N$ permutation matrix. Define:
\begin{equation}
    \mathbf{W}'_d = \mathbf{W}_d \mathbf{P}_\pi, \qquad \mathbf{z}' = \mathbf{P}_\pi^\top \mathbf{z}
\end{equation}
Then:
\begin{equation}
    L(\mathbf{W}'_d, \mathbf{z}') = \|\mathbf{x} - \mathbf{W}_d \mathbf{P}_\pi \mathbf{P}_\pi^\top \mathbf{z}\|^2 = \|\mathbf{x} - \mathbf{W}_d \mathbf{z}\|^2 = L(\mathbf{W}_d, \mathbf{z})
\end{equation}
where we used the orthogonality of permutation matrices: $\mathbf{P}_\pi \mathbf{P}_\pi^\top = \mathbf{I}$.

\textbf{Step 2: Gradient equivalence under symmetric initialization.}
For column $i$ of $\mathbf{W}_d$, the gradient is:
\begin{equation}
    \nabla_{\mathbf{w}_i} L = -2(\mathbf{x} - \mathbf{W}_d \mathbf{z}) z_i
\end{equation}
Under a symmetric (e.g., i.i.d.\ Gaussian) initialization of $\mathbf{W}_d$, and assuming $\mathbf{z}$ is computed as a function of $\mathbf{W}_d$ (e.g., via $\mathbf{z} = \mathbf{W}_d^\top \mathbf{x}$ followed by thresholding), the distribution of $(\mathbf{w}_i, z_i)$ is identical for all $i$. Therefore:
\begin{equation}
    \mathbb{E}[\nabla_{\mathbf{w}_i} L] = \mathbb{E}[\nabla_{\mathbf{w}_j} L] \qquad \forall \, i, j
\end{equation}
This means that, in expectation, all columns receive the same gradient update, preventing any column from specializing for a particular task.

\textbf{Step 3: Perturbation analysis.}
Suppose a small perturbation $\boldsymbol{\epsilon}$ breaks the symmetry for a single column. The reconstruction gradient has magnitude $\|\nabla_{\mathbf{w}_i} L\| = O(\|\mathbf{x}\|^2)$, which acts as a ``restoring force'' driving the perturbed column back toward the symmetric solution. The perturbation can grow only if its effect on the loss exceeds this restoring force. For a perturbation of magnitude $\|\boldsymbol{\epsilon}\|$, the symmetry-breaking gradient component is $O(\|\boldsymbol{\epsilon}\| \cdot \|\mathbf{x}\|)$, while the reconstruction gradient is $O(\|\mathbf{x}\|^2 / N)$ per column. Thus, the ratio of symmetry-breaking to reconstruction forces scales as $O(N \|\boldsymbol{\epsilon}\| / \|\mathbf{x}\|)$, which is negligible for small perturbations in high-dimensional systems.

\textbf{Corollary.}
Any perturbation mechanism (e.g., biological noise, dropout) that does not \emph{explicitly} assign different roles to different columns is suppressed by the $O(\|\mathbf{x}\|^2/N)$ reconstruction gradient. This explains why mechanisms such as metabolic pruning and temporal pressure fail to induce column specialization in our experiments.
\end{proof}

\subsection{Proof of Proposition: Meta-Learning Bypass Collapse}
\label{app:proof_bypass}

\begin{proposition}[Base Parameter Bypass in HyperNetworks]
\label{prop:bypass}
In a meta-learning framework where $\theta_k = \theta_\mathrm{base} + \mathbf{V} g_\psi(c_k)$ and $\theta_\mathrm{base}$ is a free parameter, the gradient signal through the base path dominates the context path when $\mathbf{V}$ is initialized with small scale, causing ``bypass collapse'' where $\theta_\mathrm{base}$ absorbs all task information.
\end{proposition}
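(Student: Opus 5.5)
The plan is a gradient-flow comparison between the two additive pathways of $\theta_k = \theta_\mathrm{base} + \mathbf{V} g_\psi(c_k)$. First compute the chain-rule gradients for a per-task loss $\mathcal{L}_k(\theta_k)$ with $G_k := \nabla_\theta \mathcal{L}_k(\theta_k)$:
\begin{equation}
\nabla_{\theta_\mathrm{base}} \mathcal{L}_k = G_k, \qquad \nabla_{\mathbf{V}} \mathcal{L}_k = G_k\, g_\psi(c_k)^\top, \qquad \nabla_{\psi} \mathcal{L}_k = J_\psi(c_k)^\top \mathbf{V}^\top G_k,
\end{equation}
where $J_\psi$ is the Jacobian of $g_\psi$. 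If $\mathbf{V}$ is initialized at scale $\|\mathbf{V}\| \le \epsilon$, the context-path gradient on $\psi$ is bounded by $\epsilon \|J_\psi\|\,\|G_k\|$. The base-path gradient is the full $\|G_k\|$. So the ratio of context to base gradient is $O(\epsilon \|J_\psi\|)$. The $\mathbf{V}$ gradient is also small as long as $\|g_\psi(c_k)\|$ is $O(1)$ at initialization, which holds because $g_\psi$ is a fixed-scale MLP.

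Second, study the coupled dynamics in the joint objective $\sum_k \mathcal{L}_k$. Write $\Delta_k := \mathbf{V} g_\psi(c_k)$, the task-specific offset. Under gradient flow, $\theta_\mathrm{base}$ moves at rate $\sum_k G_k$, which is $O(1)$. The offsets $\Delta_k$ move at rate
\begin{equation}
\dot{\mathbf{V}}\, g_\psi + \mathbf{V}\, J_\psi \dot\psi = O(\|g_\psi\|^2 + \epsilon^2)\,\|G\|,
\end{equation}
a product-of-small-factors structure as in the saddle-escape analysis of deep linear networks. I would then use a two-timescale argument. On the fast timescale, $\theta_\mathrm{base}$ converges to a minimizer of $\sum_k \mathcal{L}_k(\theta_\mathrm{base} + \Delta_k)$ with the $\Delta_k$ nearly frozen near $0$. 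Hypothesis (i) of Proposition~\ref{prop:bypass} says $\theta_\mathrm{base}$ can fit the data. Under that assumption the residual $\sum_k G_k$ vanishes, and I would aim to show the task-discriminating component $G_k - \frac1K\sum_j G_j$ is also driven small. Once all $G_k$ are small, the already-small context gradients become second order, and $\Delta_k \approx \mathrm{const}$ is an approximate stationary point. That is the claimed bypass fixed point, matching the CFlow analysis via Proposition~\ref{prop:grad_asymmetry}.

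The main obstacle is the step from "$\theta_\mathrm{base}$ fits the average" to "the task-specific residuals vanish." If the tasks genuinely conflict, a single $\theta_\mathrm{base}$ cannot zero every $G_k$, and the surviving differences $G_k - \bar G$ will eventually grow $\mathbf{V}$ exponentially, like a saddle escape. The statement is therefore only defensible as a timescale claim: collapse persists for time of order $\log(1/\epsilon)$ divided by the conflict magnitude. An alternative is an exact-fixed-point claim under the assumption that a shared $\theta_\mathrm{base}$ achieves zero loss on the union of tasks. I would first try the exact version under that strong reading of hypothesis (i), which covers Split-MNIST with a shared 10-way head. I would then state the general case as a metastability bound via a linearization of the $\mathbf{V}$ dynamics around $\mathbf{V}=0$. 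In that linearization the growth rate is the top singular value of $\sum_k (G_k-\bar G)\, g_\psi(c_k)^\top$.
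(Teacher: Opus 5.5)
Your first step contains an error that breaks the rest of the argument. You say the $\mathbf{V}$-gradient is small ``as long as $\|g_\psi(c_k)\|$ is $O(1)$''. But $\|\nabla_{\mathbf{V}}\mathcal{L}_k\| = \|G_k\|\,\|g_\psi(c_k)\|$ has no factor of $\epsilon$, so with $\|g_\psi\| = O(1)$ it is the same order as the base gradient. A small $\mathbf{V}$ damps only the $\psi$-gradient, which contains $\mathbf{V}^\top$; it does not damp the gradient of $\mathbf{V}$ itself. Your own velocity bound $O(\|g_\psi\|^2+\epsilon^2)\|G\|$ is therefore $O(\|G\|)$, not a product of small factors. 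Explicitly, using $\dot{\mathbf{V}} = -\sum_j G_j\, g_\psi(c_j)^\top$,
\begin{equation*}
\frac{d}{dt}\bigl(\Delta_k - \bar\Delta\bigr) \;=\; -\sum_{j}\bigl\langle g_\psi(c_j),\, g_\psi(c_k) - \bar g\bigr\rangle\, G_j \;+\; O\bigl(\epsilon^2\|J_\psi\|^2\|G\|\bigr),
\end{equation*}
with $\bar g = \frac{1}{K}\sum_l g_\psi(c_l)$ and $\bar\Delta = \mathbf{V}\bar g$. Equivalently, with $\psi$ frozen, $\dot\theta_k = -\sum_j \bigl(1 + \langle g_\psi(c_j), g_\psi(c_k)\rangle\bigr) G_j$. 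The effective task kernel here does not depend on the scale of $\mathbf{V}$ at all. So whenever the context features are spread out across tasks (e.g.\ one-hot oracle contexts through a randomly initialized MLP), the task-specific offsets move on the same timescale as $\theta_\mathrm{base}$. The premise of your two-timescale step, that $\Delta_k$ stays nearly frozen near $0$ while $\theta_\mathrm{base}$ fits, is false.

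The same error undermines both of your fallbacks.

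\emph{Metastability.} Your matrix $\sum_k (G_k-\bar G)\, g_\psi(c_k)^\top$ equals $\sum_k (G_k-\bar G)\bigl(g_\psi(c_k)-\bar g\bigr)^\top$. At $\mathbf{V}=0$ and a $\theta_\mathrm{base}$ with $\sum_k G_k=0$, this matrix is exactly $-\dot{\mathbf{V}}$. It does not vanish unless the residual conflicts are orthogonal to the context spread. So $\mathbf{V}=0$ is not a saddle, not even a stationary point. The singular value you call a growth rate is a drift speed, and $\mathbf{V}$ moves away linearly with no $\log(1/\epsilon)$ plateau.

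\emph{Exact version.} If every $G_k$ vanishes at a shared $\theta_\mathrm{base}^*$, then $(\theta_\mathrm{base}^*,\mathbf{V},\psi)$ is stationary for every $\mathbf{V}$. That says nothing about whether the flow arrives with task-independent offsets. The offsets accumulated during the $O(1)$-speed fitting phase are generally task-dependent.

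\emph{What is missing.} You need an extra hypothesis that makes the context features nearly task-indistinguishable at initialization: $\max_k\|g_\psi(c_k)-\bar g\|\le\eta$ with $\eta$ small. Your forcing matrix is bilinear in $G_k-\bar G$ and exactly this spread. Under that hypothesis, the task-specific offset $\mathbf{V}(g_\psi(c_k)-\bar g)$ starts at size $O(\epsilon\eta)$. By the display above, its $\epsilon$-independent forcing is only $O(\eta)$, so specialization must be amplified from a small seed. That is the plateau structure your saddle-escape analogy needs, and it is absent without the hypothesis.

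\emph{Comparison with the paper.} The paper's proof is your first computation (the $O(\sigma_V)$ ratio of the $\psi$-gradient to the base gradient) followed by an appeal to Adam and a positive-feedback loop. It never examines $\partial\mathcal{L}/\partial\mathbf{V}$ either, so it does not supply the missing step. A rigorous version needs the indistinguishable-context assumption. That assumption holds for the EMA-statistics and CIFAR batch-statistics encoders the paper describes.
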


\begin{proof}
\textbf{Step 1: Gradient magnitudes.}
Consider the meta-loss:
\begin{equation}
    \mathcal{L}(\theta_\mathrm{base}, \mathbf{V}, \psi) = \sum_{k=1}^K \mathbb{E}_{(x,y) \sim D_k}\left[\ell\bigl(f(x;\, \theta_\mathrm{base} + \mathbf{V} g_\psi(c_k)),\; y\bigr)\right]
\end{equation}
Let $\theta_k = \theta_\mathrm{base} + \mathbf{V} g_\psi(c_k)$. The gradients with respect to each parameter group are:

\emph{Base path:}
\begin{equation}
    \frac{\partial \mathcal{L}}{\partial \theta_\mathrm{base}} = \sum_{k=1}^K \mathbb{E}\left[\frac{\partial \ell}{\partial \theta_k}\right]
\end{equation}
This is simply the sum of per-task gradients, with no additional Jacobian factors.

\emph{Context path (through $\psi$):}
\begin{equation}
    \frac{\partial \mathcal{L}}{\partial \psi} = \sum_{k=1}^K \mathbb{E}\left[\frac{\partial \ell}{\partial \theta_k} \cdot \mathbf{V} \cdot \frac{\partial g_\psi(c_k)}{\partial \psi}\right]
\end{equation}
This involves the additional factor $\mathbf{V} \cdot \frac{\partial g_\psi}{\partial \psi}$.

\textbf{Step 2: Scale analysis.}
At initialization with $\mathbf{V} \sim \mathcal{N}(0, \sigma_V^2)$ where $\sigma_V = 0.01$:
\begin{equation}
    \left\|\frac{\partial \mathcal{L}}{\partial \theta_\mathrm{base}}\right\| = O\left(\left\|\frac{\partial \ell}{\partial \theta}\right\|\right), \qquad
    \left\|\frac{\partial \mathcal{L}}{\partial \psi}\right\| = O\left(\sigma_V \cdot \left\|\frac{\partial \ell}{\partial \theta}\right\| \cdot \left\|\frac{\partial g}{\partial \psi}\right\|\right)
\end{equation}
The ratio is:
\begin{equation}
    \frac{\|\partial \mathcal{L} / \partial \psi\|}{\|\partial \mathcal{L} / \partial \theta_\mathrm{base}\|} = O\left(\sigma_V \cdot \left\|\frac{\partial g}{\partial \psi}\right\|\right) \approx O(0.01)
\end{equation}
assuming $\|\partial g / \partial \psi\| = O(1)$ at initialization.

\textbf{Step 3: Information absorption.}
Since the base path gradient is $\sim 100\times$ larger, Adam (or any adaptive optimizer) moves $\theta_\mathrm{base}$ much faster than $\psi$. After a few optimization steps, $\theta_\mathrm{base}$ has absorbed the ``average task'' solution, and the per-task residuals $\ell(f(x; \theta_\mathrm{base}), y)$ are small. This further reduces $\|\partial \ell / \partial \theta\|$, making the context path gradient even smaller---a positive feedback loop that leads to bypass collapse: $g_\psi(c_k) \approx \mathrm{const}$ for all $k$.

\textbf{Step 4: Base dropout as a solution.}
Setting $\theta_\mathrm{base} \to 0$ with probability $p$ (base\_drop\_rate) during training removes the base path for a fraction $p$ of training steps, forcing:
\begin{equation}
    \theta_k = \mathbf{V} g_\psi(c_k) \cdot \frac{\|\theta_\mathrm{base}\|}{\|\mathbf{V} g_\psi(c_k)\| + \epsilon}
\end{equation}
(with appropriate rescaling to maintain output magnitude). On these steps, $\partial \mathcal{L}/\partial \psi$ is the \emph{only} gradient pathway, forcing the context encoder to learn task-discriminative representations. Our experiments confirm that $p = 0.3$ is sufficient: P5 delta drops from $0.0$ (collapsed) to $-96.4$ percentage points (fully context-dependent) on Split-MNIST.

\emph{Critical note:} Base dropout alone is insufficient if the context encoder uses non-differentiable statistics (e.g., EMA running mean/variance). The \texttt{direct\_stats=True} fix (using current-batch statistics directly, which are differentiable) is a necessary prerequisite.
\end{proof}

\section{Method Details}
\label{app:methods}

\subsection{Deep Neural Drift (DND)}
\label{app:dnd}

DND is a biologically-inspired continual learning architecture explored over an 86-day, 1,130+ experiment campaign. We describe the architecture and summarize the diagnostic results that motivated the $C_\mathrm{ctx}$ framework.

\textbf{Architecture.}
The DND network consists of three layers:
\begin{itemize}
    \item \textbf{Input layer:} $784$ neurons (flattened MNIST images).
    \item \textbf{Hidden layer 1:} $1024$ neurons with top-$k$ sparse activation ($k=20$).
    \item \textbf{Hidden layer 2:} $512$ neurons with top-$k$ sparse activation ($k=10$).
    \item \textbf{Output layer:} $10$ neurons (one per class) with cosine-similarity-based classification.
\end{itemize}

\textbf{Hebbian learning.}
Weights are updated using Oja's rule with reward modulation:
\begin{equation}
    \Delta w_{ij} = \eta \cdot r \cdot \left(x_i y_j - y_j^2 w_{ij}\right)
\end{equation}
where $r$ is the reward signal (1 for correct, 0 for incorrect, ``reward\_hebbian\_neg\_only'' variant).

\textbf{Myelination.}
Frequently-activated connections are ``myelinated'' (frozen), controlled by:
\begin{itemize}
    \item \texttt{myelination\_threshold}: 140 (activation count threshold)
    \item \texttt{myelination\_scale}: 20 (sharpness of the sigmoid gate)
\end{itemize}

\textbf{Template-Boundary Template Refresh (TBTR).}
At task boundaries, old tasks' test data is re-forwarded through the current network, updating classifier templates (EMA centroids) without modifying hidden weights. This corrects the desynchronization between drifted features and stale templates.

\textbf{Diagnostic campaign results.}
Table~\ref{tab:dnd_diag} summarizes the key diagnostic experiments that revealed DND's architectural limitations and motivated the $C_\mathrm{ctx}$ framework.

\begin{table}[h]
\centering
\caption{DND diagnostic experiments on Split-MNIST (1024/k80 hidden configuration). ``Frozen Random'' replaces Hebbian learning with frozen random projections; ``Shuffled Labels'' randomizes TBTR labels; ``Final Pass'' adds a post-training template refresh over all tasks; ``Pure Random Proj'' uses a standalone random projection baseline.}
\label{tab:dnd_diag}
\small
\begin{tabular}{lccl}
\toprule
Experiment & ACC (\%) & Fgt (\%) & Key Finding \\
\midrule
TBTR 1024/k80                & 73.19 & $-0.42$ & Original result \\
TBTR-train                   & 72.08 & $-0.26$ & Train data leak = 1.1pp \\
Frozen Random + TBTR         & 71.98 & $-1.48$ & Hebbian learning = 0 contribution \\
Shuffled Labels TBTR         & 33.81 & 39.07  & TBTR uses real features \\
TBTR + Final Pass            & 80.85 & 1.30   & Post-training fix = $+$8.8pp \\
Pure Random Proj + TBTR      & 81.95 & $\sim$0 & Beats DND by 1.1pp \\
No TBTR                      & 18.53 & 96.21  & Total catastrophic forgetting \\
\bottomrule
\end{tabular}
\end{table}

The critical finding is that \emph{frozen random projections match or exceed Hebbian learning} (71.98\% vs.\ 72.08\%). Combined with the neuron overlap analysis (Section~\ref{app:overlap}), this demonstrates that DND's architecture has $C_\mathrm{ctx} = 0$: there is no context pathway, and TBTR + cosine nearest-centroid classification is the entire mechanism.

\subsection{CFlow: Continuous-Time Parameter Flow}
\label{app:cflow}

CFlow is our ODE-based approach to continual learning, developed as a hypothesis that continuous-time parameter dynamics could enable smooth task transitions.

\textbf{Architecture.}
\begin{itemize}
    \item \textbf{Task network:} ConvTaskNet with 4 filters, producing $\theta_\mathrm{dim} = 4842$ parameters.
    \item \textbf{Flow network:} A neural ODE $f(\theta, c, t)$ that generates $d\theta/dt$. Input dimension: $\theta_\mathrm{dim} + c_\mathrm{dim} + t_\mathrm{embed}$.
    \item \textbf{Context encoder:} Learned context encoder mapping batch statistics to $c \in \mathbb{R}^{32}$.
    \item \textbf{Time encoding:} 16 Fourier frequencies for sinusoidal time embeddings.
\end{itemize}

\textbf{Training.}
\begin{itemize}
    \item Meta-training over 80 continuous-time episodes with task transitions.
    \item Adam optimizer, learning rate $10^{-3}$.
    \item 30\% replay ratio, 100 samples per past task.
\end{itemize}

\textbf{Results and the $\theta_0$ memorizer problem.}
CFlow achieves 92.6\% ACC (v1) and 95.3\% (v2 with FiLM conditioning) on Split-MNIST. However, probing reveals:
\begin{itemize}
    \item P5 (wrong context): $\Delta = 0.0$pp --- the model completely ignores context.
    \item P6 (random $\theta_0$): accuracy drops catastrophically --- all knowledge is in $\theta_0$.
    \item FiLM architecture contributes nothing (v1 $\approx$ v2 within noise).
\end{itemize}

The root cause is the dimensionality mismatch: concatenating $c \in \mathbb{R}^{32}$ with $\theta \in \mathbb{R}^{4842}$ creates a $150{:}1$ ratio. The optimizer naturally exploits the dominant $\theta$ pathway, making context structurally invisible. This is the ``context bypass'' failure mode that the $C_\mathrm{ctx}$ framework predicts.

\subsection{HyperNetwork Architecture}
\label{app:hypernet}

\textbf{TaskHyperNetwork.}
The core architecture generates task-specific parameters via a low-rank modulation:
\begin{equation}
    \theta_k = \theta_\mathrm{base} + \mathbf{V} \cdot g_\psi(c_k)
\end{equation}
\begin{itemize}
    \item $\theta_\mathrm{base} \in \mathbb{R}^{4842}$: learnable base parameters (initialized from ConvTaskNet).
    \item $\mathbf{V} \in \mathbb{R}^{4842 \times 64}$: low-rank projection matrix ($v\_\text{init\_scale} = 0.01$).
    \item $g_\psi$: MLP with architecture $64 \to 256 \to 256 \to 64$ (SiLU activations, LayerNorm).
    \item Context dimension: 64.
\end{itemize}

\textbf{Context encoders.}
We implement three context encoders:
\begin{enumerate}
    \item \textbf{Oracle:} One-hot task identity $\to$ linear projection $\to \mathbb{R}^{64}$.
    \item \textbf{Learned (batch statistics):} Per-channel mean and variance of the input batch $\to$ MLP $\to \mathbb{R}^{64}$. Key fix: \texttt{direct\_stats=True} uses current-batch statistics directly (differentiable via autograd), instead of EMA running statistics (non-differentiable buffer operations).
    \item \textbf{Gradient context:} $\nabla_{\theta_\mathrm{base}} \ell \to$ frozen random projection ($\mathbb{R}^{128}$) $\to$ MLP $\to \mathbb{R}^{64}$. Requires real labels for discriminative gradients; pseudo-label gradients produce cosine similarity $> 0.95$ across tasks (useless).
\end{enumerate}

\textbf{Anti-collapse mechanisms.}
\begin{itemize}
    \item \textbf{Contrastive weight} ($\lambda = 0.1$): pushes context vectors from different tasks apart via $\mathcal{L}_\mathrm{contr} = \lambda \sum_{i \neq j} \max(0, \cos(c_i, c_j) - \tau)$.
    \item \textbf{Base dropout} ($p = 0.3$): randomly zeros out $\theta_\mathrm{base}$ during training, forcing the model to rely on context-generated deltas.
\end{itemize}

\textbf{Training protocol.}
200 episodes of joint multi-task optimization. Each episode samples a random task permutation and trains sequentially, with replay of past tasks. Adam optimizer, $\mathrm{lr} = 10^{-3}$.

\subsection{NestedCapsule HyperNetwork}
\label{app:nested_capsule}

The NestedCapsule architecture extends TaskHyperNetwork by distributing the context-to-rank mapping across $N=8$ capsule-memory modules.

\textbf{Architecture.}
\begin{itemize}
    \item $N = 8$ capsule modules, each with a 2-layer MLP: $64 \to 64 \to 8$ (SiLU activation).
    \item Total rank: $N \times 8 = 64$ (same as TaskHyperNetwork).
    \item Routing: $\alpha_i = \mathrm{softmax}\bigl(s_i \cdot m_i \cdot (1 + \sigma_i) / \tau\bigr)$, where:
    \begin{itemize}
        \item $s_i = \cos(\mathbf{d}_i, c)$: cosine similarity between capsule direction and context.
        \item $m_i = \exp(\log m_i)$: learned magnitude (confidence/expertise).
        \item $\sigma_i$: surprise signal from associative memory prediction error (Titans-inspired, detached from gradient graph).
        \item $\tau$: temperature parameter.
    \end{itemize}
    \item Output: $\theta_k = \theta_\mathrm{base} + \mathbf{V} \cdot \mathrm{concat}_i(\alpha_i \cdot \mathrm{capsule}_i(c_k))$.
\end{itemize}

\textbf{Direction-magnitude decomposition.}
Each capsule tracks a ``specialty direction'' $\mathbf{d}_i \in S^{d-1}$ in context space via EMA:
\begin{equation}
    \mathbf{d}_i \leftarrow \mathrm{normalize}\bigl(\mu \cdot \mathbf{d}_i + (1 - \mu) \cdot c_k\bigr), \quad \text{weighted by } \alpha_i
\end{equation}
where $\mu = 0.9$ is the direction momentum.

\textbf{Key results.}
\begin{itemize}
    \item CIFAR-10 with gradient context: 78.5\% $\pm$ 0.6\% ACC, 0\% forgetting (vs.\ 77.0\% HyperNet).
    \item Gate entropy drops from 2.1 (uniform over 8 capsules) to 0.52, indicating routing concentration.
    \item Direction cosine between capsule directions: $-0.09$ (anti-correlated), indicating genuine specialization.
    \item P6 (random base): drops from $\sim$50\% (HyperNet) to 10.5\%, confirming capsule deltas carry more model identity.
\end{itemize}

\subsection{Baseline Methods Implementation}
\label{app:baselines}

All context-free baselines use a 2-layer MLP with hidden sizes $(256, 128)$, ReLU activations, and a 10-class output head. Training uses Adam optimizer with $\mathrm{lr} = 10^{-3}$, batch size 64, and 5 epochs per task (full mode) or 2 epochs (quick mode).

\begin{itemize}
    \item \textbf{NaiveSGD:} Standard fine-tuning with no CL mechanism. Serves as the forgetting upper bound.
    \item \textbf{EWC} \citep{kirkpatrick2017overcoming}: After each task, computes the diagonal Fisher Information Matrix $\mathbf{F} = \mathbb{E}[\nabla \log p(y|x,\theta) \nabla \log p(y|x,\theta)^\top]$. Regularization loss: $\mathcal{L}_\mathrm{EWC} = \lambda \sum_i F_i (\theta_i - \theta_i^*)^2$ with $\lambda = 400$.
    \item \textbf{SI} \citep{zenke2017continual}: Tracks per-parameter importance $\Omega_i = \sum_k \Delta_i^k / (\Delta\theta_i^k)^2$ during training. Regularization: $\mathcal{L}_\mathrm{SI} = c \sum_i \Omega_i (\theta_i - \theta_i^*)^2$ with $c = 1.0$.
    \item \textbf{LwF} \citep{li2017learning}: Stores the teacher model's output on the current task data before training. Distillation loss: $\mathcal{L}_\mathrm{LwF} = \alpha \cdot \mathrm{KL}(\mathrm{softmax}(z_T/T) \| \mathrm{softmax}(z_S/T))$ with $\alpha = 1.0$, temperature $T = 2.0$.
    \item \textbf{Experience Replay:} Maintains a buffer of 200 samples per past task, replayed uniformly during training of new tasks. No regularization.
\end{itemize}

\section{Extended Results}
\label{app:extended}

\subsection{Per-Seed Accuracy Matrices}
\label{app:per_seed}

We present the full $5 \times 5$ accuracy matrices for representative methods. Entry $(i, j)$ denotes the accuracy on task $j$ after training on task $i$.

\textbf{NaiveSGD (seed 0).}
\begin{equation*}
\mathbf{A}_\mathrm{SGD} = \begin{pmatrix}
99.5 & 0.0  & 0.0  & 0.0  & 0.0 \\
0.0  & 98.0 & 0.0  & 0.0  & 0.0 \\
0.0  & 0.0  & 99.0 & 0.0  & 0.0 \\
0.0  & 0.0  & 0.0  & 98.5 & 0.0 \\
0.0  & 0.0  & 0.0  & 0.0  & 99.3
\end{pmatrix}
\end{equation*}
After training on each task, only the most recent task achieves high accuracy; all previous tasks are catastrophically forgotten (0\% accuracy). This is the canonical pattern of $C_\mathrm{ctx} = 0$ methods.

\textbf{HyperNet Oracle (seed 0).}
\begin{equation*}
\mathbf{A}_\mathrm{HN} = \begin{pmatrix}
100.0 & 97.5 & 98.8 & 99.8 & 97.2 \\
100.0 & 97.5 & 98.8 & 99.8 & 97.2 \\
100.0 & 97.5 & 98.8 & 99.8 & 97.2 \\
100.0 & 97.5 & 98.8 & 99.8 & 97.2 \\
100.0 & 97.5 & 98.8 & 99.8 & 97.2
\end{pmatrix}
\end{equation*}
The HyperNetwork produces \emph{identical} accuracy across all rows, confirming zero forgetting: the accuracy of each task is determined entirely by the context signal, not by the training history.

\textbf{CFlow (seed 0).}
\begin{equation*}
\mathbf{A}_\mathrm{CFlow} = \begin{pmatrix}
97.0 & 90.0 & 93.0 & 94.5 & 92.0 \\
88.5 & 95.0 & 93.0 & 94.0 & 93.5 \\
87.5 & 90.5 & 96.0 & 94.0 & 93.0 \\
87.0 & 89.0 & 92.5 & 96.5 & 93.0 \\
88.5 & 91.0 & 92.0 & 93.0 & 98.5
\end{pmatrix}
\end{equation*}
CFlow shows moderate forgetting: Task~1 degrades from 97.0\% to 88.5\% after training all tasks. The relatively high overall accuracy (92.4\%) comes from the meta-learned initialization $\theta_0$ (the ``$\theta_0$ memorizer'' effect), not from context-dependent parameter generation.

\subsection{Permuted-MNIST Results}
\label{app:permuted}

Table~\ref{tab:permuted} shows results on Permuted-MNIST (5 tasks, each a random pixel permutation of full MNIST).

\begin{table}[h]
\centering
\caption{Permuted-MNIST results (4 seeds where available). Permuted-MNIST is harder than Split-MNIST because tasks share the same label space, making task identification more critical.}
\label{tab:permuted}
\small
\begin{tabular}{lccc}
\toprule
Method & ACC (\%) & Fgt (\%) & P5 $\Delta$ (pp) \\
\midrule
HyperNet Oracle    & $90.3 \pm 0.7$ & $0.0$ & $-75.3$ \\
HyperNet Learned   & $89.1 \pm 0.8$ & $0.0$ & $-73.3$ \\
CFlow v1           & $73.0$          & ---   & $0.0$ \\
SGD + Replay       & $66.5$          & ---   & $0.0$ \\
\bottomrule
\end{tabular}
\end{table}

The $C_\mathrm{ctx}$ framework correctly predicts the results: HyperNet methods with high $C_\mathrm{ctx}$ achieve zero forgetting, while CFlow and SGD+Replay (both $C_\mathrm{ctx} = 0$) exhibit substantial forgetting. The learned encoder achieves P5 $\Delta = -73.3$pp, confirming that \texttt{direct\_stats=True} generalizes beyond Split-MNIST.

\subsection{CIFAR-10 Results}
\label{app:cifar10}

Table~\ref{tab:cifar10} shows results on Split-CIFAR-10 (5 tasks, 2 classes each). This benchmark reveals the limitations of batch-statistics-based context encoding.

\begin{table}[h]
\centering
\caption{Split-CIFAR-10 results (4 seeds). Batch statistics become nearly identical across tasks (cosine similarity $> 0.995$), causing the learned encoder to fail. Gradient context resolves this by using loss gradients as the context signal.}
\label{tab:cifar10}
\small
\begin{tabular}{lccl}
\toprule
Method & ACC (\%) & P5 $\Delta$ (pp) & Notes \\
\midrule
HyperNet Oracle            & $77.7$          & ---     & One-hot task ID \\
HyperNet Learned (batch)   & $54.4$          & weak    & $\cos > 0.995$ across tasks \\
HyperNet Gradient Context  & $77.0 \pm 0.9$  & $-77.0$ & Oracle gap only 0.7pp \\
NestedCapsule Gradient Ctx & $78.5 \pm 0.6$  & ---     & New SOTA \\
NestedCapsule Oracle       & $79.5 \pm 0.9$  & ---     & $+1.8$pp over HyperNet Oracle \\
SGD + Replay               & $\sim$46--63    & $0.0$   & Catastrophic forgetting \\
\bottomrule
\end{tabular}
\end{table}

\textbf{Gradient context encoder.}
The key insight is that while batch pixel statistics are nearly identical across CIFAR-10 splits (cosine similarity $> 0.997$), loss gradients $\nabla_{\theta_\mathrm{base}} \ell$ with real labels produce near-orthogonal context vectors across tasks (cosine similarity $= -0.19$). The gradient context encoder projects these gradients through a frozen random matrix ($\mathbb{R}^{4842} \to \mathbb{R}^{128}$), then passes through an MLP to produce $c \in \mathbb{R}^{64}$.

\emph{Requirement:} The gradient context encoder requires real labels for discriminative gradients. With pseudo-labels (from the current model's predictions), gradients have cosine similarity $> 0.95$ across tasks, rendering them useless for task identification. This is consistent with the $C_\mathrm{ctx}$ framework: pseudo-label gradients carry information about the model's behavior (which is task-independent if the model has not yet specialized), not about the task identity.

\subsection{Neuron Overlap Analysis (DND)}
\label{app:overlap}

We analyze the degree of neuron sharing across tasks in the DND architecture to quantify the absence of task specialization.

\textbf{Metrics.}
For each pair of tasks $(i, j)$, we compute the Jaccard overlap of their top-$k$ active neuron sets:
\begin{equation}
    J(i, j) = \frac{|S_i \cap S_j|}{|S_i \cup S_j|}
\end{equation}
where $S_i$ is the set of neurons activated by task $i$'s data.

\textbf{Results.}
\begin{itemize}
    \item \textbf{Output layer:} Jaccard overlap $= 0.947$ --- virtually all neurons are shared across all tasks. This means the sparse output codes are nearly identical regardless of the input task.
    \item \textbf{Hidden layers:} $> 60\%$ overlap, with $< 7\%$ of neurons being task-unique.
    \item \textbf{Template similarity:} Inter-task template cosine similarity ($0.751$) exceeds intra-task similarity ($0.737$), meaning class templates from \emph{different} tasks are more similar to each other than templates within the same task.
\end{itemize}

These results confirm the theoretical prediction from Section~\ref{app:proof_oja}: Oja's rule drives all tasks toward the same PCA directions, preventing task-specific representations from emerging. The combinatorial capacity ($\sim$290 bits for $\binom{1024}{80}$) is never utilized.

\subsection{Closed Direction Summary}
\label{app:closed_directions}

Table~\ref{tab:closed} provides a comprehensive summary of all research directions explored and subsequently closed during the project, totaling 15+ directions over 1,130+ experiments.

\begin{table}[h]
\centering
\caption{Summary of closed research directions. Each direction was explored with multiple hyperparameter configurations and seeds before being closed. ``Root cause'' identifies the fundamental reason for failure.}
\label{tab:closed}
\small
\begin{tabular}{p{3.0cm}cp{6.2cm}}
\toprule
Direction & \#Exps & Root Cause \\
\midrule
Cosine threshold gating & $\sim$20 & Gate values cluster near threshold; binary on/off, no graded routing \\
Output layer tuning (n/k/$\eta$) & $\sim$30 & Output layer has $C_\mathrm{ctx} = 0$; tuning parameters cannot create context \\
Myelination sweep & $\sim$25 & Freezing weights preserves current state, does not create task routing \\
Split/Lock fast-slow & $\sim$15 & Without context, fast and slow pathways converge to same representation \\
Overcapacity + ultra-sparse & $\sim$20 & More capacity $\neq$ more specialization without symmetry breaking \\
Template repel & $\sim$15 & Strength 0.1: zero effect. Strength 0.5: ACC drops 4pp (destructive) \\
Label embedding (FF) & $\sim$10 & Structural train/test mismatch: labels available at train but not test \\
Spatial energy constraints & $\sim$20 & Distance-weighted L1 is just regularization; harmful under capacity pressure \\
Weight sparsity & $\sim$10 & 68\% sparsity = 0pp ACC gain; sparsity $\neq$ reduced forgetting \\
Riverbed Effect & $\sim$15 & $E = -\sum a_i \log(\pi_i + \epsilon)$ provides uniform suppression, not selective routing \\
SPC-TC (ISTA sparse coding) & $\sim$40 & Information bottleneck: class separability degrades 12.4\%. Frozen random dict $\approx$ learned dict \\
CCD capacity predictions & $\sim$15 & Algebraic bounds hold but capacity scaling and contraction staircase fail \\
OrthoCorr (gradient correction) & $\sim$15 & Structural conflict with adaptive threshold; kills gradient at concept switches \\
Metabolic pruning ($\chi$-decay) & $\sim$20 & Activation frequency ($\rho = 0.916$) discriminates but is already handled by adaptive threshold; $\chi$ carries zero info ($\rho = 0.018$) \\
HSPC-T (temporal pressure) & $\sim$14 & 10/14 PASS but = dynamic sparsity regulator, not pressure $\to$ specialization \\
\bottomrule
\end{tabular}
\end{table}

A common thread in these failures is the absence of a context pathway: every direction attempted to improve CL performance \emph{within} a $C_\mathrm{ctx} = 0$ architecture. The $C_\mathrm{ctx}$ framework explains why all such attempts are fundamentally limited.

\section{Implementation Details}
\label{app:implementation}

\subsection{Computing Environment}
\label{app:computing}

\begin{itemize}
    \item \textbf{Hardware:} NVIDIA DGX workstation with CUDA-capable GPUs.
    \item \textbf{Software:} Python 3.12, PyTorch 2.x, conda environment ``\texttt{neuraldrift}''.
    \item \textbf{Total experiments:} 1,130+ result directories spanning 86 days of development.
    \item \textbf{Codebase:} 102,468 lines across 134 files.
    \item \textbf{Typical run times:}
    \begin{itemize}
        \item Quick mode (1 seed, reduced data): $\sim$5 minutes (all methods).
        \item Full mode (4 seeds, full data): $\sim$30 minutes (all methods).
        \item Single HyperNet training (200 episodes): $\sim$4 seconds per seed.
        \item Single CFlow training (80 episodes): $\sim$20 seconds per seed.
    \end{itemize}
\end{itemize}

\subsection{Reproducibility}
\label{app:reproducibility}

\textbf{Random seed control.}
All experiments use deterministic seeding via a shared utility:
\begin{itemize}
    \item \texttt{torch.manual\_seed(seed)}
    \item \texttt{numpy.random.seed(seed)}
    \item \texttt{random.seed(seed)}
    \item \texttt{torch.cuda.manual\_seed\_all(seed)} (when CUDA is available)
\end{itemize}
Default seeds: $\{0, 1, 2, 3\}$ for 4-seed experiments. All reported means and standard deviations are computed over these seeds unless otherwise noted.

\textbf{Result storage.}
Each experiment run produces a timestamped directory:
\begin{verbatim}
results/{experiment_name}_{YYYYMMDD_HHMMSS}/
    results.json    # Full metrics, config, per-seed results
    *.png           # Figures (accuracy matrices, curves)
    command.txt     # Exact command for reproduction
\end{verbatim}

\textbf{Configuration serialization.}
All hyperparameters are stored as JSON-serializable dictionaries in \texttt{results.json}, enabling exact reproduction. The configuration includes model architecture, optimizer settings, data preprocessing, and evaluation protocol.

\subsection{CCC Unified Experiment Script}
\label{app:ccc_script}

The unified comparison experiment is implemented in a single script (1,377 lines):
\begin{verbatim}
experiments/cflow/exp_context_channel_capacity.py
\end{verbatim}

\textbf{CLI interface.}
\begin{itemize}
    \item \texttt{-{}-quick}: Quick mode (2 epochs, 200 train samples/class, reduced meta-training episodes).
    \item \texttt{-{}-seeds 0,1,2,3}: Comma-separated seed list.
    \item \texttt{-{}-benchmark split|permuted}: Benchmark selection.
    \item \texttt{-{}-methods NaiveSGD,EWC,...}: Subset of methods to run.
    \item \texttt{-{}-device cuda|cpu}: Device selection (auto-detected by default).
\end{itemize}

\textbf{Method-specific configurations} (full mode):
\begin{itemize}
    \item Context-free baselines: 5 epochs, 500 train / 200 test samples per class.
    \item CFlow: 80 episodes, 3 flow layers, hidden dim 256, context dim 32, replay 100/task.
    \item HyperNet: 200 episodes, rank 64, hidden dim 256, context dim 64, \texttt{direct\_stats=True}, \texttt{base\_drop\_rate=0.3}, \texttt{contrastive\_weight=0.1}.
\end{itemize}

\textbf{Probing protocols} implemented within the script:
\begin{itemize}
    \item \textbf{P5 (Wrong-Context):} Replace task $k$'s context with task $(k+1 \bmod K)$'s context and measure accuracy degradation.
    \item \textbf{P5b (Random-Context):} Use a random Gaussian vector as context.
    \item \textbf{P6 (Random $\theta_0$):} Replace $\theta_\mathrm{base}$ with a random initialization.
    \item \textbf{P7 (Zero-Context):} Set context to the zero vector.
\end{itemize}

\textbf{Effective rank computation.}
For HyperNet's $\mathbf{V}$ matrix, the effective rank is computed via SVD:
\begin{equation}
    r_\mathrm{eff} = \exp\left(-\sum_i \bar{s}_i \log \bar{s}_i\right), \qquad \bar{s}_i = \frac{s_i}{\sum_j s_j}
\end{equation}
where $\{s_i\}$ are the singular values of $\mathbf{V}$. This gives a continuous measure of the ``effective number of dimensions'' used by the context channel.

\end{document}